\def\thmheadbrackets#1#2#3{%
  \thmname{#1}\thmnumber{\@ifnotempty{#1}{ }\@upn{#2}}%
  \thmnote{ {\the\thm@notefont[#3]}}}
\newtheoremstyle{brakets}
  {}
  {}
  {\itshape}
  {}
  {\bfseries}
  {.}
  { }
  {\thmheadbrackets{#1}{#2}{#3}}
\theoremstyle{brakets}
\newtheorem{lemma}{Lemma}
\newtheorem{theorem}{Theorem}
\newtheorem{proposition}[theorem]{Proposition}
\newtheorem{claim}{Claim}
\newtheorem{definition}{Definition}
\renewcommand{\bar}{\overline}
\renewcommand{\hat}{\widehat}
\renewcommand{\tilde}{\widetilde}
\DeclareMathOperator*{\argmax}{arg\,max}
\newcommand{\statesize}{|\Omega|}
\newcommand{\actionsize}{|\mathcal{A}|}
\newcommand{\bs}{\bm{s}}
\newcommand{\bpi}{\bm{\pi}}
\newcommand{\ubar}{\bar{u}}
\newcommand{\eps}{\varepsilon}
\newcommand{\Sig}{\mathcal{S}}
\newcommand{\A}{\mathcal{A}}
\DeclareMathOperator{\E}{\mathbb{E}}
\newcommand{\shortp}{\textup{\texttt{+}}}
\newcommand{\name}{\textsc{DNL}}
\def\eqref#1{equation~\ref{#1}}
\def\1{\bm{1}}
\def\eps{{\epsilon}}
\def\vb{{\bm{b}}}
\def\vh{{\bm{h}}}
\def\vo{{\bm{o}}}
\def\vr{{\bm{r}}}
\def\vx{{\bm{x}}}
\def\vy{{\bm{y}}}
\def\vz{{\bm{z}}}
\def\mM{{\bm{M}}}
\def\mR{{\bm{R}}}
\def\mW{{\bm{W}}}
\DeclareMathAlphabet{\mathsfit}{\encodingdefault}{\sfdefault}{m}{sl}
\SetMathAlphabet{\mathsfit}{bold}{\encodingdefault}{\sfdefault}{bx}{n}
\newcommand{\kibitz}[2]{\ifnum\Comments=1\textcolor{#1}{#2}\fi}
\newcommand{\kibitzAdd}[2]{\ifnum\CommentsAdd=1{\color{#1}{#2}}\fi}
\definecolor{english}{rgb}{0.0, 0.5, 0.0}
\newcommand{\squishlist}{
   \begin{list}{$\bullet$}
    { \setlength{\itemsep}{0pt}      \setlength{\parsep}{3pt}
      \setlength{\topsep}{3pt}       \setlength{\partopsep}{0pt}
      \setlength{\leftmargin}{1.5em} \setlength{\labelwidth}{1em}
      \setlength{\labelsep}{0.5em} } }
\newcommand{\squishend}{  \end{list}  }
\icmltitlerunning{Multi-Sender Persuasion}
\begin{document}

\twocolumn[
\icmltitle{
Multi-Sender Persuasion: A Computational Perspective
}



\icmlsetsymbol{equal}{*}

\begin{icmlauthorlist}
\icmlauthor{Safwan Hossain}{equal,yyy}
\icmlauthor{Tonghan Wang}{equal,yyy}
\icmlauthor{Tao Lin}{equal,yyy}
\icmlauthor{Yiling Chen}{yyy}
\icmlauthor{David C. Parkes}{yyy}
\icmlauthor{Haifeng Xu}{zzz}
\end{icmlauthorlist}

\icmlaffiliation{yyy}{Harvard University}
\icmlaffiliation{zzz}{University of Chicago}

\icmlcorrespondingauthor{Safwan Hossain, Tonghan Wang, Tao Lin}{\{shossain, twang1, tlin\}@g.harvard.edu}

\icmlkeywords{Machine Learning, ICML}

\vskip 0.3in
]



\printAffiliationsAndNotice{\icmlEqualContribution} 

\begin{abstract}
We consider the \emph{multi-sender persuasion} problem: multiple players with informational advantage signal to convince a single self-interested actor to take certain actions.
This problem generalizes the seminal \emph{Bayesian Persuasion} framework and is ubiquitous in computational economics, multi-agent learning, and multi-objective machine learning. The core solution concept here is the Nash equilibrium of senders' signaling policies. Theoretically, we prove that finding an equilibrium in general is PPAD-Hard; in fact, even computing a sender's best response is NP-Hard. Given these intrinsic difficulties, we turn to finding local Nash equilibria.  We propose a novel differentiable neural network to approximate this game's non-linear and discontinuous utilities. Complementing this with the extra-gradient algorithm, we discover local equilibria that Pareto dominates full-revelation equilibria and those found by existing neural networks. Broadly, our theoretical and empirical contributions are of interest to a large class of economic problems.

\end{abstract}

\section{Introduction}
Bayesian Persuasion (BP)~\cite{kamenica2011bayesian} has emerged as a seminal concept in economics and decision theory. At its heart, it is a principal-agent problem that models an informed sender strategically revealing some information to affect the decisions of a self-interested receiver. Both parties are assumed to be Bayesian and have distinct utilities that depend on some realized \emph{state of nature}, and the action taken by the receiver. The sender privately observes the state and can commit to selectively disclosing this information through a randomized \emph{signaling policy}. The receiver updates their posterior belief based on the realized signal and best responds with an optimal action for this belief. The sender's goal is to maximize their utility by designing a signaling policy that nudges the receiver toward decisions preferred by the sender. This information design problem has found widespread applicability in a myriad of domains including recommendation systems~\cite{mansour2015bayesian, mansour2016bayesian}, auctions and advertising \citep{wang2024gemnet,bro2012send, emek2014signaling, badanidiyuru2018targeting}, social networks \citep{candogan2020optimal, acemoglu2021model}, and reinforcement learning \citep{castiglioni2020online, wu2022sequential}. 

The standard BP model is however significantly constrained by a strong assumption: the presence of only one sender. In the applications mentioned above and indeed more broadly in settings like multi-agent learning~\cite{balduzzi2018mechanics} and machine learning with multiple objectives~\cite{pfau2016connecting,jaderberg2017decoupled}, it is natural to have multiple parties who wish to influence the receiver toward their respective, possibly conflicting goals. As a demonstrative example, consider two ride-sharing firms, Uber and Lyft, and a dual-registered driver. While the driver is unaware of real-time demand patterns, both firms have access to and can strategically signal this to the driver and influence them toward certain pick-ups. The platforms' goals however are not aligned, with each wishing to direct the driver to their respective optimal pick-ups. The driver is also self-interested and may prefer pick-ups that are on the way home. Our work aims to study this tension induced by multiple informed parties attempting to influence a self-interested receiver's decision-making, within the BP paradigm. Crucially, while the sender-receiver relation still outlines a sequential game, the interaction \emph{among the multiple senders} in our setting forms a simultaneous game, with the resulting Nash Equilibrium (NE) being of core interest. 

While this setup has been modeled in economic literature \citet{gentzkow_bayesian_2017, ravindran_competing_2022}, the multi-sender persuasion problem has not been formally studied from a computational perspective and presents distinct challenges. In standard single-sender BP, the optimal signaling policy for the sender can be computed efficiently by a linear program \cite{dughmi2016algorithmic}, which no longer holds in the multi-sender case where we need to compute a sender's best-responding signaling policy given others' policies.  We give a non-convex optimization program for the best response problem (Proposition~\ref{prop:best-response-bi-linear}) and through an involved reduction, prove that computing best response is in-fact NP-Hard in multi-sender persuasion games (Theorem~\ref{thm:best-response-NP-hard}). For the equilibrium computation, we significantly generalize a specific characterization from prior works to show that a trivial equilibrium can be found easily under certain conditions, but it might offer poor utility to the senders (Theorem~\ref{thm:full-revelation}). We then prove that finding an equilibrium in general settings is PPAD-hard (Theorem~\ref{thm:PPAD-non-fixed}). These computational hardness results are our main theoretical contribution.

The intrinsic difficulty of finding (global) equilibrium in multi-sender persuasion motivates us to propose a deep-learning approach to finding $\epsilon$-\emph{local} equilibria (no beneficial unilateral deviation in a limited range).
This spiritually straddles two bodies of work - the emergent area of \emph{differentiable economics} that builds a parameterized representation for optimal economic design~\cite{wang2024gemnet}, and the rich literature on learning in games~\cite{bowling2004convergence,balduzzi2018mechanics,song2019convergence,azizian2020tight,fiez2020implicit,bai2021sample,bichler2021learning,haghtalab2022learning,goktas2023generative}. Mirroring the obstacles encountered in theoretical analysis, the non-differentiable and indeed discontinuous nature of the utility functions (Proposition~\ref{prop:discontinious}) also pose hurdles to identifying even local equilibria. To address this, we propose a novel end-to-end differentiable network architecture that is expressive enough to model the abrupt changes in utilities. Once trained, these networks can complement algorithms like 
extra-gradient~\cite{korpelevich1976extragradient,jelassi2020extragradient} to locate $\epsilon$-local NE.
The quality of the approximated utility landscape confirms the superior expressive capacity of our networks. Further, we demonstrate that this improvement helps to discover $\epsilon$-local NE that Pareto dominates the full-revelation equilibria (Theorem~\ref{thm:full-revelation}) and the $\epsilon$-local NE found in both synthetic and real-world scenarios by existing continuous and discontinuous~\cite{wang2023deep} networks. Our novel techniques may be of independent interest for learning in general games with discontinuous and non-linear utilities. 

\subsection{Additional Related Work}
The study of Bayesian persuasion and its various iterations has been extensively explored in the literature, as evidenced by the comprehensive surveys of \citet{dughmi_2017_survey, kamenica2019bayesian, bergemann_information_2019}. Among these, the most closely aligned with our work are the investigations involving multiple senders. The model of~\citet{gentzkow_bayesian_2017} explores a scenario where senders can arbitrarily correlate their signals, whereas~\citet{li2021sequential} consider sequential senders who choose signaling policies after observing those of previous senders. These two models differ from ours wherein the senders send signals to the receiver \emph{independently} and \emph{simultaneously} conditioning on the realized state. Further, they do not provide significant computational insights. 

\citet{ravindran_competing_2022} also study Bayesian persuasion games featuring multiple independent and simultaneous senders but assume that the senders have zero-sum utilities. They show that with a sufficiently large signaling space, the only Nash equilibrium is full revelation, wherein the state of nature is fully revealed to the receiver. However, many multi-sender persuasion games do not conform to a zero-sum utility framework. Consequently, two important structural questions arise from their work: (1) whether such a full-revelation equilibrium exists under a limited signaling space, and (2) whether multi-sender persuasion games with general utility structures give rise to other types of equilibria that extend beyond full revelation. We provides affirmative answers to these queries, as delineated in Theorem~\ref{thm:full-revelation}.

There is a literature on competitive information design where multiple senders have private states; see, e.g., \citet{au_competitive_2020, ding2023competitive, du_competitive_2024} and the references therein.  Their models are different from ours where the senders share a common state.  

Bayesian persuasion is subsumed within the broader principal-agent model \cite{gan2024generalized}, a concept that addresses a multitude of economics problems, including contract design~\cite{zhu2022sample,wang2023deep} and Stackelberg games~\cite{myerson1982optimal}. In economic theory, the notion of incorporating multiple principals, analogous to senders in our context, has been proposed to model a range of important settings \citep{waterman1998principal, hu2023principal}. However, similar to the existing work on multi-sender persuasion, these contributions typically retain a conceptual focus from an economic perspective. Our work diverges by taking a computational lens. We introduce rigorous hardness guarantees for the best-response computation and equilibrium determination and propose a novel deep learning approach for identifying $\epsilon$-local equilibria that may hold wider applicability. 

\begin{figure*}
    \centering
    \includegraphics[width=\linewidth]{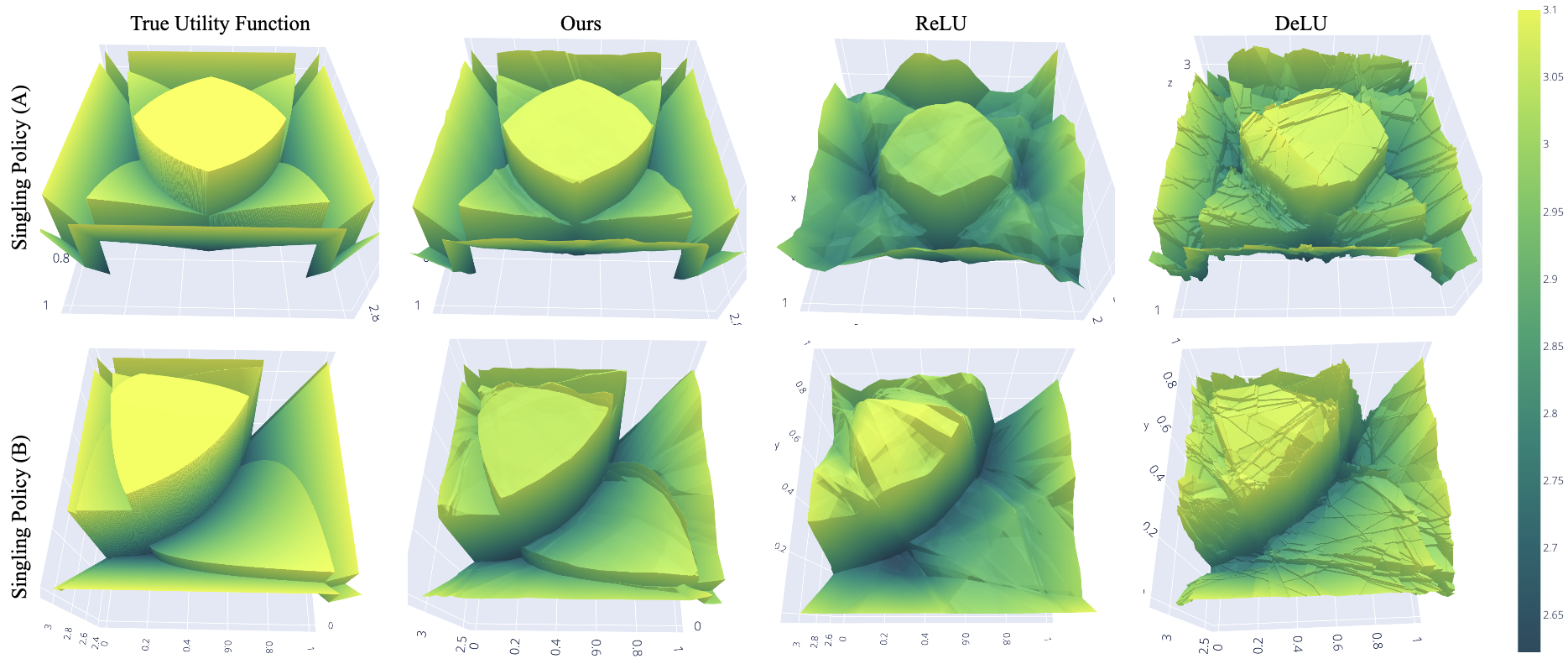}
    \vspace{-2em}
    \caption{Discontinuous utility functions in a multi-sender persuasion game with 2 senders, 2 signals, 2 actions, and 2 states. In each subplot: the x-axis represents the probability of $\mathtt{Sender 1}$ transmitting $\mathtt{Signal 1}$ at $\mathtt{State 1}$, the y-axis shows the probability of $\mathtt{Sender 2}$ emitting $\mathtt{Signal 1}$ at $\mathtt{State 1}$, and the z-axis quantifies $\mathtt{Sender 2}$'s utility. Signaling strategies of both senders at $\mathtt{State 2}$ are set to $(0.5, 0.5)$ in the top row and to $(0.2, 0.8)$ and $(0.8, 0.2)$ in the bottom row. In each column, we show the groundtruth ex-ante utility, and the approximation results achieved by our method, ReLU, and DeLU~\cite{wang2023deep} networks, respectively.}
    \vspace{-1em}
    \label{fig:viz}
\end{figure*}
\section{Model}
\paragraph{\textbf{Preliminaries}} There are $n$ senders $\{1, \dots, n\}$ and a receiver. Let $\Omega$ be a finite set of possible states, with $\omega \in \Omega$ denoting an arbitrary one. All senders and the receiver share a common prior distribution $\mu_0$ over the states $\Omega$. We use $\mu \in \Delta(\Omega)$ to denote a distribution over states.
Receiver takes some action $a \in \mathcal{A}$ whose utility depends on the realized state $\omega$, and is given by $v: \Omega \times \mathcal{A} \rightarrow \mathbb{R}$. The receiver's utility can also be represented as an $\statesize \times \actionsize$ matrix $V$, with $V[i,j]$ denoting the utility the receiver has for action $j$ at state $i$. The utility function of the $j$th sender $u_{j}: \Omega \times \mathcal{A} \rightarrow \mathbb{R}$ also depends on the realized state and the receiver's action. While the receiver only knows the prior, senders privately observe the state realization $\omega \sim \mu_0$ and can use this informational advantage to alter the receiver's belief and persuade it to take certain actions. 

\paragraph{\textbf{Persuasion}} We model the interaction between senders and the receiver using the seminal Bayesian Persuasion (BP) framework. Senders can leverage their private observation of $\omega$ by strategically signaling the receiver. Formally, letting $\Sig$ be a finite signal space, each sender $j$ has an independent \emph{signaling policy} $\pi_j(s_j|\omega)$ which specifies the probability of sending signal $s_j \in \Sig$ when the realized state is $\omega$.
From the receiver's perspective, it observes a joint signal $\bm{s} = (s_1, \dots, s_n)$ sampled from the joint conditional distribution $\bpi(\bm{s}|\omega) = \prod_{j=1}^n{\pi_j(s_j|\omega)}$.
While many works on Bayesian persuasion assume the signal space $|\Sig| \ge |\mathcal{A}|$ 
\cite{kamenica2011bayesian, dughmi2016algorithmic}, we study the multi-sender problem in full generality (allowing $|\Sig| < |\A|$), since in many settings, action space can be arbitrarily large or even continuous (common in economic literature), but signaling/communication space may be limited.
Consistent with the classical BP model, we assume that senders announce and commit to their signaling policies before observing state realizations. 
The receiver is considered Bayesian rational, and upon signal realization, it updates its belief about the state and takes a resulting optimal action according to its utility. We denote the interaction between senders and the receiver as a \emph{multi-sender persuasion game} and summarize it as follows: 
\squishlist
    \item All senders simultaneously announce their signaling policies $\bm \pi = (\pi_1, \ldots, \pi_n)$. 
    \item State $\omega \sim \mu_0$ is observed by senders but not the receiver. 
    \item Each sender $j$ simultaneously draws a signal $s_j \sim \pi_j(\cdot | \omega)$ to send to the receiver. For $\bs = (s_1, \ldots, s_n)$, $\bpi(\bs | \omega) = \prod_{j=1}^n \pi_j(s_j | \omega)$ is the joint signal probability. 
    \item After observing joint signal $\bm s$, the receiver forms posterior belief $\mu_{\bm s}$ about the state ($\mu_{\bm s}(\omega) = \frac{\mu_0(\omega) \bm \pi(\bm s| \omega)}{\bm \pi(\bm s)}$ for every $\omega \in \Omega$) and takes an optimal action 
    \begin{align}
        a^*(\mu_{\bm s}) = \argmax_{a\in \A} \E_{\omega \sim \mu_{\bs}}v(\omega, a).\nonumber
    \end{align}
    \item Each sender $j$ obtains utility $u_j(\omega, a^*(\mu_{\bm s}))$. 
\squishend

The senders attempt to use signaling to maximize their ex-ante utility, described below.

\begin{definition}\label{def:ex-ante_u}
The ex-ante utility for sender $j$ under joint signaling policy $\bpi = (\pi_j, \bpi_{-j})$ is $\ubar_j(\bpi) = \sum_{\omega \in \Omega} \mu_0(\omega) \sum_{\bm s \in \Sig^n} \bm \pi(\bm s | \omega) u_j(\omega, a^*(\mu_{\bs}))$, where $\mu_{\bs}$ is the posterior distribution induced by joint signal $\bs$ and policy $\bpi$, and $a^*(\mu_{\bs}) $ is the receiver's optimal (utility-maximizing) action at belief $\mu_{\bs}$. 
\end{definition}

The relationship between senders and the receiver forms a multi-leader-single-follower game since senders reveal their policies first and the receiver subsequently best responds to joint signal realizations generated by these policies. 
While the senders and the receiver have a sequential relationship, the senders choose their signaling policies simultaneously.
Thus, we consider Nash equilibria among the senders: 
\begin{definition}
    A Nash equilibrium (NE) for the multi-sender persuasion game is a profile of signaling policies $\bpi = (\pi_1, \ldots, \pi_n)$ such that for any sender $j$ and deviating policy $\pi_j'$, $\ubar_j(\bpi) \ge \ubar_j(\pi'_j, \bpi_{-j})$. 
\end{definition}

The equilibrium defined above is in fact a subgame perfect equilibrium of the extensive-form game among the senders and the receiver. We use the term ``Nash equilibrium'' to emphasize the simultaneity of the senders' interaction.

\section{Theoretical Results}
We now look to theoretically understand the equilibrium properties of the multi-sender persuasion game. We first consider the canonical best response problem and show that solving it, even approximately, is NP-Hard. We then extend and generalize a known equilibrium characterization that relies on revealing maximal information to the receiver. This equilibrium is generally not ideal for senders and is possible only under certain conditions. Furthermore, in the general case, 
we show that equilibrium computation is PPAD Hard. Cumulatively, our strong intractability results together suggest that developing provably efficient algorithms for finding global equilibria would be extremely challenging in our setting.

\subsection{Best Response}
We first consider the \emph{best response} problem for an sender; namely, fixing other senders' signaling schemes $\bm \pi_{-i}$, what is the optimal signaling scheme $\pi_i$ that maximizes the ex-ante utility $\ubar_i(\pi_i, \bm \pi_{-i})$ of sender $i$? The best-response problem is essential to verifying whether a given joint signaling scheme $\bm \pi = (\pi_1, \ldots, \pi_n)$ is a Nash equilibrium. Further, standard equilibrium solving techniques often rely on simulating best response dynamics.

In normal-form games, fixing others' strategies $\bm x_{-i}$, the utility $u_i(x_i, \bm x_{-i})$ of a player is linear in $x_i$, so the best response problem can be solved by a linear program efficiently. In persuasion, a sender's signaling policy changes the induced posteriors, which changes the optimal action the receiver takes since the receiver maximizes expected utility. Correspondingly, a sender's utility function $\ubar_i(\pi_i, \bm \pi_{-i})$ is piece-wise linear with discontinuities corresponding to signaling schemes wherein the mapping from signal realization to optimal receiver actions changes. This is more generally formalized in Proposition \ref{prop:discontinious} (with proof in Appendix \ref{app:discont_prop}). 
\begin{proposition}[Discontinuous Utility]\label{prop:discontinious}
    The sender's utility function $\ubar_i(\bm \pi)$ is discontinuous and piecewise non-linear in $(\pi_1, \dots, \pi_n)$. Fixing $\bm \pi_{-i}$, $\ubar_i(\pi_i, \bm \pi_{-i})$ is discontinuous and piecewise linear in $\pi_i$. 
\end{proposition}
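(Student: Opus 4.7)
The plan is to decompose the ex-ante utility $\ubar_i(\bpi) = \sum_{\bs \in \Sig^n} \sum_{\omega \in \Omega} \mu_0(\omega)\, \bpi(\bs\mid\omega)\, u_i(\omega, a^*(\mu_{\bs}))$ into a ``polynomial part'' $\mu_0(\omega)\bpi(\bs\mid\omega)$ coming from the randomization, and a ``discrete part'' $a^*(\mu_{\bs})$ coming from the receiver's argmax. Since $\bpi(\bs\mid\omega) = \prod_k \pi_k(s_k\mid\omega)$, this factor is multilinear in $\bpi$ and, when $\bpi_{-i}$ is fixed, linear in $\pi_i$; the marginal $\bpi(\bs) = \sum_\omega \mu_0(\omega)\bpi(\bs\mid\omega)$ is likewise linear in $\pi_i$, so the posterior $\mu_{\bs}$ is a ratio of two linear functions of $\pi_i$.

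Next, I would describe the piecewise structure. The receiver's action $a^*(\mu_{\bs}) = a$ iff $\sum_{\omega} \mu_0(\omega)\,\bpi(\bs\mid\omega)\,[v(\omega,a)-v(\omega,a')] \ge 0$ for every $a' \in \A$ (after clearing the positive denominator $\bpi(\bs)$), giving linear-in-$\pi_i$ (and multilinear-in-$\bpi$) inequality cells. Intersecting across all $\bs \in \Sig^n$ yields a finite polyhedral partition of the simplex of $\pi_i$ (resp.\ a partition of the joint policy space) on each cell of which the map $\bs \mapsto a^*(\mu_{\bs})$ is constant. Within such a cell, $\ubar_i(\bpi)$ reduces to a sum of the ``polynomial parts'' with fixed coefficients $u_i(\omega, a^*(\mu_{\bs}))$, giving linearity in $\pi_i$ (for fixed $\bpi_{-i}$) and multilinearity—hence non-linearity in general—in the full $\bpi$.

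For discontinuity, I would examine a boundary on which $a^*(\mu_{\bs^*})$ switches from $a$ to $a'$ for some signal $\bs^*$. The corresponding one-sided limits of $\ubar_i$ differ by
\begin{equation*}
\sum_{\omega} \mu_0(\omega)\,\bpi(\bs^*\mid\omega)\,\bigl[u_i(\omega, a) - u_i(\omega, a')\bigr],
\end{equation*}
while the boundary condition only forces the receiver-indifference identity $\sum_{\omega} \mu_0(\omega)\,\bpi(\bs^*\mid\omega)\,[v(\omega,a)-v(\omega,a')] = 0$. Since $u_i$ need not be aligned with $v$, the $u_i$-gap generally fails to vanish. To pin this down I would exhibit a minimal instance with $|\Omega|=|\A|=|\Sig|=2$, choose $v$ so a tie is crossed at an interior $\pi_i$, and pick $u_i$ so the $u_i$-gap at that boundary is strictly nonzero; this gives two distinct one-sided limits and establishes the discontinuity.

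The main obstacle I anticipate is handling the argmax tie-breaking convention cleanly, since different conventions assign the boundary to different cells. I would sidestep this by framing ``discontinuous'' as the existence of two distinct one-sided limits along a path crossing the boundary, which is independent of the tie-breaking rule. A secondary subtlety is ensuring that at least one hyperplane of the partition is actually crossed as $\pi_i$ varies; a minimal witness instance takes care of this, and the same instance delivers both the single-coordinate and joint-coordinate discontinuity claims of the proposition.
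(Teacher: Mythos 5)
Your proposal is correct and follows essentially the same route as the paper's proof: decompose $\ubar_i$ into the multilinear probability factor $\bpi(\bs\mid\omega)=\prod_k \pi_k(s_k\mid\omega)$ times the piecewise-constant term $u_i(\omega, a^*(\cdot))$, observe that the receiver's argmax is constant on cells cut out by the (cleared-denominator) indifference inequalities, and read off piecewise linearity in $\pi_i$ and piecewise multilinearity in $\bpi$. You are in fact more complete than the paper on the discontinuity claim: the paper merely asserts that the mapping $\bs\mapsto a^*$ changes across piece boundaries, whereas you explicitly compute the one-sided jump $\sum_\omega \mu_0(\omega)\bpi(\bs^*\mid\omega)[u_i(\omega,a)-u_i(\omega,a')]$, note that the receiver-indifference condition on $v$ does not force it to vanish, and propose a witness instance together with a tie-breaking-independent formulation — all of which the paper's proof leaves implicit.
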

Maximizing $u_i(\pi_i, \bm \pi_{-i})$ by enumerating all linear pieces is infeasible because, by a rough estimate, the number of linear pieces can be as large as $O\big((|\Sig|^n |\A|^2)^{|\Omega||\Sig|}\big)$. Instead of enumerating all $O\big((|\Sig|^n |\A|^2)^{|\Omega||\Sig|}\big)$ linear pieces, we design a continuous bi-linear program to solve the best response problem (with proof in Appendix \ref{app:best_resp_bi_linear}).
\begin{restatable}[Best Response Program]{proposition}{bestresponsebilinear}\label{prop:best-response-bi-linear}
Let $\Delta v(\omega, a, a')$ $\triangleq v(\omega, a) - v(\omega, a')$ for actions $a, a'$. Then given others' signaling schemes $\bm \pi_{-i}$, sender $i$'s best response can be solved by the following optimization program with $|\Omega||\Sig| + |\Sig|^n|\A|$ continous variables and $O(|\Sig|^n|\A|)$ constraints: 
\begin{align*}
    & \hspace{-0.5em} \max_{\pi_i, y} ~  \sum_{\omega \in \Omega} \sum_{\bm s \in \Sig^n} \mu_0(\omega) \bm \pi_{-i}(\bm s_{-i}|\omega) \pi_i(s_i|\omega) \sum_{a\in \A} u_i(\omega, a) y_{\bm s, a}   \\
     & \hspace{-0.2em} \mathrm{s.t.} \hspace{0.2em} \forall \omega: \sum_{s}{\pi_i(s_i|\omega)} = 1 \hspace{0.5em} \text{and} \hspace{0.5em} \forall s_i, \omega: \pi_i(s_i|\omega) \geq 0\\
    & \hspace{-0.2em} \forall \bm{s}: \sum_{a\in \A} y_{\bm s, a} = 1 \hspace{0.5em} \text{and} \hspace{0.5em} \forall \bm s, a \in \A : y_{\bm s, a} \in [0, 1] \\
    & \hspace{-0.2em} \forall \bm{s}, a' : \sum_{\omega \in \Omega \atop a \in \A} \mu_0(\omega) \bm \pi_{-i}(\bm s_{-i}|\omega) \pi_i(s_i|\omega) \Delta v(\omega, a, a')y_{\bm s, a} \ge 0.
\end{align*}
\end{restatable}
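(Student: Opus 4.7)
The plan is to recast the best-response problem --- which involves the discontinuous argmax $a^*(\mu_{\bs})$ --- as a continuous bi-linear program by introducing auxiliary variables $y_{\bs, a} \in [0,1]$ that encode a (possibly randomized) action of the receiver at each joint signal realization $\bs$. First I would expand $\bpi(\bs|\omega) = \pi_i(s_i|\omega)\,\bpi_{-i}(\bs_{-i}|\omega)$ in Definition~\ref{def:ex-ante_u} and replace the indicator $\mathbb{1}[a = a^*(\mu_{\bs})]$ with $y_{\bs, a}$; this yields exactly the bi-linear objective displayed in the statement. The simplex conditions $\sum_{s_i}\pi_i(s_i|\omega) = 1$ and $\sum_{a} y_{\bs,a} = 1$ together with non-negativity are the natural feasibility constraints for a signaling policy and a per-signal action distribution.

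The crux is then to argue that the inequality
\begin{align*}
\sum_{\omega, a} \mu_0(\omega)\bpi_{-i}(\bs_{-i}|\omega)\pi_i(s_i|\omega)\Delta v(\omega,a,a') y_{\bs,a} \;\ge\; 0
\end{align*}
faithfully encodes receiver rationality. Setting $V(\bs,a) := \sum_\omega \mu_0(\omega)\bpi(\bs|\omega) v(\omega, a) = \bpi(\bs)\,\E_{\omega\sim\mu_{\bs}} v(\omega,a)$, the inequality becomes $\sum_a y_{\bs,a} V(\bs,a) \ge V(\bs,a')$ for every $a'$. Choosing $a'$ to be a maximizer of $V(\bs,\cdot)$ and using $\sum_a y_{\bs,a} = 1$ with $y_{\bs,a} \ge 0$ forces $\sum_a y_{\bs,a} V(\bs,a) = \max_a V(\bs,a)$, so the support of $y_{\bs, \cdot}$ is contained in the receiver-optimal set $\argmax_a \E_{\omega\sim\mu_{\bs}} v(\omega, a)$ whenever $\bpi(\bs) > 0$; when $\bpi(\bs) = 0$ the constraint holds trivially and $y_{\bs, \cdot}$ contributes nothing to the objective.

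It then remains to match the program's optimum with the best-response value. Any signaling policy $\pi_i$ paired with $y_{\bs,a} := \mathbb{1}[a = a^*(\mu_{\bs})]$ is feasible and attains $\ubar_i(\pi_i, \bpi_{-i})$, giving the lower bound. Conversely, since the objective is linear in $y$ with all $y$-constraints separable across $\bs$, any feasible $(\pi_i, y)$ can be replaced by $(\pi_i, y^\star)$ that concentrates $y^\star_{\bs, \cdot}$ on the receiver-optimal action most favorable to sender $i$ without decreasing the objective; this value equals $\ubar_i(\pi_i, \bpi_{-i})$ under sender-favorable tie-breaking, yielding the upper bound. The counts $|\Omega||\Sig|$ variables from $\pi_i$ plus $|\Sig|^n|\A|$ from $y$, and $|\Omega| + |\Omega||\Sig| + |\Sig|^n + |\Sig|^n|\A| + |\Sig|^n|\A| = O(|\Sig|^n|\A|)$ constraints, then follow by inspection.

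The main obstacle is the incentive-constraint step: verifying that a single linear inequality per $(\bs, a')$ combinatorially pins $y_{\bs, \cdot}$ onto the receiver-optimal set, and checking that zero-probability signals do not create spurious feasible points with inflated objective. Both reduce to elementary bookkeeping once the identification $V(\bs,a) = \bpi(\bs)\,\E_{\omega \sim \mu_{\bs}} v(\omega, a)$ is in place, but they are where an innocuous-looking reformulation could inadvertently change the value of the best-response problem.
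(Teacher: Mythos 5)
Your proposal is correct and follows essentially the same route as the paper's proof: interpret $y_{\bs,\cdot}$ as the receiver's action choice, observe that the bi-linear constraint is the incentive-compatibility condition at the posterior $\mu_{\bs}$ scaled by $\bpi(\bs)$, and argue that relaxing $y$ to $[0,1]$ is lossless because the constraint pins the support of $y_{\bs,\cdot}$ to the receiver-optimal set, within which the linear objective is maximized at the sender-preferred vertex. Your treatment is in fact slightly more careful than the paper's on two points it glosses over --- the explicit convex-combination argument forcing $\sum_a y_{\bs,a}V(\bs,a)=\max_a V(\bs,a)$, and the check that zero-probability joint signals cannot inflate the objective.
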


The $y_{\bm s, a} \in\{0, 1\}$ in the above program means whether the receiver takes action $a$ given joint signal $\bm s$, which can be relaxed to the continuou range $[0, 1]$. 
Briefly, the program above takes inspiration from the persuasion setting with (1) a single sender and (2) $|\Sig| = |\A|$, where signals can be interpreted as an \emph{action recommendation} and optimal signaling expressed as a linear program with an incentive compatibility constraint to ensure that the receiver follows the recommended action. In our setting, even if $|\Sig| = |\A|$, the receiver observes joint signals of size $|\Sig|^n$ and a single sender cannot unilaterally specify the joint scheme; only the marginal. Correspondingly, the program needs to resolve the action taken by the receiver and becomes a bi-linear optimization problem.

We next show that the best-response problem is NP-Hard, even with just two senders.
This means that the above bi-linear program is not computationally tractable. 
This is a key result in our work and rules out even additively approximating to the best-response in polynomial time.

\begin{theorem}[NP-hardness of Best Response]\label{thm:best-response-NP-hard}
It is NP-hard to solve the best-response problem in multi-sender persuasion, even with additive approximation error $\frac{1}{|\Omega|^6}$ and only $n=2$ senders (while $|\Omega|$ and $|\A|$ are large). 
\end{theorem}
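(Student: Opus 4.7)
I would prove \Cref{thm:best-response-NP-hard} by a polynomial-time reduction from a well-known NP-hard combinatorial problem—my first attempt would be from \textsc{MAX-Independent-Set} (or equivalently \textsc{Max-3SAT}) to the $2$-sender best-response problem. The idea is to fix sender~$2$'s signaling scheme $\pi_2$ so that its interaction with any candidate $\pi_1$ forces the induced posteriors to encode a combinatorial object (e.g.\ a subset of vertices), and to choose the receiver's utility $v$ and sender~$1$'s utility $u_1$ so that sender~$1$'s ex-ante utility $\ubar_1(\pi_1,\pi_2)$ equals the combinatorial objective value up to a known affine transformation. Since Proposition~\ref{prop:discontinious} shows $\ubar_1(\cdot,\pi_2)$ is piecewise linear with exponentially many pieces, the receiver's argmax at each joint signal is exactly the ``decoder'' that turns a continuous signaling policy into a discrete combinatorial choice.

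\textbf{Construction sketch.} Given a graph $G=(V,E)$ with $|V|=N$ (say for Independent Set), I would let $\Omega$ contain one state $\omega_v$ for each vertex $v$ together with a few auxiliary ``gadget'' states used to calibrate utilities; let $\A$ contain one action $a_v$ per vertex plus a small number of ``default'' actions; and use a binary (or constant-size) signal space. The fixed scheme $\pi_2$ would be designed to act as a ``filter'' that, combined with $\pi_1$, causes the receiver's posterior under the joint signal $\bs = (s_1,s_2)$ to put mass essentially on a single vertex-state when $s_1 = 1$ and on a uniform mixture otherwise. The receiver's utility $v$ would be tuned so that $a^*(\mu_{\bs}) = a_v$ iff $v$ belongs to the ``selected'' set encoded by $\pi_1$, while the edge constraints are enforced by auxiliary states that penalize the receiver for selecting two adjacent vertices with positive probability, forcing a default action whenever $\pi_1$ encodes a non-independent set. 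Sender~$1$'s utility $u_1$ would reward the receiver's taking vertex-actions $a_v$ only when $v$ is genuinely in the independent set encoded, so that $\ubar_1(\pi_1,\pi_2)$ is (up to shift and scale) the size of the selected independent set.

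\textbf{Approximation gap.} To sharpen the reduction to the claimed additive threshold $1/|\Omega|^6$, I would carefully bookkeep the magnitudes: the combinatorial objective takes integer values in $[0,N]$, and after normalizing utilities by $1/\mathrm{poly}(N)$ to keep $u_1,v \in [0,1]$, the gap between consecutive attainable values of $\ubar_1(\cdot,\pi_2)$ can be made $\Omega(1/N^c)$ for a small constant $c$. Choosing $|\Omega| = \Theta(N)$ and picking $c\leq 6$ would then imply that any algorithm producing an additive $1/|\Omega|^6$-approximation to the best response would also decide Independent Set exactly, yielding NP-hardness. The constraint $n=2$ is maintained throughout because only $\pi_2$ is used to set up the filtering structure.

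\textbf{Main obstacle.} The delicate step will be designing $\pi_2$ and the receiver utility $v$ jointly so that the receiver's argmax \emph{decoder}, which is piecewise constant in $\pi_1$, actually realizes the combinatorial constraints on the nose. Because sender~$1$ can use arbitrary continuous randomization, one must prevent it from ``smoothing out'' infeasible selections to gain utility, which I would handle through tie-breaking gadgets and by adding small penalty actions whose utility to the receiver is tuned to strictly dominate any fractional encoding of a non-independent set. Verifying that the discrete optimum of $\ubar_1(\cdot,\pi_2)$ is attained at a $\{0,1\}$-valued $\pi_1$ (rather than a fractional policy that exploits the bilinear structure of Proposition~\ref{prop:best-response-bi-linear}) will require a careful case analysis on the induced posteriors, and constitutes the ``involved'' part of the reduction alluded to in the introduction.
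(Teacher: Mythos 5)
There is a genuine gap in your proposal, and it sits exactly where you flag the ``main obstacle.'' The receiver in this game best-responds \emph{separately at each joint signal}, choosing $a^*(\mu_{\bs})$ as a pointwise argmax over the posterior induced by that one realization. It therefore has no mechanism for being ``penalized for selecting two adjacent vertices with positive probability'': adjacency is a property of the receiver's choices across \emph{different} joint signals, and no choice of $v$ can couple those separate argmax computations. So the edge constraints of Independent Set cannot be enforced the way you describe, and without them the reduction collapses to maximizing $|V|$. Your approximation-gap argument has a second, independent problem: $\ubar_1(\cdot,\pi_2)$ is piecewise linear in a continuous variable, so its attainable values form intervals, not a discrete set with $\Omega(1/N^c)$ spacing; and since fractional signaling schemes correspond to convex combinations of posteriors, they can strictly beat any $\{0,1\}$-valued encoding (this is the integrality-gap phenomenon, which for Independent Set is severe). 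Deferring this to ``careful case analysis'' is deferring the entire proof. Finally, even if both issues were fixed, exact NP-hardness of Independent Set does not by itself yield hardness of \emph{additive} $1/|\Omega|^6$-approximation of a normalized objective; you would need a PCP-type inapproximability gap and would have to track it through the normalization.

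The paper sidesteps all three difficulties by reducing from a problem that is already a continuous signaling-design problem with known additive inapproximability: \emph{public persuasion with multiple receivers} \cite{dughmi_algorithmic_2017}, which is NP-hard to approximate within additive constant $1/9$. The fixed sender~$2$ sends one of $k$ signals $t_j$ uniformly at random on the original states and deterministically on $k$ auxiliary states; the receiver's utilities are crafted (with a large-penalty action $a_\infty$ and heavily punished ``wrong-index'' actions) so that, conditional on $t_j$, the receiver's effective choice is between two actions $a_{j+},a_{j-}$ that mimic receiver $j$'s binary decision in the public persuasion instance. Thus each signal of the non-responding sender turns the single receiver into one of $k$ virtual receivers, and sender~$1$'s best response must design one scheme persuading all of them simultaneously --- exactly public persuasion. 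The quantitative bookkeeping ($C=k^5$, $N=k^4$, $\Delta=1/k^2$) then converts a $1/|\Omega|^6$-additive approximation of the best response into a $1/9$-approximation of public persuasion. If you want to salvage your approach, the lesson is that the source problem should already be a persuasion-type problem with continuous strategies and known additive hardness, rather than a discrete combinatorial problem whose constraints the receiver's per-signal argmax cannot express.
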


The proof (in Appendix~\ref{app:best-response-NP-hard}) is technical and based on a non-trivial reduction from the NP-hard problem \emph{public persuasion with multiple receivers} \cite{dughmi_algorithmic_2017}. Intuitively, each signal $\bm s_{-i}$ from other non-responding senders induces a different belief about the state. From the best-responding sender's perspective, this can be correspondingly interpreted as facing multiple receivers with different prior beliefs and needing to design a single signaling scheme $\pi_i$ for all of them. With carefully crafted utilities, this problem can encode the public persuasion problem \cite{dughmi_algorithmic_2017}, which involves multiple receivers with the same belief but different utility functions.  
Our proof formally establishes this connection, which leads to the NP-hardness of our problem. 

The NP-hardness of computing best response, however, does not imply the hardness of \emph{equilibrium verification}: i.e., determining whether a given strategy profile of the senders constitutes a Nash equilibrium.  We conjecture that the equilibrium verification problem is Co-NP hard, whose formal proof would be an intriguing direction for future work. 

\begin{figure*}
    \centering
    \includegraphics[width=\linewidth]{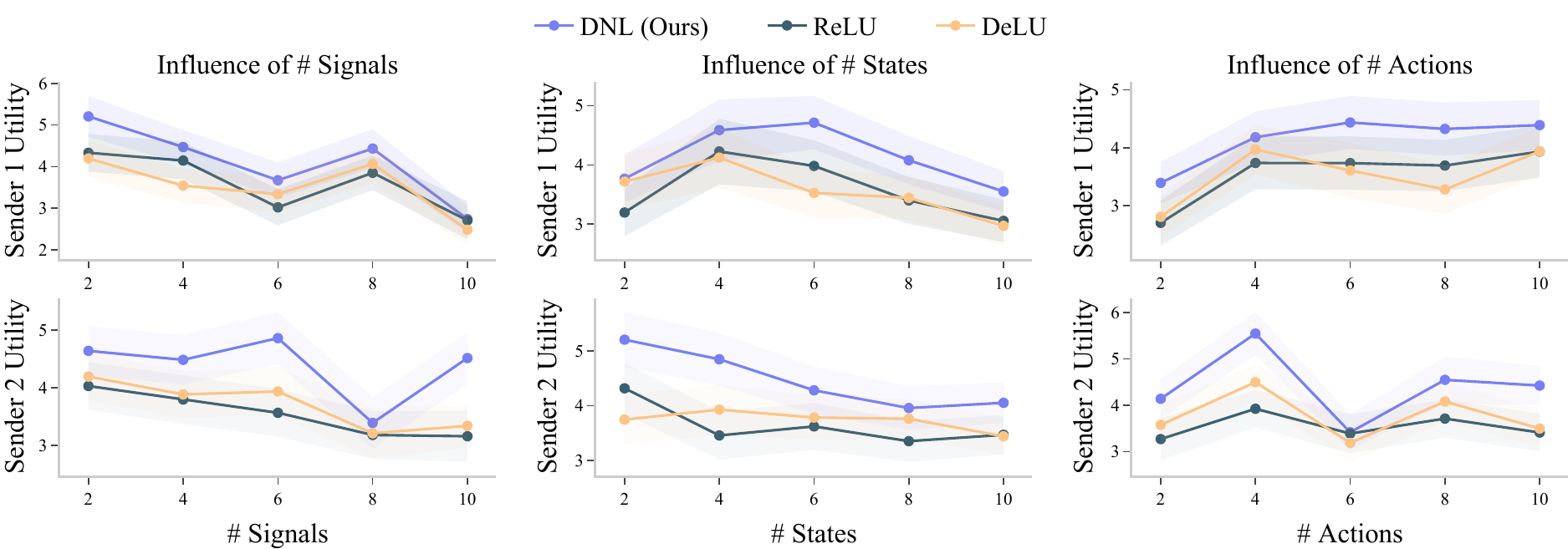}
    \vspace{-2em}
    \caption{Our method finds better $\epsilon$-local Nash equilibrium than the baseline DeLU~\cite{wang2023deep} and ReLU networks.}
    \vspace{-1em}
    \label{fig:2_sender_eq}
\end{figure*}

\subsection{Equilibrium Characterization}
A simple observation from previous works on multi-sender Bayesian persuasion \cite{gentzkow_bayesian_2017, ravindran_competing_2022} is, if for every state $\omega \in \Omega$ there is a unique optimal action for the receiver, then a simple equilibrium can be achieved by all senders fully revealing the state - i.e. $\Sig = \Omega$ and $\pi_i(s_i = \omega | \omega) = 1, \forall i$. 
Observe that this reveals the exact state realization to the receiver and thus no sender can unilaterally affect the receiver's belief and thus their action. However, for this equilibrium to exist, every sender's signal space must be as large as the state space ($|\Sig| \ge |\Omega|$), which is impractical if there are many states. Theorem \ref{thm:full-revelation} relaxes this assumption, and shows that an equivalent equilibrium exists under a much weaker assumption, $|\Sig| \geq \min(|\A|^{\tfrac{1}{n-1}}, |\Omega|^{\tfrac{1}{n-1}})$, which can be easily satisfied when there are many senders.  The proof of the theorem (in App.~\ref{app:full_reveal_proof}) is constructive and builds a mapping between signals and actions inspired by grey codes \citep{wilf1989combinatorial}. 

\begin{theorem}[Full-Revelation Equilibrium]
\label{thm:full-revelation}
    Suppose $|\Sig| \geq \text{min}(|\Theta|^{1/(n-1)}|, \A|^{1/(n-1)}$, and for every state $\omega\in\Omega$ there is a unique optimal action for the receiver. Then, the multi-sender persuasion game has an NE that fully reveals the optimal action for the realized state to the receiver. This equilibrium, however, is not necessarily unique. 
\end{theorem}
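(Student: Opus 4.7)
My approach is to give an explicit construction and verify the NE condition directly. The key idea is that when $|\Sig|^{n-1} \ge \min(|\Omega|,|\A|)$, one can design an ``error-correcting'' signal encoding so that the joint signals of \emph{any} $n-1$ senders already determine the receiver's optimal action, leaving the remaining sender unable to sway the outcome by unilateral deviation.

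\textbf{Construction.} By the uniqueness assumption, the map $a^* : \Omega \to \A$ sending each state to its unique optimal action is well defined; set $K = |a^*(\Omega)| \le \min(|\Omega|,|\A|)$. I build an injective code $c : a^*(\Omega) \to \Sig^n$, $c(\alpha) = (c_1(\alpha),\ldots,c_n(\alpha))$, whose minimum Hamming distance is at least $2$. Such a code exists because $K \le |\Sig|^{n-1}$: identify $\Sig$ with $\mathbb{Z}_q$ for $q = |\Sig|$ and take the parity-check code $\bigl\{(y_1,\ldots,y_n)\in\mathbb{Z}_q^n : \sum_j y_j \equiv 0 \pmod q\bigr\}$, which has $q^{n-1}$ codewords, any two of which differ in at least two coordinates (otherwise the parity would also differ). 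The proposed equilibrium is the deterministic profile $\pi_i^*(s_i \mid \omega) = \mathbf{1}\bigl[s_i = c_i(a^*(\omega))\bigr]$ for each sender $i$.

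\textbf{Equilibrium check.} On path, when state $\omega$ is realized the joint signal is $c(a^*(\omega))$, so the Bayesian posterior $\mu_{\bs}$ is supported on $\{\omega' : a^*(\omega') = a^*(\omega)\}$; since $a^*(\omega)$ is uniquely optimal at each such $\omega'$, it is also optimal under $\mu_{\bs}$, and the receiver plays $a^*(\omega)$. For any unilateral deviation $\pi_i'$ by sender $i$ and any realized $(\omega, s_i')$, the induced joint signal $\bs'$ agrees with $c(a^*(\omega))$ on all $n-1$ coordinates $j\ne i$. By the minimum-distance-$2$ property, no other codeword $c(\alpha')$ with $\alpha'\ne a^*(\omega)$ agrees with $c(a^*(\omega))$ on $n-1$ coordinates, so under the natural off-path belief (equivalently, a trembling-hand refinement in which each sender perturbs $\pi_j^*$ by $\varepsilon$ and $\varepsilon\to 0$) the receiver's posterior is again supported on $\{\omega' : a^*(\omega') = a^*(\omega)\}$ and the receiver again plays $a^*(\omega)$. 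Therefore $\ubar_i(\pi_i',\bpi^*_{-i}) = \ubar_i(\bpi^*)$, and $\bpi^*$ is an NE that reveals the optimal action. Non-uniqueness follows from small examples: e.g., a different valid code induces a different revealing NE, or in some instances a babbling profile in which one sender's policy is state-independent while the other $n-1$ collectively decode $a^*$ is also an NE.

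\textbf{Main obstacle.} The non-trivial step is handling deviations that generate off-equilibrium-path joint signals, where Bayes' rule is undefined. The minimum-distance-$2$ property is precisely what is needed: for any off-path $\bs'$ produced by a deviation of sender $i$, the remaining $n-1$ coordinates uniquely identify a codeword and hence pin down the receiver's optimal action, so there is a consistent posterior under which the receiver continues to play $a^*(\omega)$. A secondary subtlety is tie-breaking at $\mu_{\bs}$, resolved by the standard persuasion convention in favor of the equilibrium action. Finally, achieving the tight bound $|\Sig|^{n-1}\ge \min(|\Omega|,|\A|)$ requires a Singleton-saturating code; weaker constructions would cost a factor and miss the stated threshold.
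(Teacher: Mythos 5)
Your proof is correct and takes essentially the same route as the paper: both build a set of joint signals with pairwise Hamming distance at least $2$ so that any $n-1$ senders' signals already pin down the receiver's optimal action, rendering unilateral deviations ineffectual; your parity-check code $\sum_j y_j \equiv 0 \pmod q$ is in fact a cleaner and more rigorous instantiation of this set than the paper's Gray-code counting argument. The only soft spot is the non-uniqueness claim: relabeling the code gives another full-revelation NE, which technically suffices, but the paper's point is sharper --- it exhibits a concrete partial-revelation equilibrium that is not payoff-equivalent to full revelation --- so if you want to match the spirit of the statement you should supply an explicit instance rather than the babbling sketch (which, as you note, needs $|\Sig|^{n-2}\ge K$ and so does not hold in general).
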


While the result above generalizes an explicit equilibrium characterization to a much larger setting with limited signals, the corresponding equilibrium is optimal for the receiver and not necessarily the senders. Indeed, if the preferences of the senders do not perfectly align with the receiver (which is common and in-fact the premise behind persuasion), this will not be beneficial for the senders. Further, the construction above is based on the assumption that for every state, the receiver has a unique optimal action. This may not hold in many scenarios, with receivers being indifferent between multiple actions. We show that in such scenarios and thus the general case, finding an equilibrium is PPAD-Hard, even with constant number of senders, states, and actions. The proof (in App.~\ref{app:PPAD-non-fixed}) relies on a reduction from finding equilibrium in two-player games with binary utilities. 

\begin{theorem}[PPAD-Hardness]\label{thm:PPAD-non-fixed}
    In multi-sender persuasion games that do not satisfy the condition ``the receiver has a unique optimal action for every state $\omega$'', under some tie-breaking rules, finding NE is PPAD-hard. This holds even if $n=2$, $|\Omega|=2$, $|\A|=4$ (while $|\Sig|$ is large).
\end{theorem}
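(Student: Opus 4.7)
The plan is a polynomial-time many-one reduction from the PPAD-complete problem of computing a Nash equilibrium in a two-player normal-form game with $\{0,1\}$ utilities (classical via Abbott-Kane-Valiant on win-lose games, which are a special case). Given a bimatrix game with strategy sets $[n]\times[n]$ and payoffs $u_1^{\mathrm{bi}}, u_2^{\mathrm{bi}}\colon [n]\times[n]\to\{0,1\}$, I build a multi-sender persuasion instance with $n{=}2$ senders, $|\Omega|{=}2$ equally likely states, $|\Sig|{=}n$ signals per sender, and $|\A|{=}4$ receiver actions indexed by bit pairs $\{a_{b_1 b_2}\}_{b_1,b_2\in\{0,1\}}$. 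I set the receiver's utility $v\equiv 0$ so that every action is optimal at every state, which both violates the ``unique optimal action'' condition and leaves the receiver's choice entirely up to tie-breaking. The sender utilities are state-independent, $u_i(\omega, a_{b_1b_2})=b_i$, and the adversarial tie-breaking rule is $\tau(s_1,s_2)=a_{u_1^{\mathrm{bi}}(s_1,s_2),\,u_2^{\mathrm{bi}}(s_1,s_2)}$; since the receiver's argmax is all of $\A$ at every posterior, this signal-dependent $\tau$ is a valid selection, which is the clause ``under some tie-breaking rules'' invoked by the theorem.

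The first step is to substitute $\tau$ into the ex-ante utility from Definition~\ref{def:ex-ante_u} and observe the state-wise decomposition
\[
\ubar_i(\bpi) \;=\; \sum_{\omega\in\Omega}\mu_0(\omega)\sum_{s_1,s_2}\pi_1(s_1|\omega)\pi_2(s_2|\omega)\,u_i^{\mathrm{bi}}(s_1,s_2) \;=\; \sum_{\omega\in\Omega} \mu_0(\omega)\, V_i\!\bigl(\pi_1(\cdot|\omega),\pi_2(\cdot|\omega)\bigr),
\]
where $V_i\colon\Delta([n])^2\to\mathbb{R}$ is sender $i$'s expected payoff in the bimatrix game. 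The decomposition uses only the multilinearity of $V_i$, the state-independence of the sender utilities, and the fact that $\tau$ depends solely on $\bs$. From this I would prove the key characterization: $(\pi_1,\pi_2)$ is a Nash equilibrium of the persuasion game if and only if, for every $\omega\in\Omega$, the per-state pair $(\pi_1(\cdot|\omega),\pi_2(\cdot|\omega))$ is a Nash equilibrium of the bimatrix game. The reverse direction is immediate; the forward direction is by contradiction, lifting any bimatrix deviation of gain $\delta>0$ at some state $\omega^\star$ to a persuasion deviation that modifies only $\pi_j(\cdot|\omega^\star)$ and strictly improves the ex-ante utility by $\mu_0(\omega^\star)\,\delta$. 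Polynomial-time runnability and preservation of $\epsilon$-approximation up to the factor $1/\min_\omega\mu_0(\omega)=2$ then transfer PPAD-hardness to the persuasion problem.

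The main obstacle is the tie-breaking argument: with $v\equiv 0$ the receiver is tied at every posterior, so the adversarial tie-breaking rule carries the entire bimatrix payoff structure, and I must argue that allowing $\tau$ to be any function of $\bs$ (and not just of the posterior $\mu_{\bs}$) is legitimate under the theorem's ``some tie-breaking rules'' clause. A related subtlety is that a deviating sender might try to exploit informative signaling to induce non-trivial posteriors; however, since the receiver's argmax remains the full set $\A$ for any posterior and $\tau$ depends only on $\bs$, the above decomposition holds under \emph{any} unilateral deviation, so informative signaling yields no extra degrees of freedom beyond modifying $\pi_j(\cdot|\omega)$ pointwise. Once the characterization is in place, PPAD-hardness of the binary-utility bimatrix problem transfers directly, establishing PPAD-hardness of equilibrium computation in multi-sender persuasion with $n=2$, $|\Omega|=2$, $|\A|=4$, and arbitrary $|\Sig|$.
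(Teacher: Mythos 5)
Your proposal is correct and follows essentially the same route as the paper: a reduction from two-player $0/1$-utility bimatrix games using an indifferent receiver ($v\equiv 0$), a signal-dependent tie-breaking rule $a_{\hat u_1(s_1,s_2)\,\hat u_2(s_1,s_2)}$, and sender utilities that read off the corresponding bit. The only (immaterial) difference is that the paper zeroes out sender utilities at state $\omega=0$ so that only $\pi_i(\cdot\,|\,\omega=1)$ encodes the bimatrix strategy, whereas you make utilities state-independent and argue the per-state decomposition holds at every state; both yield the same hardness conclusion.
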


\section{Deep Learning for Local Equilibrium}\label{sec:deep}

The strong computational hardness results established in the previous section 
motivate us to find methods to efficiently calculate \emph{local} Nash equilibrium, a strategy profile wherein any small unilateral deviation cannot improve a player's utility. 
This has been promoted as an attractive solution concept for a plethora of settings~\cite{fiez2020implicit,fiez2021global,jin2020local}. In doing so, we also relax the assumption of having access to the exact utility model and take a sample-based approach popularized in the nascent literature on differentiable economics. This is especially prescient as it gracefully generalizes to settings where action space is rich (or even continuous) and it may only be possible to sample utilities for arbitrary policies. Correspondingly, we introduce a computational framework based on deep learning. It consists of a novel discontinuous neural network architecture approximating the senders' utility functions and a local equilibrium solver running extra-gradients on the learned discontinuous networks. We describe the learning framework in detail and compare the found local NE against those obtained by strong baseline network structures as well as the full revelation solution (Theorem~\ref{thm:full-revelation}).

\begin{definition} [$\epsilon$-Local Nash Equilibrium]
    An $\epsilon$-local Nash equilibrium for a multi-sender persuasion game is a profile of signaling policies $\bpi = (\pi_1, \ldots, \pi_n)$ such that for any sender $j$ and deviating policy $\pi_j'\in\{\pi'\ |\ \|\pi'-\pi_j\|\le \epsilon\}$, it holds that $\ubar_j(\bpi) \ge \ubar_j(\pi'_j, \bpi_{-j})$.
\end{definition}

\subsection{Method}
We aim to use the extra-gradient~\cite{korpelevich1976extragradient} method
to find an $\epsilon$-local NE. However, the major challenge in applying this, or indeed any other gradient-based learning algorithm, is that the senders' utility function is discontinuous and non-differentiable in their signaling policy, as per Proposition \ref{prop:discontinious}. Conventional neural networks well approximate continuous functions but are not expressive enough to express discontinuous functions~\cite{scarselli1998universal}. To solve this problem, we extend a fully connected feedforward network with ReLU activation~\cite{agarap2018deep} to learn a differentiable representation of discontinuous functions. To describe our method, we first introduce the activation pattern and the piecewise linearity of ReLU networks.

\textbf{ReLU networks}\ \ Suppose there are $L$ hidden layers. Layer $l$ has weights $\mW^{(l)}\in\mathbb{R}^{n_l\times n_{l-1}}$ and biases $\vb^{(l)}\in\mathbb{R}^{n_l}$. $n_0 = d$ is the input dimension. The output layer has weights $\mW^{(L+1)}\in\mathbb{R}^{d'\times n_L}$ and  biases $\vb^{(L+1)}\in\mathbb{R}^{d'}$. With input $\vx\in \mathbb{R}^d$, we have the pre- and post-activation output of layer $l$: $\vh^{(l)}(\vx)=\mW^{(l)}\vo^{(l-1)}(\vx)+\vb^{(l)}$ and $\vo^{(l)}(\vx)=\sigma\left(\vh^{(l)}(\vx)\right)$, where $\sigma(x)=\max\{x,0\}$ is the ReLU activation. For each hidden unit,  the ReLU {\em activation status} has two values, defined as $1$ when pre-activation $h$ is positive and $0$ when $h$ is strictly negative. The activation pattern of the entire network is defined as follows.
\begin{definition}[Activation Pattern]
    An {\em activation pattern} of a ReLU network is a binary vector $\vr=[\vr^{(1)},\cdots,\vr^{(L)}]\in\{0,1\}^{\sum_{l=1}^L n_l}$, where $\vr^{(l)}$ is a 
    {\em layer activation pattern} including the activation status of each unit in layer $l$.
\end{definition}
The activation pattern depends on the input $\vx$. Given an activation pattern $\vr(\vx)$, the ReLU network is a linear function~\cite{croce2019provable}
\begin{align}
    \vh^{(L\shortp 1)}(\vx)=\mM^{(L\shortp 1)}\vx+\vz^{(L\shortp 1)},\nonumber
\end{align}
where $\mM^{(L\shortp 1)}=\mW^{(L\shortp 1)} (\prod_{k=1}^{L}\mR^{(L\shortp 1-k)}(\vx)\mW^{(L\shortp 1-k)})$, $\vz^{(L\shortp 1)}=\vb^{(L\shortp 1)}+\sum_{k=1}^{L}(\prod_{j=0}^{L-k}\mW^{(L\shortp 1-j)}\mR^{(L-j)}(\vx))\vb^{(k)}$, and $\mR^{(k)}$ is a diagonal matrix with diagonal elements equal to the layer $k$'s activation pattern $\vr^{(k)}$. 

\begin{figure*}
    \centering
    \includegraphics[width=\linewidth]{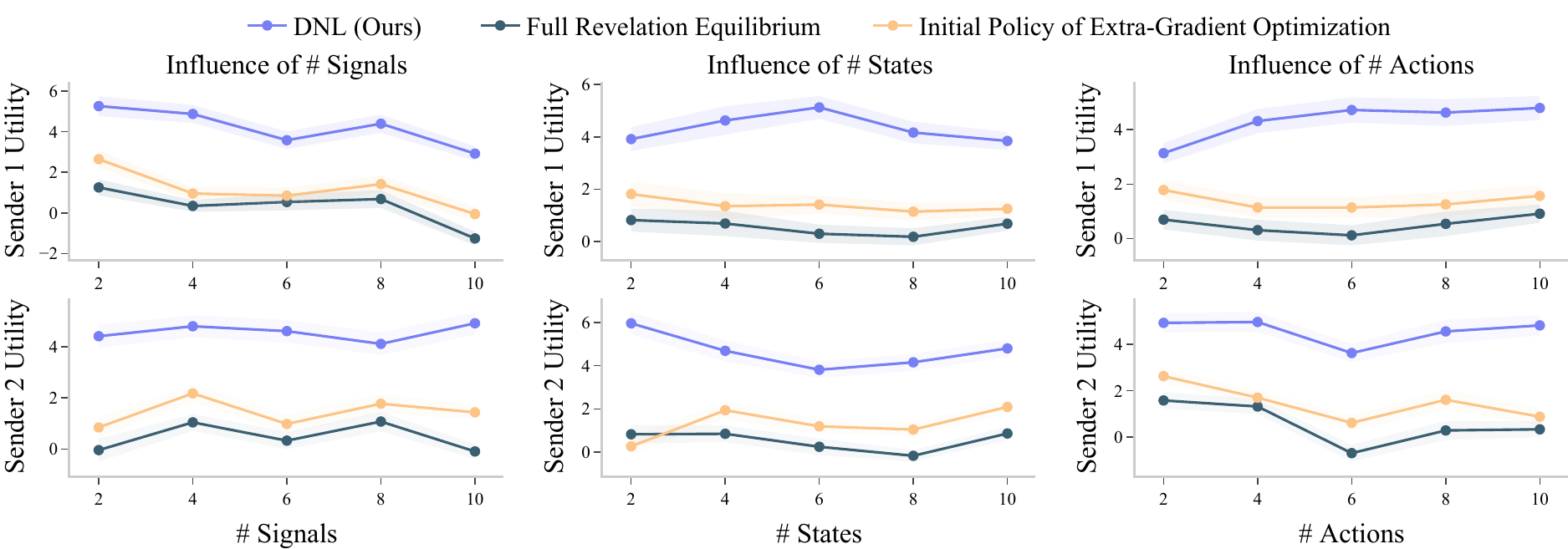}
    \vspace{-2em}
    \caption{The $\epsilon$-local Nash equilibria found by our method typically Pareto dominate the full revelation equilibria and improve the random initial policies of extra-gradient by a large margin.}
    \vspace{-1em}
    \label{fig:2_sender_eq_ab}
\end{figure*}
\textbf{Previous work}\ \  To introduce discontinuity, DeLU~\cite{wang2023deep} proposes to generate the bias of the last layer $\vb^{(L+1)}$ by an auxiliary network that is conditioned on the activation pattern $\vr(\vx)$. The idea is that inputs with the same $\vr(\vx)$ come from a polytope that is the intersection of half-spaces: $\mathcal{D}(\vx) = \cap_{l=1,\cdots,L}\cap_{i=1,\cdots,n_l}\Gamma_{l,i}$, where $\Gamma_{l,i}$ corresponding to unit $i$ of layer $l$ defined as:
\begin{align}
\Gamma_{l,i}=\left\{\vy\in\mathbb{R}^d | \Delta_i^{(l)}\left(\mM_i^{(l)}\vy+\vz_i^{(l)}\right)\ge0\right\}.\label{equ:sub-spaces}
\end{align} 
Here $\mM_i^{(l)}\vy+\vz_i^{(l)}$ is the output of unit $i$ at layer $l$, and $\Delta_i^{(l)}$ is 1 if $\vh^{(l)}_i(\vx)$ is positive, and is -1 otherwise.

In this way, different pieces $\mathcal{D}(\vx)$ has different biases, introducing discontinuity at piece boundaries. However, since inputs in the same piece share the same weights, DeLU is a linear function in a piece and does not have enough expressivity to represent the utility function in the multi-sender persuasion games, which is piecewise non-linear (Proposition~\ref{prop:discontinious}).

\begin{figure*}
    \centering
    \includegraphics[width=\linewidth]{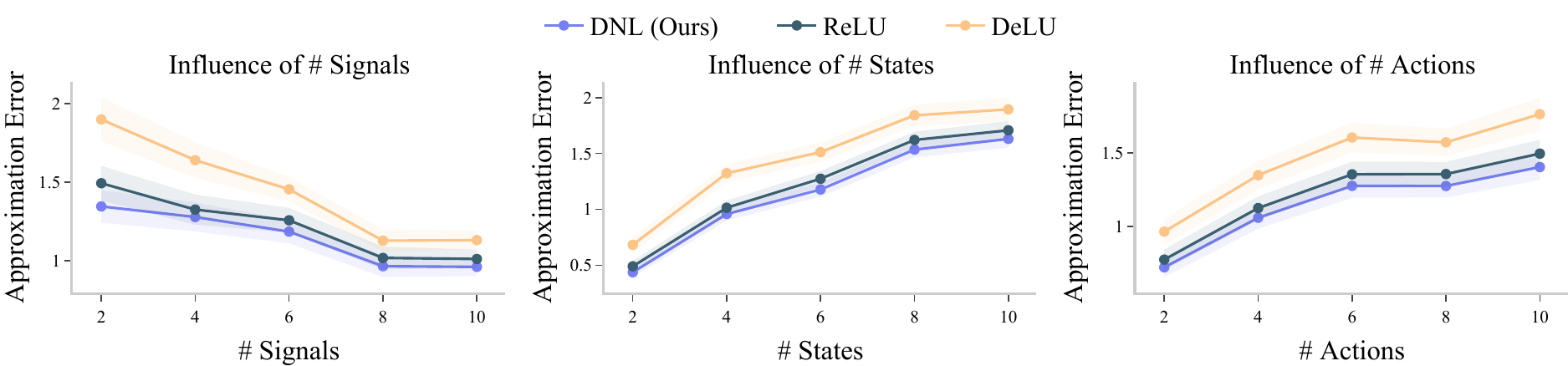}
    \vspace{-2em}
    \caption{Our network achieves lower approximation errors compared to baseline network structures.}
    \vspace{-1em}
    \label{fig:2_sender_error}
\end{figure*}
\textbf{Network architecture}\ \  We enable a fully-connected network to be piecewise Discontinuous and Non-Linear (\name) by dividing the network into a lower part and a higher part. The lower part consists of the first $K<L$ linear layers and is a normal network with ReLU activation. During a forward pass, we get the activation pattern
\begin{align}
\vr^{(\le K)}=[\vr^{(1)},\cdots,\vr^{(K)}]\nonumber
\end{align}
of this lower network and generate the weights and biases of the higher part via a hyper-network $g$ whose input is $\vr^{(\le K)}$.

Looking at the lower part, inputs with the same $\vr^{(\le K)}$ reside in the intersection of half-spaces:
\begin{align}
\mathcal{D}^{(\le K)}(\vx) = \cap_{l=1,\cdots,K}\cap_{i=1,\cdots,n_l}\Gamma_{l,i},\nonumber
\end{align}
with $\Gamma_{l,i}$ defined in Eq.~\ref{equ:sub-spaces}. By introducing the hyper-network, inputs in $\mathcal{D}^{(\le K)}(\vx)$ share a non-linear higher network. Therefore, within this piece, the utility approximation can be non-linear. Furthermore, different pieces have different $\vr^{(\le K)}$, so the higher part can be different, introducing discontinuity at boundaries.

Formally, we train a network $f_j(\bm\pi; \theta_j)$, parameterized by $\theta_j$, for each sender $j$ to approximate its ex-ante utility (Definition~\ref{def:ex-ante_u}) under the joint signaling policy $\bpi$. The input to the lower part of $f_j$ is the joint signaling policy $\bm\pi$. The hyper-network $g$ takes the activation pattern $\vr^{(\le K)}(\vx)$ of the lower part as input and outputs $\{(\mW^{k}, \vb^{k})\}_{k=K+1}^{L+1}$ as the weights and biases for layer $K+1$ to $L+1$. After obtaining its weights and biases, the higher part then takes the output of the lower part network as input and generates an approximation of the ex-ante utility. It is worth noting that $K<L$, and we have at least two linear layers at the higher part, so that piecewise non-linearity can be ensured. The whole network $f_j(\bm\pi; \theta_j)$ is end-to-end differentiable and updated by the MSE loss function:
\begin{align}
    \mathcal L(\theta_j) = \mathbb{E}_{\bpi} \left([f_j(\bpi;\theta_j) - \ubar_j(\bpi)]^2\right).\label{eq:learning_loss}
\end{align} 
To calculate this loss, we uniformly sample joint policies $\bpi$ and obtain the corresponding ex-ante utility $\ubar_j(\bpi)$ by running a game simulator.

\textbf{Extra-gradient}\ \  With $f_j$ as a differentiable representation of the senders' ex-ante utility, we can run extra-gradients to find $\epsilon$-local NE. We directly parameterize the signaling policy $\pi_j$ of sender $j$ by a learnable matrix $\phi_j$ residing in $\Phi\subset \mathbb{R}^{|\Omega|\times|\mathcal S|}$. A matrix in $\Phi$ has all of its elements in the range $[0,1]$, and each row summed to 1. 

The extra-gradient update can be written as
\begin{align}
    & \text{(extrapolation)}\ \phi_j^{\tau+1/2} = p_\Phi(\phi_j^{\tau}-\gamma_\tau \nabla_{\phi_j^\tau}f_j(\bpi_{\phi^{\tau}};\theta_j)),  \nonumber \\
    & \text{(update)}\ \ \phi_j^{\tau+1} = p_\Phi(\phi_j^{\tau}-\gamma_\tau \nabla_{\phi_j^{\tau+1/2}}f_j (\bpi_{\phi^{\tau+1/2}};\theta_j)).  \nonumber
\end{align}
Here, $p_\Phi[\cdot]$ is the projection to the constraint set $\Phi$, and we use a SoftMax projection in practice. The parameters $\theta_j$ of $f_j$ is fixed during extra-gradient updates. $\bpi_\phi$ is the joint parameterized signaling policy, and $\gamma_\tau$ is the learning rate.
\section{Empirical Results}\label{sec:experiment}

In this section, we evaluate our deep learning method by comparing against continuous neural networks with ReLU activation, discontinuous neural networks DeLU~\cite{wang2023deep}, and full-revelation strategies on a illustrative example and a synthetic benchmark.

\subsection{Didactic Example}
We first demonstrate the representational capacity of our method on a simple multi-sender game with 2 senders, 2 signals, 2 actions, and 2 states. The utility matrix of the receiver is \( \left[\begin{smallmatrix} 1 & 0 \\ 0 & 1 \end{smallmatrix}\right] \), where each row corresponds to a state and each column corresponds to an action. The utilities for two senders are \( \left[\begin{smallmatrix} 1 & 1 \\ -1 & 3 \end{smallmatrix}\right] \) and \( \left[\begin{smallmatrix} 4 & 1 \\ 1 & 1 \end{smallmatrix}\right] \), respectively.

\begin{figure*}
    \centering
    \includegraphics[width=\linewidth]{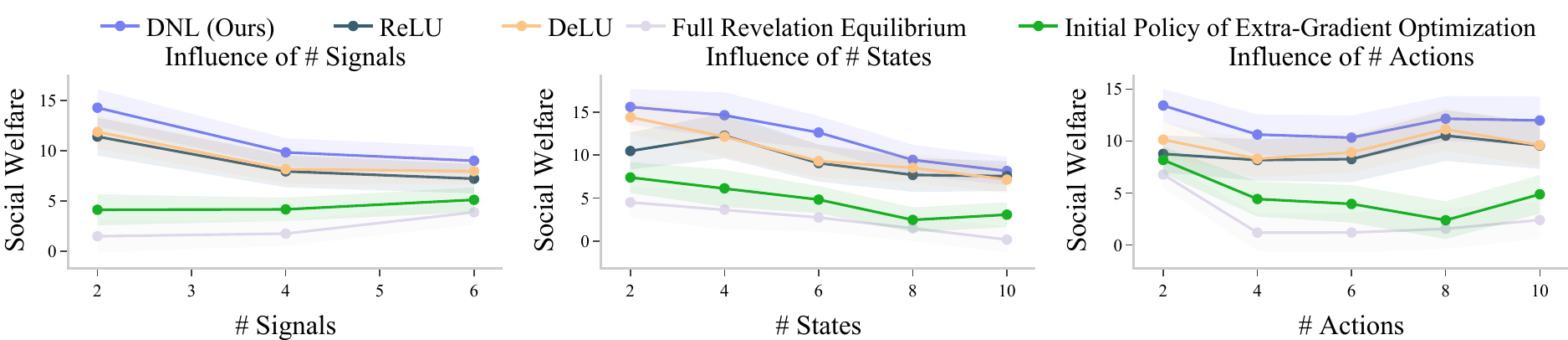}
    \vspace{-2em}
    \caption{Our method achieves higher social welfare compared against baselines and full-revelation solutions in games with 4 senders.}
    \label{fig:4_sender}
    \vspace{-1em}
\end{figure*}
In the first row of Fig.~\ref{fig:viz}, we fix both of the two senders' signaling policies at $\mathtt{State\ 2}$ to $(0.5, 0.5)$ and vary their signaling policies at $\mathtt{State\ 1}$. The x-axis is the probability of $\mathtt{Sender\ 1}$ sending $\mathtt{Signal\ 1}$ at $\mathtt{State\ 1}$, the y-axis is the probability of $\mathtt{Sender\ 2}$ sending $\mathtt{Signal\ 1}$ at $\mathtt{State\ 1}$, and the z-axis is the (possibly approximated) ex-ante utility of $\mathtt{Sender\ 2}$. The second row is similar to the first, but the two senders' signaling policies at $\mathtt{State\ 2}$ are $(0.2, 0.8)$ and $(0.8, 0.2)$, respectively.


The first column shows $\mathtt{Sender\ 2}$'s actual ex-ante utility. This utility function displays discontinuities, effectively captured by our method (second column). In contrast, ReLU approximations in the third column are not accurate at piece boundaries, and we can observe that the approximated DeLU function in the fourth column is linear in each piece, limiting its representational power for this game. 

To ensure a fair comparison, networks used in this study, including ours, ReLU, and DeLU, are standardized in terms of architecture, featuring three hidden layers with 64 units each. The training process involves a dataset of 500,000 randomly selected samples (pairs of signaling policies and corresponding ex-ante utilities), over which the networks are trained for a total of 200 epochs. For our network, the lower part has the first hidden layer. This layer's activation pattern is used to generate the weights and biases of the subsequent two layers by a hyper-network, which is itself composed of two hidden layers, each containing 32 units.

\subsection{Synthetic Benchmark}
In this section, we generate synthetic problems to test whether our network can find $\epsilon$-local Nash equilibria that are better (in terms of sender utility) than those found by baseline network architectures as well as the full-revelation equilibria mentioned in Theorem \ref{thm:full-revelation}.

\textbf{Setup}\ \  The size of a problem is determined by the tuple $(n, |\Omega|, |\mathcal S|, |\mathcal A|)$, and our evaluation encompasses a range of problem sizes to thoroughly assess the efficacy of our method. Specifically, we consider 2 and 4 senders, and for each, $(|\Omega|, |\mathcal S|, |\mathcal A|)$ are drawn from a three-dimensional grid $\{2, 4, 6, 8, 10\}^3$. For each problem size, we randomly generate 5 problem instances. In total, we have 1,250 problem instances to benchmark the proposed learning framework. The utility matrices for the receiver and senders, as well as the prior belief of states, are randomly sampled from a Gaussian distribution with variance at 100 and mean at 0, with a SoftMax applied to generate prior beliefs. We employ random numbers featuring significant variance to enhance the complexity of the benchmark, thereby facilitating a more effective evaluation of different solutions.

We standardize the network architecture of our method and baselines mirroring the configuration delineated in the didactic example to ensure a fair comparative analysis. The training setup is described in detail in Appendix~\ref{appx:deep}. To test whether a joint signaling policy profile $\bpi$ is an $\epsilon$-local Nash equilibrium, we randomly sample $K$ policies $\bpi'_j$ for each sender $j$ in the neighborhood $\{\pi'_j\ |\ \|\pi'_j-\pi_{\phi_j}\|_\infty\le \epsilon\}$ and check whether it can gain a higher utility at $\bpi'_j$. In our experiments, the number of test samples $K$ grows with the problem size. We set the neighborhood size $\epsilon$ to 0.005 and find that our experimental results are robust with the value of $\epsilon$ up to 0.01 as evidenced by more results in Appendix~\ref{appx:deep}.

\textbf{Representational capacity}\ \  In Fig.~\ref{fig:2_sender_error}, we fix the number of senders to 2 and compare the approximation errors (Eq.~\ref{eq:learning_loss}) achieved by our method and the two baseline architectures. We show the influence of the numbers of signals, states, and actions in three subplots, respectively, by presenting the average (solid lines) and the 95\% confidence interval (shaded areas) of approximation errors. In the first subplot for example, we iterate the number of signals, and present results on all problem instances for each number of signals.

The results suggest that our algorithm provides a more accurate approximation than ReLU and DeLU. The advantage of our method is consistently maintained across all the range evaluated. It is also interesting to observe that the approximation error decreases for all three algorithms as the number of signals increases, but it increases as the numbers of states and actions increase. This observation indicates that the multi-sender persuasion game becomes more challenging with fewer signals, aligning with existing theoretical results on persuasion with limited signals \cite{dughmi2016persuasion}. 

\textbf{Equilibrium}\ \  In Fig.~\ref{fig:2_sender_eq}, we conduct a comparison between the equilibrium derived from our method against those produced by baselines. We run the verification process to ascertain whether the extra-gradient outcomes are indeed $\epsilon$-local NE. We present the mean and the 95\% confidence interval of the sender utilities at the best solutions that satisfy the criteria. Notably, our findings indicate that for each of the two senders, the ex-ante utility achieved in our model consistently outperforms that of the baselines, exhibiting Pareto dominance. In Fig.~\ref{fig:2_sender_eq_ab}, we provide additional evidence demonstrating that extra-gradient with our trained networks can significantly enhance the senders' utility from the initial starting points. Furthermore, \name~successfully generates solutions that surpass full-revelation equilibria by a large margin. This improvement underscores the synergisitic benefit of integrating the extra-gradient approach with our networks. Similar results can be observed for games with 4 senders, and in Fig.~\ref{fig:4_sender}, we show the welfare (the sum of senders' utilities) in these games.

\subsection{Real-World Scenarios}

\textbf{Setting}\ \ In this section, we extend the evaluation of our method to the following real-world scenarios.

\textbf{\emph{Scenario 1: Advertising\ of\ Quality}}\ \ Prior economic research on multi-sender persuasion explored an advertising problem~\citep{gentzkow2017bayesian}. In this problem, a total of $n$ competing firms (senders) market their products to a single consumer. The product of each firm $i$ can be of high quality ($\omega_i = 5$) or low quality ($\omega_i = -5$). The consumer wants to buy at most one product. The quality of the products is the state, known to firms but not the consumer. By sending signals, i.e., verifiable advertisements about their product's quality, a firm tries to persuade the consumer into purchasing from it, which induces utility 1 for the firm. The firm's utility is 0 if the consumer doesn't purchase from it. The consumer is faced with $n+1$ actions, purchasing from any one of the firms, or none at all. The consumer's utility of purchasing from firm $i$ is $\omega_i+\epsilon_i$, where $\epsilon_i$ is a shock (Gaussian-distributed zero-mean noise). If the consumer makes no purchase, their utility is $0$.
In our experiments, we set $n$ to 7 and generate 20 instances randomly. 

\textbf{\emph{Scenario 2: Advertising\ of\ Multiple\ Products}}\ \ We make the previous advertising example more realistic by incorporating multiple products of different quality and prices. Specifically, we consider the following problem.

There are $n$ firms (senders), each of which $i$ sells a product of price $p_i$ and quality $\omega_i$. The true state is the prices and quality of all products. The consumer (receiver) has a partial observation of the state, as it has no access to the quality of products. The receiver has $n+1$ actions, which are buying a product from one of the firms or buying nothing. The utility of firm $i$ is $p_i$ if the receiver buys from it, or -1 otherwise. The senders use signals to strategically reveal the quality information to the receiver, trying to sway their purchase decisions in their favor. The receiver wants to maximize its utility, which is $\omega_i-p_i+\epsilon_i$ if purchasing product $i$, or 0 if buying nothing. Here $\epsilon_i$ is the shock defined in the same way as in Scenario 1. We test the case with $n=2$ firms. Price $p_i$ and quality $\omega_i$ are uniformly random integers in the range [1, 10] and [-8, 12], respectively. 

\textbf{\emph{Scenario 3: Uber\ or\ Lyft}}\ \ In this last scenario, we move beyond advertising and consider the competition among real-world ride-hailing apps, and a single driver subscribed to both platforms. There are two senders, Uber and Lyft, who receive $m$ and $n$ orders from users, respectively. Each order has four features (1) The price charged to the user; (2) The payment to the driver; (3) The true utility to the app, which is the price minus the payment; and (4) The true cost for the driver, which is known to the app and is influenced by many factors, such as the user rating indicating whether they are friendly, the expected travel time and distance, the expected waiting time, etc. 

The true state is the joint feature of all orders. Feature (4), the true cost to the driver, is invisible to the driver when they must decide the pickup. Uber and Lyft can send signals to strategically reveal this information in order to persuade the driver into picking up their orders. The driver has $m+n+1$ actions, which are picking up one of the $m+n$ orders or doing nothing. The utility of the driver is the price minus the true cost of the selected order, or -1 if they don’t select any order. In our experiments, we set $m$ and $n$ to 4 and the number of signals to $m+1$. 

\textbf{Results}\ \ We test the performance of our method and the baseline neural network structures. Table~\ref{tab:real} shows the social welfare of the senders at the $\epsilon$-local equilibria found by different methods. Mean and 95\% confidence intervals with 20 random instances are presented. We can observe that our method consistently outperforms other methods, indicating that our discontinuous, piecewise nonlinear network structure allows us to effectively tackle these richer settings that prior literature could not.

\begin{table}
    \centering
    \setlength{\tabcolsep}{5pt}
    \caption{The social welfare (avg±95\% confidence interval) at the $\epsilon$-local equilibria found by our method and baseline networks.}\label{tab:real}
    \vspace{0.2em}
    \begin{tabular}{cccc}
    \toprule
       Scenario  & ReLU & DeLU & Ours \\
    \hline
       1  & 0.498$\pm$0.004 & 0.599$\pm$0.003 & \textbf{0.699$\pm$0.003} \\
       2  & 0.407$\pm$0.176 & 0.467$\pm$0.179 & \textbf{0.526$\pm$0.004} \\
       3  & 3.216$\pm$0.790 & 3.783$\pm$0.894 & \textbf{4.344$\pm$0.885} \\
    \bottomrule
    \end{tabular}
    \vspace{-2em}
\end{table}
\section{Discussion}\label{sec:discussion}
We provide a comprehensive computational study of multi-sender Bayesian persuasion, a model for a wide range of real-world phenomena. 
The complex interplay of simultaneous sender actions and sequential receiver responses makes this game challenging. Our work formalizes this challenge by proving computational hardness results for both best response and equilibrium computation. Relaxing the equilibrium concept, however, offers hope, even without complete information. We propose a novel class of neural networks that can approximate the non-linear, discontinuous utilities in this game; paired with the extra-gradient algorithm, it is highly effective at finding local equilibria. Indeed, our network may be of broader interest to many games with discontinuous utility as it facilitates any downstream optimization algorithm. More broadly, BP is part of the principal-agent model of economics which also includes problems like contract design and Stackelberg games. Insights developed here can be instrumental to multi-principal variants of those problems which, despite their importance, have long eluded robust computational solutions. 

\newpage

\section*{Acknowledgements}
We thank the reviewers for the helpful comments.  This work is supported by NSF awards No. IIS-2147187 and No.~CCF-2303372, Army Research Office Award No.~W911NF-23-1-0030 and Office of Naval Research Award No.~N00014-23-1-2802.

\section*{Impact Statement}
This paper presents work whose goal is to advance the field of Machine Learning. There are many potential societal consequences of our work, none of which we feel must be specifically highlighted here.


\bibliography{bibliography}
\bibliographystyle{icml2024}

\newpage
\appendix
\onecolumn

\section{Proofs}
\subsection{Proof of Proposition~\ref{prop:discontinious}}\label{app:discont_prop}
\begin{proof}
    For a joint scheme $\bm{\pi}$, each signal realization $\bm{s}$ induces a posterior belief $\mu_{\bm{s}}$, wherein receiver take optimal action $a^*(\mu_{\bm{s}})$. We can equivalently write the function $a^*$ in terms $\bm{\pi}(s|\cdot)$, ad note that $a^*(\bm{\pi}(s|\cdot)) \in \A$. When the signaling scheme changes sufficiently such that the new actions are optimal for a given realized posterior $\mu_{\bm{s}}$, the mapping $a^*(\bm{\pi}(\bm{s}|\cdot))$ changes accordingly. Thus, the function $a^*(\bm{\pi}(s|\cdot))$ is piece-wise constant with the boundary between pieces representing this changed mapping. The utility of sender $i$ is given by:
    \begin{gather}
         \sum_{\omega \in \Omega} \sum_{\bm s \in \Sig^n} \mu_0(\omega) u_i(\omega, a^*(\bm{\pi}(\bm{s}|\omega))) \bm{\pi}(\bm{s}|\omega) \\
         \sum_{\omega \in \Omega} \sum_{\bm s \in \Sig^n} \mu_0(\omega) u_i(\omega, a^*(\bm{\pi}(\bm{s}|\omega))) \prod_{i}{\pi(s_i|\omega)}
    \end{gather}
    where we note that since $u_i$ is essentially indexing a matrix, $u_i(\omega, a^*(\bm{\pi}(\bm{s}|\omega)))$ is piece-wise constant with the same boundaries as $a^*(\bm{\pi}(\bm{s}|\cdot))$. It is evident from the last expression above the utility is piece-wise bi-linear in $(\pi_1, \dots, \pi_n)$ and upon fixing $\bm{\pi}_{-i}$ is it piecewise linear in $\pi_i$. 
\end{proof}

\subsection{Proof of Proposition~\ref{prop:best-response-bi-linear}}\label{app:best_resp_bi_linear}
\begin{proof}
    Note that $\bm{\pi}_{-i}$ refer to the signaling of others and is fixed, with the optimization variables being $\pi_i$ and $y_{\bm{s}, a}$. Next, observe that if $y_{\bm{s},a} \in \{0, 1\}$ then this optimization program can be interpreted as follows. $\pi_i$ denotes the signaling scheme of influence $i$, and $y_{\bm{s},a}$ denotes whether action $a$ is the optimal action for the user upon receiving the joint signal $\bm{s}$ and computing the corresponding posterior belief. The sum constraint on $y_{\bm{s},a}$ ensure $[y_{\bm{s},1}, \dots, y_{\bm{s},|\A|}]$ is a one hot vector. To ensure that the choice of $y_{\bm{s},a}$ are indeed correct, we need to ensure incentive-compatible. That is, we require the following holds for the posterior induced by any joint signal $\bs$, and any action $a'$: $\sum_{\omega}{P(\omega|\bs)\sum_{a \in \A}\left[v(a, \omega) - v(a', \omega)\right]y_{\bm{s}, a}}$. By Bayes rule, $P(\omega|\bs) = \frac{\bpi(\bs|\omega)\mu_0(\omega)}{P(\bs)}$ and since $P(\bs)$ is constant for the whole sum, we can multiply both sides by $P(\bs)$ and arrive at the first constraint in the above LP. Since this constraint enforces the choice of user action at each posterior indeed correct, the objective simply maximizes the sender's ex-ante expected utility.

    The only difference between the presented optimization problem and the best-response sketched above is that the variables $y_{\bm{s},a}$ are now relaxed to within the continuous range $[0,1]$. We now show that this relaxation does not change the optimal solution. That is, an optimal solution to the binary-constrained setting is also an optimal solution to the relaxed continuous setting. Fix any signaling scheme $\pi_i$ and any joint signal realization $\bm{s}$. Let $a^*_{\bm{s}}$ denote a best action for the user at the posterior induced by signal realization $\bm{s}$ with the schemes $(\pi_i, \bm{\pi}_{-i})$. Then we can rewrite the incentive compatibility constraint (first constraint) as follows (for brevity, we will write $\bm{\pi}(\bm{s}|\omega) = \pi_i(s_i|\omega)*\bm{\pi}_{-i}{\bm{s}_{-i}|\omega}$:
    \begin{equation}
        \sum_{a \in \A}{y_{\bm{s},a}\left\{ \sum_{\omega \in \Omega}{\mu_0(\omega) \bm{\pi}(\bm{s}|\omega) v(w, a)} - \sum_{\omega \in \Omega}{\mu_0(\omega) \bm{\pi}(\bm{s}|\omega) v(w, a^*_{\bm{s}})} \right\}} \geq 0
    \end{equation}
    Note that the first summation term inside the inner bracket is proportional to the expected utility for action $a$ under the posterior induced by $\bm{s}$, while the second summation term is the expected utility for action $a$ under this same posterior. If $a_{\bm{s}}^*$ is the unique action that maximizes expected user utility at this posterior, then the only way this can be satisfied is by setting $y_{\bm{s},a^*_{\bm{s}}} = 1$ and 0 to all others. If multiple actions may be optimal for the receiver at this belief, then let $a^*_{\bm{s}}$ be the action among these that is most preferred by sender $i$ (if there is a tie here, pick arbitrarily). Thus, by setting the corresponding $y_{\bm{s},a^*_{\bm{s}}} = 1$ and 0 for the rest satisfies the constraint while also maximizing user utility. Thus it follows that relaxing the domain of $y_{\bm{s},a}$ does not change the optimal solution since these still occur at the endpoints $0$ or $1$, and it follows that the continuous bi-linear optimization problem above corresponds to sender $i$'s best response. 
\end{proof}

\newpage
\subsection{Proof of Theorem \ref{thm:best-response-NP-hard}}\label{app:proof_np_hard}
\label{app:best-response-NP-hard}




\begin{figure}[h!]
\centering
\begin{tikzpicture}
  \node (img)  {\includegraphics[width=0.5\linewidth]{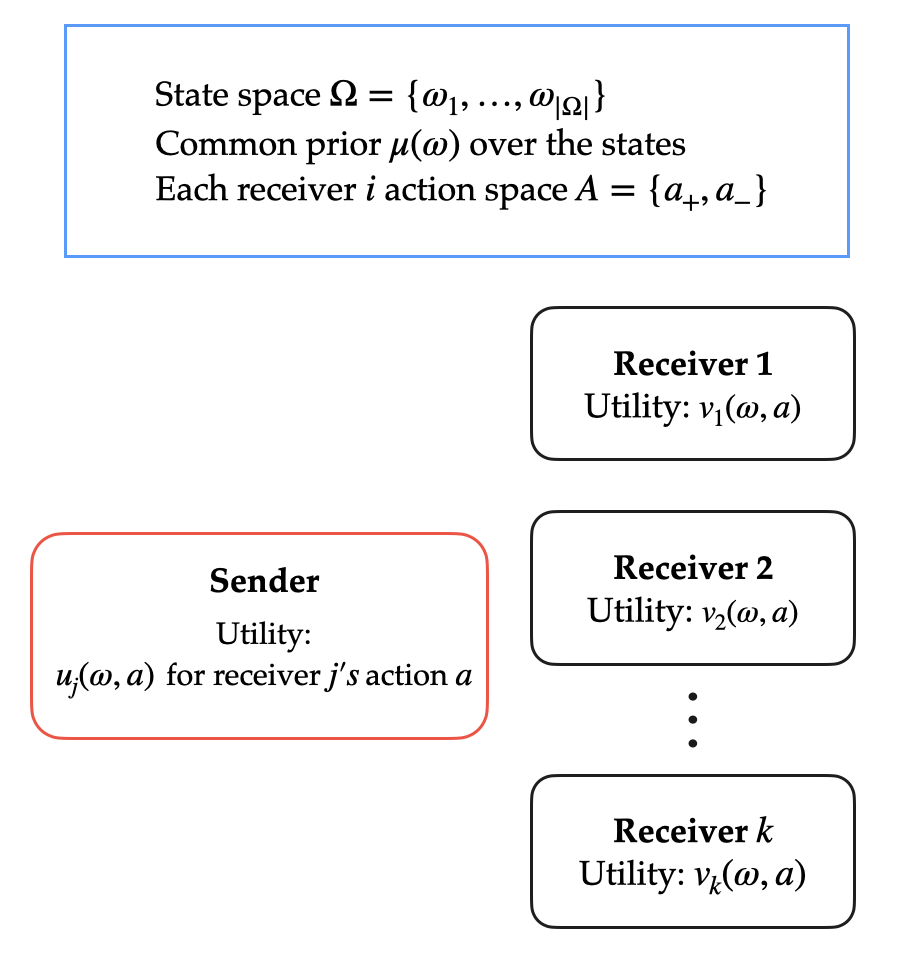}};
 \end{tikzpicture}
  \caption{Public persuasion with $k$ receivers, each with binary actions}\label{fig:np_hard_1}
\end{figure}

\begin{figure}[h!]
\centering
\begin{tikzpicture}
  \node (img)  {\includegraphics[width=0.7\linewidth]{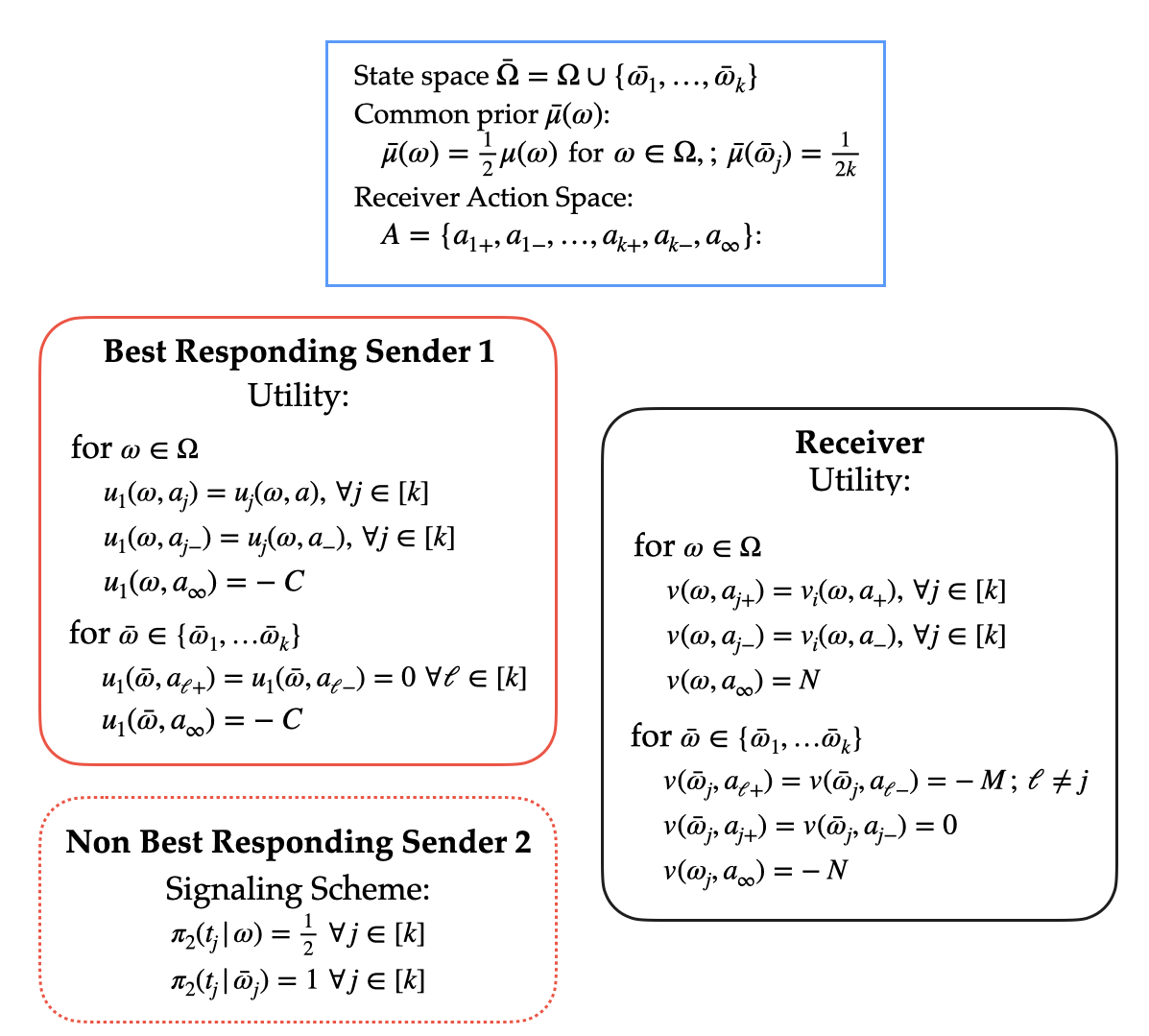}};
\end{tikzpicture}
\caption{The multi-sender persuasion problem reduced from public persuasion. $k$ additional states are added with the sole receiver having $2k+1$ actions. The receiver and best-responding sender's utility are chosen such that for all possible $k$ possible signal realization of non-best responding sender, the single receiver's plausible actions mimic that of the $k^{th}$ receiver in public persuasion.}\label{fig:np_hard_2}
\end{figure}

The proof uses a reduction from the following problem called \emph{public persuasion} \cite{dughmi_algorithmic_2017}:
\begin{definition}
A \emph{public persuasion (\textbf{Pub})} problem (with multiple receivers with binary actions) is described by tuple $\langle k, \Omega, \mu, \{ v_j(\omega), u_j(\omega) \}_{j\in [k], \omega \in \Omega} \rangle$, where: 
\begin{itemize}
    \item There are $k$ receivers denoted by $[k] = \{1, \ldots, k\}$ each having two actions $\{+, -\}$. 
    \item $\mu \in \Delta(\Omega)$ is a prior distribution of states $\omega \in \Omega$.
    \item Let $v_j(\omega, +), v_j(\omega, -) \in [0, 1]$ be the utilities of receiver $j \in [k]$ when taking actions $+$, $-$ and the state is $\omega$.  Let $v_j(\omega) = v_j(\omega, +) - v_j(\omega, -) \in [-1, 1]$ be the utility difference. 
    \item $u_j(\omega, +), u_j(\omega, -) \in [0, 1]$ are the utilities of the sender when receiver $j \in [k]$ takes action $+, -$, respectively. 
\end{itemize}
\end{definition}
Let $\pi : \Omega \to \Delta(S)$ be a signaling scheme of the sender.  Let $x_s \in \Delta(\Omega)$ denote the posterior distribution over states induced by signal $s \in S$: 
\begin{align*}
    x_s(\omega) = \frac{\mu(\omega) \pi(s | \omega)}{\pi(s)} ~ \text{ where } ~ \pi(s) = \sum_{\omega \in \Omega} \mu(\omega) \pi(s | \omega), \quad \forall \omega \in \Omega. 
\end{align*}
In the public persuasion problem, given an induced posterior $x_s$, each receiver $j \in [k]$ is willing to take action $+$ if and only if $\sum_{\omega \in \Omega} x_s(\omega) v_j(\omega) \ge 0$. 
Let $a_j^*(x_s) \in \{+, -\}$ denote the action taken by receiver $j \in [k]$ given posterior $x_s$:
\begin{align}\label{eq:receiver-public-action}
    a_j^*(x_s) = \begin{cases}
        + & \text{ if } \sum_{\omega \in \Omega} x_s(\omega) v_j(\omega) \ge 0 \\
        - & \text{ if } \sum_{\omega \in \Omega} x_s(\omega) v_j(\omega) < 0.
    \end{cases}
\end{align}
The sender's (expected) utility is the average utility obtained across all $k$ receivers:
\begin{align} \label{eq:public-sender-utility-definition}
    u^{\mathbf{Pub}}(\pi) = \sum_{s\in S} \pi(s) \frac{1}{k} \sum_{j=1}^k  \sum_{\omega \in \Omega} x_s(\omega) u_j( \omega, a_j^*(x_s)). 
\end{align}
The goal is to find a signaling scheme $\pi$ to maximize $u^{\mathbf{Pub}}(\pi)$. 
\begin{theorem}[\citet{dughmi_algorithmic_2017}]
\label{thm:public-NP-hard}
For any constant $c \in[0, \frac{1}{9}]$, it is NP-hard to solve, within additive approximation error $c$, 
public persuasion problems with $|\Omega| = k$ states and uniform prior $\mu(\omega) = \frac{1}{k}$, $\forall \omega \in [k]$.
\end{theorem}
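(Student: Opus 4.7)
The plan is to reduce from public persuasion (\textbf{Pub}) of Theorem~\ref{thm:public-NP-hard}. Starting from an instance $\langle k, \Omega, \mu, \{v_j, u_j\}_{j\in[k]}\rangle$ with uniform prior $\mu(\omega)=1/k$, I build a two-sender persuasion instance (see Figure~\ref{fig:np_hard_2}) with state space $\Omega' = \Omega \cup \{\omega_1',\dots,\omega_k'\}$, signal space $\Sig = [k]$, and action space $\A = \{a_j^+, a_j^-\}_{j\in[k]} \cup \{a_0\}$, so that $|\Omega'| = 2k$ and $|\A| = 2k+1$. Sender~$2$ is the non-responder with the fixed policy $\pi_2(j\mid\omega) = 1/k$ on $\omega\in\Omega$ and $\pi_2(j\mid\omega_j') = 1$ on $\omega_j'$; sender~$1$ plays the best-response role. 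I take the prior $\mu'(\omega) = (1-\delta)/k$ on $\omega\in\Omega$ and $\mu'(\omega_j') = \delta/k$ for a small parameter $\delta > 0$ to be tuned. The receiver's utility copies the \textbf{Pub} receiver $j$'s utility on the slice $\{a_j^+, a_j^-\}$ for states in $\Omega$, i.e. $v'(\omega, a_j^\pm) = v_j(\omega, \pm)$, and is set to $v'(\omega_j', a_j^\pm) = 0$ but $v'(\omega_j', a_{j'}^\pm) = -M$ for $j'\neq j$ and $v'(\cdot, a_0) = -M$ for a polynomially large penalty $M$. Sender~$1$'s utility copies the \textbf{Pub} sender utility on $\Omega$, $u_1(\omega, a_j^\pm) = u_j(\omega, \pm)$, and vanishes on artificial states.

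The backbone is a posterior-matching computation. For any policy $\pi_1$ and joint signal $(s, j)$ with $\pi_1(s\mid\omega_j') > 0$, the posterior $\mu_{(s,j)}$ restricted to $\Omega$ is proportional to $\pi_1(s\mid\omega)$, matching the \textbf{Pub} posterior $x_s$ exactly, and the $-M$ term on $\omega_j'$ forces the receiver's best action to lie in $\{a_j^+, a_j^-\}$ and to coincide with receiver $j$'s optimal action in \textbf{Pub}. Unrolling the ex-ante sum --- using $u_1(\omega_j',\cdot)=0$ to kill the artificial-state contribution, and the uniform $\pi_2$ together with uniform $\mu$ to match the $1/k^2$ normalization of $u^{\textbf{Pub}}$ --- then yields $\ubar_1(\pi_1) = (1-\delta)\,u^{\textbf{Pub}}(\pi_1\vert_\Omega)$ under the no-masking condition. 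Consequently, an additive-$\varepsilon$ approximation to best response, with $\varepsilon = 1/|\Omega'|^6 = 1/(2k)^6$, projects to an additive-$\bigl(\varepsilon/(1-\delta)+O(\delta)\bigr)$ approximation of \textbf{Pub}, which sits well below the $1/9$ threshold of Theorem~\ref{thm:public-NP-hard} for any $\delta = \Theta(1/k^8)$ and $k \ge 2$. The contrapositive then delivers the claimed NP-hardness.

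The delicate part is the ``masking'' case $\pi_1(s\mid\omega_j') = 0$: then $\omega_j'$ drops out of $\mu_{(s,j)}$, the $-M$ barrier vanishes, the receiver is free to pick any $a_{j'}^\pm$, and sender~$1$ could in principle harvest a payoff that is not available in \textbf{Pub}. I plan to neutralize this by restricting attention to the $\eta$-smoothed class $\tilde\pi_1 = (1-\eta)\pi_1 + \eta\cdot\mathrm{uniform}$, whose artificial-state entries $\tilde\pi_1(s\mid\omega_j')\ge \eta/k$ guarantee the posterior always contains $\omega_j'$ so that the receiver's choice is pinned to $\{a_j^+, a_j^-\}$. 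Combined with a careful quantitative accounting charging the masking ``bonus'' against the smoothing cost, this should yield the matching upper bound $\mathrm{OPT}_{\text{BR}} \le (1-\delta)\,\mathrm{OPT}_{\textbf{Pub}} + O(\delta + \eta)$, closing the loop with $\eta,\delta = \Theta(1/k^8)$ and $M$ polynomially large. Pinning down this masking-to-smoothing trade-off at the required $1/(2k)^6$ additive precision is the technical heart of the proof; once in place, the NP-hardness of \textbf{Pub} transfers directly to multi-sender best response.
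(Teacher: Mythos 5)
You have proved the wrong statement. Theorem~\ref{thm:public-NP-hard} asserts the NP-hardness of the \emph{public persuasion} problem itself; in this paper it is an imported black-box result, credited to \citet{dughmi_algorithmic_2017}, and the paper contains no proof of it. Your proposal instead constructs a reduction \emph{from} public persuasion \emph{to} the multi-sender best-response problem, and even invokes ``the $1/9$ threshold of Theorem~\ref{thm:public-NP-hard}'' as an ingredient in its final step. That argument, if completed, establishes Theorem~\ref{thm:best-response-NP-hard} (hardness of best response) \emph{conditional on} Theorem~\ref{thm:public-NP-hard}; it cannot establish Theorem~\ref{thm:public-NP-hard}, because the reduction runs in the wrong direction and presupposes the very hardness it is supposed to prove. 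To prove the target statement you would need a reduction from a canonical NP-hard problem (e.g., the independent-set-type gadgetry used by \citet{dughmi_algorithmic_2017}) \emph{to} public persuasion with $k$ receivers, uniform prior, and binary actions, together with a gap argument yielding the constant $1/9$ --- none of which appears in your sketch.

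As a secondary remark: what you have written is essentially a sketch of the paper's Appendix~\ref{app:best-response-NP-hard} proof of Theorem~\ref{thm:best-response-NP-hard} --- artificial states $\bar\omega_j$, a penalty action, sender 2 fixed to a uniform-plus-revealing scheme, and posterior matching --- though with a different device for the degenerate case (you smooth $\pi_1$ to keep the artificial states in every posterior, whereas the paper adds the action $a_\infty$ with utility $-C$ to sender 1 and bounds the mass of the offending signal set $S_\infty$ directly). If your intended target had been Theorem~\ref{thm:best-response-NP-hard}, this would be a plausible variant worth checking quantitatively; as a proof of Theorem~\ref{thm:public-NP-hard} it is not salvageable.
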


We reduce the public persuasion problem to the best-response problem in multi-sender persuasion, which will prove that the latter problem is NP-hard. 
Given the public persuasion problem $\langle k, \Omega, \mu, \{ v_j(\omega), u_j(\omega) \}_{j\in [k], \omega \in \Omega} \rangle$, we construct the following best-response problem with two senders, where we fix sender $2$'s signaling scheme $\pi_2$ and find sender 1's best response: Let $C, N> 0$ and $M \ge \frac{N}{N-1}$ be some large numbers to be chosen later. 
\begin{itemize}
    \item There are $|\Omega| + k$ states, denoted by $\bar \Omega = \Omega \cup \{\bar \omega_1, \ldots, \bar \omega_j\}$, with prior $\bar \mu(\omega) = \frac{\mu(\omega)}{2}$ for $\omega \in \Omega$ and $\bar \mu(\bar \omega_j) = \frac{1}{2k}$ for $j=1, \ldots, k$. 
    \item The (single) receiver has $2k + 1$ actions, denoted by $ A = \{a_{1+}, a_{1-}, \ldots, a_{k+}, a_{k-}\} \cup \{ a_{\infty} \}$. 
    \item The receiver's utility is: 
    \begin{itemize}
        \item For any state $\omega \in \Omega$, let
        \begin{align*}
            & v(\omega, a_{j+}) = v_j(\omega) \\
            & v(\omega, a_{j-}) = 0 \\
            & v(\omega, a_{\infty}) = N.
        \end{align*}
        In words, for any state $\omega \in \Omega$, the receiver's two actions $a_{j+}, a_{j-}$ mimics receiver $j$'s actions $+, -$ in the public persuasion problem.  And $a_{\infty}$ is very attractive to the receiver.  
        \item For any state $\bar \omega_j$, $j\in [k]$, let
        \begin{align*}
            & v(\bar \omega_j, a_{\ell+}) = v(\bar \omega_j, a_{\ell-}) = -M \text{ for $\ell \ne j$} \\
            & v(\bar \omega_j, a_{j+}) = v(\bar \omega_j, a_{j-}) = 0 \\
            & v(\bar \omega_j, a_{\infty}) = -N. 
        \end{align*}
        In words, under state $\bar \omega_j$, the receiver is extremely unwilling to take actions other than $a_{j\pm}$.  And $a_{\infty}$ is very harmful to the receiver. 
    \end{itemize}
    \item Sender 1's utility $u_1(\cdot, \cdot)$ is: 
    \begin{itemize}
        \item For any state $\omega \in \Omega$,
        \begin{align*}
            & u_1(\omega, a_{j+}) = u_j(\omega, +) \quad \forall j\in [k]\\
            & u_1(\omega, a_{j-}) = u_j(\omega, -) \quad \forall j\in[k] \\
            & u_1(\omega, a_{\infty}) = - C. 
        \end{align*}
        In words, when the receiver takes actions $a_{j\pm}$, the sender obtains the same utility as if receiver $j$ takes action $\pm$ in the public persuasion problem.  But the sender suffers a large loss if the receiver takes $a_{\infty}$. 
        \item For any state $\bar \omega_j \in \Omega$, $j\in[k]$, 
        \begin{align*}
            & u_1(\bar \omega_j, a_{\ell+}) =  u_1(\bar \omega_j, a_{\ell-}) = 0 \quad \forall \ell\in[k] \\
            & u_1(\bar \omega_j, a_{\infty}) = - C. 
        \end{align*}
    \end{itemize}
    \item Sender 2's signaling scheme $\pi_2$ is the following: it sends $k$ possible signals $\{t_1, \ldots, t_k\}$ with probability: 
    \begin{align*}
        & \pi_2(t_j | \omega) = \frac{1}{k}, ~~ \forall j\in[k], ~ \forall \omega \in \Omega. \\
        & \pi_2(t_j | \bar \omega_j) = 1, ~~ \forall j\in[k]. 
    \end{align*}
\end{itemize}

We sketch both the public persuasion framework and the equivalent multi-sender construction outlined above in Fig \ref{fig:np_hard_1} and \ref{fig:np_hard_2}.

\subsubsection{Useful Claims Regarding Receiver's Behavior}
Before proving Theorem~\ref{thm:best-response-NP-hard}, we present some useful claims regarding the receiver's taking-best-action behavior.  First, we characterize the receiver's expected utilities when taking different actions in the multi-sender persuasion problem: 
\begin{claim}\label{claim:receiver-utility}
Let $x \in \Delta(\bar \Omega)$ be a distribution on the enlarged state space $\bar \Omega$.  Suppose sender $2$ sends signal $t_j$. Then, the receiver's expected utilities of taking different actions $a \in A$, denoted by $v(x, t_j, a)$, are: 
\begin{itemize}
    \item $v(x, t_j, a_{j+}) = \frac{1}{k} \sum_{\omega \in \Omega} x(\omega) v_j(\omega)$;
    \item $v(x, t_j, a_{j-}) = 0$; 
    \item $v(x, t_j, a_{\ell+}) = \frac{1}{k} \sum_{\omega \in \Omega} x(\omega) v_\ell(\omega) - x(\bar \omega_j) M~$ for $\ell \ne j$;
    \item $v(x, t_j, a_{\ell-}) = 0~$ for $\ell \ne j$;
    \item $v(x, t_j, a_{\infty}) = N\big(\frac{1}{k} \sum_{\omega \in \Omega} x(\omega) - x(\bar \omega_j)\big)$. 
\end{itemize}
\end{claim}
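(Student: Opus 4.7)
The plan is to prove each of the five formulas by a single direct calculation, since the claim is really a bookkeeping exercise once the construction is in place. I would begin by making precise the meaning of $v(x, t_j, a)$: from the form of the right-hand sides (no normalizer appears, and the values are not a convex combination), the natural reading is the \emph{joint} weight
\begin{align*}
    v(x, t_j, a) \;=\; \sum_{\omega \in \bar\Omega} x(\omega)\, \pi_2(t_j \mid \omega)\, v(\omega, a),
\end{align*}
i.e.\ the unnormalized expected utility of action $a$ under the joint distribution on (state, sender-2 signal). This matches the numerator of the posterior expectation and differs from the true conditional expected utility only by the positive constant $\sum_{\omega'} x(\omega') \pi_2(t_j \mid \omega')$, so it is equivalent for the argmax reasoning that subsequent steps of the hardness proof will presumably rely on.

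Next I would split the sum over $\bar\Omega = \Omega \cup \{\bar\omega_1, \ldots, \bar\omega_k\}$ and plug in the constructed $\pi_2$: for any $\omega \in \Omega$, $\pi_2(t_j\mid \omega) = 1/k$; for the extra states, $\pi_2(t_j \mid \bar\omega_j) = 1$ and $\pi_2(t_j \mid \bar\omega_\ell) = 0$ for $\ell \neq j$. Hence the contributions from the states $\bar\omega_\ell$ with $\ell \neq j$ vanish, and the sum collapses to
\begin{align*}
    v(x, t_j, a) \;=\; \frac{1}{k}\sum_{\omega \in \Omega} x(\omega)\, v(\omega, a) \;+\; x(\bar\omega_j)\, v(\bar\omega_j, a).
\end{align*}

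Finally, I would verify each of the five stated formulas by substituting the utilities from the construction into the collapsed expression above. For $a = a_{j+}$ we get $v(\omega, a_{j+}) = v_j(\omega)$ on $\Omega$ and $v(\bar\omega_j, a_{j+}) = 0$, recovering $\tfrac{1}{k}\sum_\omega x(\omega) v_j(\omega)$; for $a = a_{j-}$ both pieces are zero. For $a = a_{\ell+}$ with $\ell \neq j$, we plug in $v(\omega, a_{\ell+}) = v_\ell(\omega)$ and $v(\bar\omega_j, a_{\ell+}) = -M$, giving $\tfrac{1}{k}\sum_\omega x(\omega) v_\ell(\omega) - x(\bar\omega_j) M$; the $a_{\ell-}$ case is again identically zero. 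For $a = a_\infty$, $v(\omega, a_\infty) = N$ on $\Omega$ and $v(\bar\omega_j, a_\infty) = -N$, giving $N\bigl(\tfrac{1}{k}\sum_\omega x(\omega) - x(\bar\omega_j)\bigr)$.

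There is no real obstacle here, so I do not anticipate any difficulty beyond verifying the five cases; the one point I would flag explicitly in the write-up is the convention for $v(x, t_j, a)$ (joint vs.\ conditional), so that later steps in the reduction which compare these quantities across actions (for determining the receiver's best response to signal $t_j$) correctly use them as proportional to the true posterior expected utility.
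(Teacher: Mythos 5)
Your proposal is correct and follows essentially the same route as the paper: the paper's proof also takes $v(x,t_j,a)=\sum_{\omega\in\bar\Omega}x(\omega)\pi_2(t_j\mid\omega)v(\omega,a)$ by definition, collapses it to $\frac{1}{k}\sum_{\omega\in\Omega}x(\omega)v(\omega,a)+x(\bar\omega_j)v(\bar\omega_j,a)$ using the constructed $\pi_2$, and then substitutes the utilities case by case. Your explicit flagging of the joint (unnormalized) convention for $v(x,t_j,a)$ is a reasonable clarification that matches how the paper uses these quantities downstream.
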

\begin{proof}
For any $a \in A$, by definition, 
\begin{align*}
    v(x, t_j, a) & = \sum_{\omega \in \bar \Omega} x(\omega) \pi_2(t_j | \omega) v(\omega, a) \\
    & = \sum_{\omega \in \Omega} x(\omega) \frac{1}{k} v(\omega, a) + \sum_{\ell=1}^k x(\bar \omega_\ell) \pi_2(t_j | \bar \omega_\ell) v(\bar \omega_\ell, a) \\
    & = \frac{1}{k} \sum_{\omega \in \Omega} x(\omega)  v(\omega, a) + x(\bar \omega_j) v(\bar \omega_j, a).
\end{align*}
Plugging in the definitions of utilities $v(\omega, a)$ and $v(\bar \omega_j, a)$ for different $a$ proves the claim. 
\end{proof}

As corollaries of the above claim, we have some guarantees when the receiver takes a best action: 
\begin{claim}\label{claim:not-a-infty-probability}
Given belief $x \in \Delta(\bar \Omega)$ and sender 2's signal $t_j$, if the receiver does not take $a_{\infty}$ as the best action, then it must be $\frac{1}{k} \sum_{\omega \in \Omega} x(\omega) \le \frac{N}{N-1} x(\bar \omega_j)$. 
\end{claim}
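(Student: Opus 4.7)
The plan is to directly exploit the explicit expressions for the receiver's expected utilities worked out in Claim \ref{claim:receiver-utility}. If $a_\infty$ is not chosen, then some alternative action must yield at least as much expected utility as $a_\infty$; so the strategy is to upper bound the best alternative and compare it with $v(x, t_j, a_\infty)$.

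First I would introduce the shorthand $p := \frac{1}{k}\sum_{\omega \in \Omega} x(\omega)$ and $q := x(\bar \omega_j)$, so that Claim \ref{claim:receiver-utility} gives $v(x, t_j, a_\infty) = N(p - q)$. Next, I would inspect each of the $2k$ non-$a_\infty$ actions in turn and show that each of their expected utilities is bounded above by $p$: for $a_{j+}$ this uses $v_j(\omega) \le 1$; for $a_{\ell+}$ with $\ell \ne j$ the same bound applies and we additionally subtract the nonnegative quantity $qM$; for $a_{j-}$ and $a_{\ell-}$ the utility is $0 \le p$ since $p \ge 0$. Thus the maximum receiver utility over all actions in $A \setminus \{a_\infty\}$ is at most $p$.

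Finally, I would use the hypothesis that $a_\infty$ is not a best response. This means there exists an action whose utility is at least $v(x, t_j, a_\infty) = N(p-q)$, so combining with the previous step yields $p \ge N(p-q)$. Rearranging gives $(N-1) p \le N q$, i.e., $p \le \tfrac{N}{N-1} q$, which is the claimed inequality.

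I do not expect any serious obstacle here: the result is essentially a direct consequence of Claim \ref{claim:receiver-utility} together with the simple arithmetic rearrangement. The only point worth verifying carefully is that the bound $v(x, t_j, a) \le p$ is valid uniformly across all non-$a_\infty$ actions — in particular, that the $- qM$ term for $\ell \ne j$ indeed only makes things smaller (which holds since $M, q \ge 0$).
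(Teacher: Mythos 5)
Your proof is correct and is essentially the paper's argument: both rest on the two facts that $v(x,t_j,a_\infty)=N(p-q)$ and that every other action has expected utility at most $p$, followed by the same arithmetic rearrangement. The only cosmetic difference is that you argue the contrapositive directly while the paper phrases it as a contradiction (assuming $p>\tfrac{N}{N-1}q$ and deducing that $a_\infty$ would be strictly optimal).
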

\begin{proof}
If $\frac{1}{k} \sum_{\omega \in \Omega} x(\omega) > \frac{N}{N-1} x(\bar \omega_j)$, then by Claim~\ref{claim:receiver-utility}, the receiver's utility of taking action $a_\infty$ is 
\begin{align*}
    v(x, t_j, a_\infty) = N\Big(\frac{1}{k} \sum_{\omega \in \Omega} x(\omega) - x(\bar \omega_j)\Big) > N\Big(\frac{1}{k} \sum_{\omega \in \Omega} x(\omega) - \frac{N-1}{N}\frac{1}{k} \sum_{\omega \in \Omega} x(\omega)\Big) = \frac{1}{k} \sum_{\omega \in \Omega} x(\omega) \ge v(x, t_j, a) 
\end{align*}
for any other actions $a \ne a_\infty$.  So, the receiver should take $a_\infty$, a contradiction. 
\end{proof}

\begin{claim}\label{claim:receiver-only-aj+-}
Given belief $x \in \Delta(\bar \Omega)$ and sender 2's signal $t_j$, if the receiver is unwilling to take $a_\infty$, then the receiver's utility of taking $a_{\ell\pm}$ for $\ell \ne j$ is $v(x, t_j, a_{\ell\pm}) \le 0$.  So, we can assume that the receiver will take $a_{j+}$ or $a_{j-}$. (Tie-breaking does not affect our conclusion.) 
\end{claim}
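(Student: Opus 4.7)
The plan is to invoke Claim~\ref{claim:receiver-utility} to get explicit formulas for $v(x, t_j, a_{\ell\pm})$ when $\ell \ne j$, then combine the large-$M$ penalty baked into the construction with the mass-ratio bound of Claim~\ref{claim:not-a-infty-probability} to show both quantities are non-positive. Finally I would note that $v(x, t_j, a_{j-}) = 0$, so any $a_{\ell\pm}$ for $\ell\ne j$ either loses to $a_{j+}$ or ties with $a_{j-}$, making the tie-breaking benign.

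The case $a_{\ell-}$ is immediate: Claim~\ref{claim:receiver-utility} gives $v(x, t_j, a_{\ell-}) = 0$ directly, so nothing is to be done. For $a_{\ell+}$ with $\ell\ne j$, Claim~\ref{claim:receiver-utility} yields $v(x, t_j, a_{\ell+}) = \frac{1}{k}\sum_{\omega\in\Omega} x(\omega) v_\ell(\omega) - x(\bar\omega_j) M$. I would upper bound the first term using $v_\ell(\omega) \le 1$ to get $\frac{1}{k}\sum_{\omega\in\Omega} x(\omega)$, then apply Claim~\ref{claim:not-a-infty-probability} (which applies because the receiver is assumed not to take $a_\infty$) to replace this by $\frac{N}{N-1}\, x(\bar\omega_j)$. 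The resulting bound is $x(\bar\omega_j)\bigl(\frac{N}{N-1} - M\bigr)$, which is non-positive by the chosen parameter relation $M \ge \frac{N}{N-1}$.

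To finish, since $v(x, t_j, a_{j-}) = 0$ (read off from Claim~\ref{claim:receiver-utility}) and every $a_{\ell\pm}$ with $\ell\ne j$ is bounded above by $0$, the receiver's best-response set always includes some element of $\{a_{j+}, a_{j-}\}$; any tie with an outside action $a_{\ell\pm}$ can be resolved in favor of $\{a_{j+}, a_{j-}\}$ without changing the receiver's utility, and therefore without affecting the downstream accounting of the sender's utility. There is no deep obstacle here: the claim is essentially a sign computation. The step to be most careful about is correctly chaining Claim~\ref{claim:not-a-infty-probability} with the $M \ge \frac{N}{N-1}$ condition, since this is where the whole construction's choice of parameters does its work, and the same pattern will recur when bounding the receiver's preference for $a_{j+}$ versus $a_{j-}$ in the remainder of the NP-hardness reduction.
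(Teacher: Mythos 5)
Your proposal is correct and follows essentially the same route as the paper: apply Claim~\ref{claim:receiver-utility} for the explicit utilities, invoke Claim~\ref{claim:not-a-infty-probability} to bound $\frac{1}{k}\sum_{\omega\in\Omega}x(\omega)$ by $\frac{N}{N-1}x(\bar\omega_j)$, and conclude $v(x,t_j,a_{\ell+})\le x(\bar\omega_j)\bigl(\frac{N}{N-1}-M\bigr)\le 0$ from $M\ge\frac{N}{N-1}$. If anything, your ordering (first bound $v_\ell(\omega)\le 1$, then apply the mass-ratio inequality) is slightly cleaner than the paper's written chain, which multiplies the mass inequality by $v_\ell(\omega)$ before using $v_\ell(\omega)\le 1$.
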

\begin{proof}
According to Claim~\ref{claim:not-a-infty-probability}, if the receiver is unwilling to take $a_\infty$, then $\frac{1}{k} \sum_{\omega \in \Omega} x(\omega) \le \frac{N}{N-1} x(\bar \omega_j)$.  This implies that the receiver's utility of taking $a_{\ell+}$ is, by Claim~\ref{claim:receiver-utility}
\begin{align*}
    v(x, t_j, a_{\ell+}) = \frac{1}{k} \sum_{\omega \in \Omega} x(\omega) v_\ell(\omega) - x(\bar \omega_j) M \le \frac{N}{N-1} x(\bar \omega_j) v_\ell(\omega) - x(\bar \omega_j) M \le x(\bar \omega_j) \Big( \frac{N}{N-1} - M \Big) \le 0, 
\end{align*}
under the assumption of $v_\ell(\omega) \le 1$ and $M \ge \frac{N}{N-1}$. 
\end{proof}

\begin{claim}\label{claim:same-action}
Let $x \in \Delta(\bar \Omega)$ be a belief on $\bar \Omega$.  And let $\tilde x \in \Delta(\Omega)$ be the conditional belief on $\Omega$: $\tilde x(\omega) = \frac{x(\omega)}{\sum_{\omega \in \Omega} x(\omega)}$, $\forall \omega \in \Omega$.  Fix any $j\in[k]$.  Suppose the receiver does not take action $a_\infty$ under signal $t_j$ in the multi-sender persuasion problem.  Then, the receiver takes action $a_{j+}$ (and $a_{j-}$) if and only if the receiver $j$ in the public persuasion problem takes action $+$ (and $-$) under belief $\tilde x$. 
\end{claim}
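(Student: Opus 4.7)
The plan is to reduce the receiver's choice in the multi-sender problem to a direct comparison between $a_{j+}$ and $a_{j-}$, and then show that this comparison is equivalent to the receiver-$j$ decision rule in the public persuasion problem after renormalization.

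First I would invoke Claim~\ref{claim:receiver-only-aj+-}: since the receiver is assumed not to take $a_\infty$ under signal $t_j$, all actions $a_{\ell\pm}$ with $\ell \ne j$ yield non-positive utility, while $a_{j-}$ yields exactly $0$. So the receiver's best action (up to tie-breaking) lies in $\{a_{j+}, a_{j-}\}$. Next I would plug into Claim~\ref{claim:receiver-utility} to get $v(x, t_j, a_{j+}) = \frac{1}{k}\sum_{\omega \in \Omega} x(\omega) v_j(\omega)$ and $v(x, t_j, a_{j-}) = 0$. Hence the receiver takes $a_{j+}$ if and only if $\sum_{\omega \in \Omega} x(\omega) v_j(\omega) \ge 0$, and takes $a_{j-}$ otherwise.

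The final step is to relate this inequality to the public persuasion decision rule~\eqref{eq:receiver-public-action}. Let $Z := \sum_{\omega \in \Omega} x(\omega)$. Assuming $Z > 0$, the conditional belief is well defined and satisfies
\begin{align*}
\sum_{\omega \in \Omega} \tilde x(\omega)\, v_j(\omega) \;=\; \frac{1}{Z}\sum_{\omega \in \Omega} x(\omega)\, v_j(\omega),
\end{align*}
so the sign condition $\sum_{\omega} x(\omega) v_j(\omega) \ge 0$ is equivalent to $\sum_{\omega} \tilde x(\omega) v_j(\omega) \ge 0$, which is exactly the condition for receiver $j$ to choose $+$ under belief $\tilde x$ in the public persuasion problem. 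The corresponding ``$<0$'' case gives the $a_{j-}$ versus $-$ equivalence.

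The only subtlety, rather than a genuine obstacle, is the degenerate case $Z = 0$, in which $\tilde x$ is undefined and $v(x, t_j, a_{j+}) = v(x, t_j, a_{j-}) = 0$, making the receiver indifferent. This is a pure tie-breaking situation, and as the statement of the claim notes, tie-breaking does not affect the conclusion; I would simply remark that in any execution of the reduction we can restrict attention to posteriors with $Z > 0$, which is guaranteed whenever sender $1$'s signaling scheme puts nonzero probability on some $\omega \in \Omega$ under the corresponding signal realization.
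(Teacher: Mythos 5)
Your proposal is correct and follows essentially the same route as the paper: restrict to $\{a_{j+},a_{j-}\}$ via Claim~\ref{claim:receiver-only-aj+-}, read off the utilities from Claim~\ref{claim:receiver-utility}, and observe that dividing by $\sum_{\omega\in\Omega}x(\omega)$ preserves the sign of the decision condition, matching~(\ref{eq:receiver-public-action}). Your explicit handling of the degenerate case $\sum_{\omega\in\Omega}x(\omega)=0$ is a small extra care the paper omits, but it does not change the argument.
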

\begin{proof}
By Claim~\ref{claim:receiver-only-aj+-}, the receiver in the multi-sender problem must take action $a_{j+}$ or $a_{j-}$ if not taking $a_\infty$.  The receiver is willing to take $a_{j+}$ if and only if, by Claim~\ref{claim:receiver-utility},
\begin{align*}
    \frac{1}{k} \sum_{\omega \in \Omega} x(\omega) v_j(\omega) \ge 0 ~~ \iff ~~ \sum_{\omega \in \Omega} \frac{x(\omega)}{\sum_{\omega \in \Omega} x(\omega) } v_j(\omega) \ge 0 ~~ \iff ~~ \sum_{\omega \in \Omega} \tilde x(\omega) v_j(\omega) \ge 0, 
\end{align*}
which means that the receiver $j$ in the public persuasion problem is willing to take action $+$ under belief $\tilde x$ (see (\ref{eq:receiver-public-action})).
\end{proof}

\subsubsection{Proof of Theorem \ref{thm:best-response-NP-hard}}
Consider a signaling scheme $\pi_1 : \bar \Omega \to \Delta(S)$ of sender $1$, where $S$ is the signal space.  For a signal $s \in S$, let $x_s \in \Delta(\bar \Omega)$ be the posterior distribution over $\bar \Omega$ given $s$. And let $\pi_1(s) = \sum_{\omega \in \bar \Omega} \bar \mu(\omega) \pi_1(s|\omega)$ be the probability of sender 1 sending signal $s$.
A valid signaling scheme $\pi_1$ must satisfy the following Bayesian plausibility condition: 
\begin{equation} \label{eq:Bayes-plausibility-pi-1}
    \sum_{s \in S} \pi_1(s) x_s = \overline \mu ~~ \iff ~~ 
    \begin{cases}
        \sum_{s \in S} \pi_1(s)x_s(\omega)  = \overline \mu(\omega) = \frac{\mu(\omega)}{2} & \text{ for } \omega \in \Omega \\
        \sum_{s \in S} \pi_1(s)x_s(\bar \omega_j)  = \overline \mu(\bar \omega_j) = \frac{1}{2k} & \text{ for } j\in [k]
    \end{cases}. 
\end{equation}
Let $a^*(x_s, t_j)$ be the best action that the receiver will take when the posterior induced by sender 1 is $x_s$ (namely, sender 1 sends signal $s$) and sender 2 sends signal $t_j$.  According to Claim~\ref{claim:receiver-only-aj+-}, we have
\begin{equation} \label{eq:a^*-range}
    a^*(x_s, t_j) \in \{a_\infty, a_{j+}, a_{j-} \}. 
\end{equation}
Let $S_{\infty}$ be the set of signals of sender 1 for which the receiver will take action $a_\infty$ given some signal $t_j$ from sender 2: 
\begin{align*}
    S_{\infty} = \Big\{ s \in S ~\Big|~ a^*(x_s, t_j) = a_\infty \text{ for some } j\in [k] \Big\}. 
\end{align*}
Since $a_\infty$ is very harmful to sender 1 (causing utility $-C$), we show that the total probability of $S_{\infty}$ cannot be too large. 
\begin{lemma}\label{lem:S-infty-small}
If sender 1's expected utility under signaling scheme $\pi_1$ is $\ge 0$, then $\pi_1(S_{\infty}) = \sum_{s\in S_\infty} \pi_1(s) \le \frac{2k}{C} + \frac{1}{2N}$. 
\end{lemma}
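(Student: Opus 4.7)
\emph{Plan.} My strategy is to first control the total probability that the receiver selects $a_\infty$, and then translate this into a bound on $\pi_1(S_\infty)$.

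\emph{Step 1: Bounding the $a_\infty$ probability.} The key observation is that whenever the joint signal $(s, t_j)$ triggers $a^*(x_s, t_j) = a_\infty$, sender $1$ incurs utility $-C$ regardless of $\omega$. When $a^* \ne a_\infty$, Claim~\ref{claim:receiver-only-aj+-} forces $a^* \in \{a_{j+}, a_{j-}\}$, so sender 1's utility is in $[0,1]$ for $\omega \in \Omega$ and equals $0$ for $\omega = \bar\omega_\ell$ (by the construction of $u_1$). Decomposing $\bar u_1$ by joint signals and using $\sum_{\omega \in \Omega} \bar\mu(\omega) = \tfrac{1}{2}$, I would obtain
\[
\bar u_1 \;\le\; \tfrac{1}{2} \;-\; C\, P_\infty, \qquad P_\infty := \sum_{s \in S_\infty}\pi_1(s)\, B(s),
\]
where $B(s) := \sum_{j \in \mathrm{Bad}(s)} \Pr[t_j \mid s]$. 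The hypothesis $\bar u_1 \ge 0$ then immediately yields $P_\infty \le 1/(2C)$.

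\emph{Step 2: From $P_\infty$ to $\pi_1(S_\infty)$.} By Claim~\ref{claim:receiver-utility}, $\Pr[t_j \mid s] = p(s)/k + x_s(\bar\omega_j) \ge p(s)/k$, with $p(s) := \sum_{\omega \in \Omega} x_s(\omega)$. Since $|\mathrm{Bad}(s)| \ge 1$ for every $s \in S_\infty$, I get $B(s) \ge p(s)/k$, and hence
\[
\sum_{s \in S_\infty} \pi_1(s)\, p(s) \;\le\; k \cdot P_\infty \;\le\; \tfrac{k}{2C}.
\]
Partitioning $S_\infty$ at the threshold $p(s) \ge \tfrac{1}{4}$ and applying Markov to this inequality gives $\pi_1\!\left(\{s \in S_\infty : p(s) \ge \tfrac14\}\right) \le \tfrac{k/(2C)}{1/4} = \tfrac{2k}{C}$, matching the first term of the claimed bound.

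\emph{Main obstacle.} The hard part is the ``light'' subset $\{s \in S_\infty : p(s) < \tfrac{1}{4}\}$, for which the Markov argument above is too weak. Here I expect to exploit the parameter $N$: for any such $s$, the existence of some $j \in \mathrm{Bad}(s)$ with $x_s(\bar\omega_j) \le p(s)/k$, combined with the strict comparison $v(x_s, t_j, a_\infty) \ge v(x_s, t_j, a_{j+})$, gives a margin of order $\Theta(1/N)$ in the constraint on $x_s(\bar\omega_j)$. Combining these pointwise bounds with the Bayesian-plausibility identities $\sum_s \pi_1(s) x_s(\bar\omega_j) = 1/(2k)$ (summed over $j$) should amortize to the remaining $\tfrac{1}{2N}$ contribution. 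Adding this to the $\tfrac{2k}{C}$ bound from Step 2 completes the proof. The delicate tracking of the $1/N$ margin for rare (low-$p(s)$) signals is the technical crux; a naive union bound over $j$ or a crude Markov estimate on $1-p(s)$ loses the $N$-dependence entirely.
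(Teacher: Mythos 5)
Your Steps 1--2 are sound and reproduce the first half of the paper's argument: decomposing $u_1(\pi_1)$ over joint signals, charging $-C$ to the $a_\infty$ events, and using $u_1(\pi_1)\ge 0$ together with the identity $\sum_{j}\bigl[\tfrac1k\sum_{\omega\in\Omega}x_s(\omega)+x_s(\bar\omega_j)\bigr]=1$ to conclude $\sum_{s\in S_\infty}\pi_1(s)\,p(s)\le O(k/C)$, where $p(s)=\sum_{\omega\in\Omega}x_s(\omega)$. The Markov bound on the heavy set $\{s\in S_\infty: p(s)\ge\tfrac14\}$ then follows. The genuine gap is the light set, and the mechanism you sketch for it cannot close it. Every constraint you invoke there --- $x_s(\bar\omega_{j(s)})\le p(s)/k$ for the bad index, the comparison of $v(x_s,t_j,a_\infty)$ against $v(x_s,t_j,a_{j\pm})$, and Bayesian plausibility summed over $j$ --- is a statement about the posteriors of signals \emph{inside} $S_\infty$, and none of these controls $\pi_1(s)$ for a light signal. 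Concretely: a signal with $p(s)=\eps\to 0$ that lies in $S_\infty$ inflicts damage at most $2C\eps\,\pi_1(s)\to 0$, so the hypothesis $u_1(\pi_1)\ge 0$ places no direct restriction on its probability; its bad coordinate $x_s(\bar\omega_{j(s)})\le\eps/k$ is already negligible, so the ``$\Theta(1/N)$ margin'' constrains a quantity that is tiny anyway; and summing Bayesian plausibility over $j$ only yields $\sum_s\pi_1(s)(1-p(s))=\tfrac12$, which bounds the light set's probability by roughly $\tfrac{1}{2(1-1/4)}=\tfrac23$, nowhere near $\tfrac{1}{2N}$.

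The missing ingredient is a constraint on the \emph{complement} $S\setminus S_\infty$, which is how the paper finishes. For $s\notin S_\infty$ the receiver declines $a_\infty$ under every $t_j$, and Claim~\ref{claim:not-a-infty-probability} then forces $x_s(\bar\omega_j)\ge\tfrac{N-1}{N}\tfrac{p(s)}{k}$ for all $j$; plugging this into $p(s)+\sum_j x_s(\bar\omega_j)=1$ gives the pointwise cap $p(s)\le\tfrac{1}{2-1/N}$. Since total $\Omega$-mass is $\tfrac12$ and $S_\infty$ carries at most $k/C$ of it, $S\setminus S_\infty$ must carry at least $\tfrac12-\tfrac{k}{C}$, whence $\pi_1(S\setminus S_\infty)\ge\bigl(2-\tfrac1N\bigr)\bigl(\tfrac12-\tfrac{k}{C}\bigr)\ge 1-\tfrac{2k}{C}-\tfrac{1}{2N}$ and the lemma follows for all of $S_\infty$ at once (no heavy/light split is needed). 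In other words, light $S_\infty$ signals are rare not because of anything intrinsic to them, but because any probability they absorb is probability that the complement cannot supply, and the complement is forced to be nearly all of the probability by the $\Omega$-mass accounting plus the cap $p(s)\le\tfrac{1}{2-1/N}$. Your write-up never uses the hypothesis that $a_\infty$ is declined for \emph{all} $j$ on $S\setminus S_\infty$, which is where the $\tfrac{1}{2N}$ term actually comes from.
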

\begin{proof}
Sender 1's expected utility is (fixing sender 2's scheme), 
\begin{align}
    u_1(\pi_1) & = \sum_{s\in S} \pi_1(s) \Big[ \sum_{\omega \in \bar \Omega} x_s(\omega) \sum_{j=1}^k \pi_2(t_j | \omega) u_1(\omega, a^*(x_s, t_j)) \Big]  \nonumber \\
    & = \sum_{s\in S} \pi_1(s) \Big[ \sum_{\omega \in \Omega} x_s(\omega) \sum_{j=1}^k \frac{1}{k} u_1(\omega, a^*(x_s, t_j)) + \sum_{j=1}^k x_s(\bar \omega_j) u_1(\bar \omega_j, a^*(x_s, t_j))  \Big]  \nonumber \\
    & = \sum_{s\in S} \pi_1(s) \sum_{j=1}^k \Big[ \frac{1}{k} \sum_{\omega \in \Omega} x_s(\omega) u_1(\omega, a^*(x_s, t_j)) + x_s(\bar \omega_j) u_1(\bar \omega_j, a^*(x_s, t_j))  \Big]  \label{eq:sender-1-utility} \\
    & = \sum_{s\in S_\infty} \pi_1(s) \sum_{j=1}^k \Big[ \frac{1}{k} \sum_{\omega \in \Omega} x_s(\omega) u_1(\omega, a^*(x_s, t_j)) + x_s(\bar \omega_j) u_1(\bar \omega_j, a^*(x_s, t_j))  \Big]  \nonumber \\
    & \quad\quad + \sum_{s\in S\setminus S_\infty} \pi_1(s) \sum_{j=1}^k \Big[ \frac{1}{k} \sum_{\omega \in \Omega} x_s(\omega) u_1(\omega, a^*(x_s, t_j)) + x_s(\bar \omega_j) u_1(\bar \omega_j, a^*(x_s, t_j))  \Big].  \nonumber 
\end{align}
Since the utility $u_1(\omega, a)$ is always $\le 1$, and when receiver takes action $a_\infty$ sender 1 gets utility $-C$,  
\begin{align*}
    u_1(\pi_1) & \le \sum_{s\in S_\infty} \pi_1(s) \sum_{j: a^*(x_s, t_j) = a_\infty} \Big[ \frac{1}{k} \sum_{\omega \in \Omega} x_s(\omega)  (-C) + x_s(\bar \omega_j) (-C)  \Big] \\
    & \quad \quad + \sum_{s\in S_\infty} \pi_1(s) \sum_{j=1}^k \Big[ \frac{1}{k} \sum_{\omega \in \Omega} x_s(\omega) \cdot 1 + x_s(\bar \omega_j) \cdot 1 \Big]  \\
    & \quad \quad + \sum_{s\in S\setminus S_\infty} \pi_1(s) \sum_{j=1}^k \Big[ \frac{1}{k} \sum_{\omega \in \Omega} x_s(\omega) \cdot 1 + x_s(\bar \omega_j) \cdot 1 \Big] \\ 
    & \le -C \sum_{s\in S_\infty} \pi_1(s) \Big[ \frac{1}{k} \sum_{\omega \in \Omega} x_s(\omega) + x_s(\bar \omega_j)  \Big] ~~ + ~~ \underbrace{\sum_{s\in S} \pi_1(s) \sum_{j=1}^k \Big[ \frac{1}{k} \sum_{\omega \in \Omega} x_s(\omega) + x_s(\bar \omega_j) \Big]}_{=1 \text{ by (\ref{eq:x-summation=1})}}. 
\end{align*}
Using $u_1(\pi_1) \ge 0$ and rearranging, we get $\sum_{s\in S_\infty} \pi_1(s) \Big[ \frac{1}{k} \sum_{\omega \in \Omega} x_s(\omega) + x_s(\bar \omega_j) \Big] \le \frac{1}{C}$, which implies
\begin{align*}
    \sum_{s\in S_\infty} \pi_1(s) \Big[ \frac{1}{k} \sum_{\omega \in \Omega} x_s(\omega) \Big] \le \frac{1}{C} ~~ \implies ~~ \sum_{s\in S_\infty} \pi_1(s) \sum_{\omega \in \Omega} x_s(\omega) \le \frac{k}{C}. 
\end{align*}
By the Bayesian plausibility condition (\ref{eq:Bayes-plausibility-pi-1}), we have
\begin{align*}
    \sum_{s\in S} \pi_1(s) \sum_{\omega \in \Omega} x_s(\omega) =
 \sum_{\omega \in \Omega} \sum_{s\in S} \pi_1(s) x_s(\omega) = \sum_{\omega \in \Omega} \overline{\mu}(\omega) = \frac{1}{2}. 
\end{align*}
So, 
\begin{align}
    \sum_{s\in S\setminus S_\infty} \pi_1(s) \sum_{\omega \in \Omega} x_s(\omega) = \frac{1}{2} - \sum_{s\in S_\infty} \pi_1(s) \sum_{\omega \in \Omega} x_s(\omega) \ge \frac{1}{2} - \frac{k}{C}.  \label{eq:two}
\end{align}
For any signal $s \in S\setminus S_\infty$, the receiver does not take $a_\infty$ under any $t_j$, which by Claim~\ref{claim:not-a-infty-probability} implies
\begin{align*}
    \frac{1}{k} \sum_{\omega \in \Omega} x_s(\omega) \le \frac{N}{N-1} x_s(\bar \omega_j) ~~ \implies ~~ x_s(\bar \omega_j) 
 \le \frac{N-1}{N} \frac{1}{k} \sum_{\omega \in \Omega} x_s(\omega)
\end{align*}
for all $j\in [k]$.  Moreover, because 
\begin{align} \label{eq:x-summation=1}
    \sum_{j=1}^k \Big[ \frac{1}{k} \sum_{\omega \in \Omega} x_s(\omega) + x_s(\bar \omega_j) \Big] = \sum_{\omega \in \bar \Omega} x(\omega) \sum_{j=1}^k \pi_2(t_j| \omega) = 1,
\end{align}
we have for any $s \in S\setminus S_\infty$, 
\begin{align} 
    1 \ge \sum_{j=1}^k \Big[ \frac{1}{k} \sum_{\omega \in \Omega} x_s(\omega) + \frac{N-1}{N} \frac{1}{k} \sum_{\omega \in \Omega} x_s(\omega) \Big] = \Big(2 - \frac{1}{N}\Big)\sum_{\omega \in \Omega} x_s(\omega)  ~~ \implies ~~ \sum_{\omega \in \Omega} x_s(\omega) \le \frac{1}{2 - \frac{1}{N}}.  \label{eq:three}
\end{align}
From (\ref{eq:two}) and (\ref{eq:three}) we get
\begin{align*}
    \frac{1}{2} - \frac{k}{C} \le \sum_{s\in S\setminus S_\infty} \pi_1(s) \frac{1}{2 - \frac{1}{N}} ~~ \implies ~~ \sum_{s\in S\setminus S_\infty} \pi_1(s) \ge 1 - \frac{2k}{C} - \frac{1}{2N},
\end{align*}
which proves the lemma since $\sum_{s\in S} \pi_1(s) = 1$. 
\end{proof}

We give another characterization of $\pi_1$: for most of the signals in $S\setminus S_\infty$, the total posterior probability for states in $\Omega$, $x_s(\Omega) = \sum_{\omega \in \Omega} x_s(\omega)$, should be close to $\frac{1}{2}$. Inequality (\ref{eq:three}) has shown an upper bound $\sum_{\omega \in \Omega} x_s(\omega) \le \frac{1}{2-\frac{1}{N}}$.  The following lemma gives a lower bound: 
\begin{lemma}\label{lem:S_<}
Fix any $\Delta > 0$.  Let
\begin{align*}
    S_{\ge} = \Big\{ s\in S\setminus S_\infty ~\Big|~ \frac{1}{2-\frac{1}{N}} \ge \sum_{\omega \in \Omega} x_s(\omega) \ge \frac{1}{2} - \Delta \Big\}, \quad \quad S_{<} = \Big\{ s\in S\setminus S_\infty ~\Big|~ \sum_{\omega \in \Omega} x_s(\omega) < \frac{1}{2} - \Delta \Big\}.
\end{align*}
(Note that $S_{\ge} \cup S_< = S\setminus S_\infty$ by (\ref{eq:three})).
We have $\pi_1(S_\ge)$ is large while $\pi_1(S_<)$ is small: 
\begin{align*}
    & \pi_1(S_\ge) = \sum_{s\in S_\ge} \pi_1(s) \ge 1 - \frac{2k}{C} - \frac{1}{2N} - \frac{1}{\Delta}\Big( \frac{1}{4N-2} + \frac{k}{C} \Big), \\
    & \pi_1(S_<) = \sum_{s\in S_<} \pi_1(s) \le \frac{1}{\Delta}\Big( \frac{1}{4N-2} + \frac{k}{C} \Big). 
\end{align*}
\end{lemma}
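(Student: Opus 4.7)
The plan is to first establish the upper bound on $\pi_1(S_<)$, and then derive the lower bound on $\pi_1(S_\ge)$ as a direct corollary using Lemma~\ref{lem:S-infty-small}. The key tool is the Bayesian plausibility condition~(\ref{eq:Bayes-plausibility-pi-1}), which gives an exact value for the average posterior mass on $\Omega$: summing the first line of (\ref{eq:Bayes-plausibility-pi-1}) over $\omega \in \Omega$ yields
\begin{align*}
    \sum_{s\in S} \pi_1(s)\, x_s(\Omega) = \tfrac{1}{2}, \qquad \text{where } x_s(\Omega) := \sum_{\omega \in \Omega} x_s(\omega).
\end{align*}
From the proof of Lemma~\ref{lem:S-infty-small} we already have $\sum_{s\in S_\infty} \pi_1(s)\, x_s(\Omega) \le \frac{k}{C}$, so subtracting gives $\sum_{s\in S\setminus S_\infty} \pi_1(s)\, x_s(\Omega) \ge \frac{1}{2} - \frac{k}{C}$.

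Next I would upper-bound this same sum by splitting $S\setminus S_\infty = S_\ge \cup S_<$ and using the pointwise bounds defining the two sets: on $S_\ge$, $x_s(\Omega) \le \frac{1}{2-1/N} = \frac{1}{2} + \frac{1}{4N-2}$, and on $S_<$, $x_s(\Omega) < \frac{1}{2} - \Delta$. Combining the two bounds yields
\begin{align*}
    \pi_1(S_\ge)\Bigl(\tfrac{1}{2} + \tfrac{1}{4N-2}\Bigr) + \pi_1(S_<)\Bigl(\tfrac{1}{2} - \Delta\Bigr) \;>\; \tfrac{1}{2} - \tfrac{k}{C}.
\end{align*}
Using $\pi_1(S_\ge) + \pi_1(S_<) \le 1$ to replace the $\frac{1}{2}$-weighted part by $\frac{1}{2}$, the inequality simplifies to $\pi_1(S_\ge)\cdot\frac{1}{4N-2} - \pi_1(S_<)\cdot\Delta > -\frac{k}{C}$, and since $\pi_1(S_\ge) \le 1$ this gives
\begin{align*}
    \pi_1(S_<) \;<\; \tfrac{1}{\Delta}\Bigl(\tfrac{1}{4N-2} + \tfrac{k}{C}\Bigr),
\end{align*}
which is the first claimed bound.

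Finally, the bound on $\pi_1(S_\ge)$ is immediate: since $S_\ge$ and $S_<$ partition $S\setminus S_\infty$, we have $\pi_1(S_\ge) = 1 - \pi_1(S_\infty) - \pi_1(S_<)$, and Lemma~\ref{lem:S-infty-small} gives $\pi_1(S_\infty) \le \frac{2k}{C} + \frac{1}{2N}$, so plugging in the first bound yields the second. There is no real conceptual obstacle here; the only mildly subtle point is the algebraic reshuffling after using $\pi_1(S_\ge) + \pi_1(S_<) \le 1$ and then $\pi_1(S_\ge) \le 1$ in two different places, which is what lets the $\Delta$ in the denominator of the final answer appear alone rather than as $\Delta + \frac{1}{4N-2}$. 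I would be careful to present those two uses of $\pi_1(\cdot) \le 1$ clearly to make the derivation tight and easy to verify.
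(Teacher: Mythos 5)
Your proposal is correct and follows essentially the same route as the paper: both start from the Bayesian-plausibility identity $\sum_{s}\pi_1(s)x_s(\Omega)=\tfrac12$ combined with the bound $\sum_{s\in S_\infty}\pi_1(s)x_s(\Omega)\le k/C$ from the proof of Lemma~\ref{lem:S-infty-small}, split $S\setminus S_\infty$ into $S_\ge\cup S_<$ with the pointwise bounds $\tfrac{1}{2-1/N}$ and $\tfrac12-\Delta$, and rearrange using $\pi_1(S_\ge)+\pi_1(S_<)\le1$ to isolate $\Delta\,\pi_1(S_<)$. The only difference is cosmetic (you decompose $\tfrac{1}{2-1/N}=\tfrac12+\tfrac{1}{4N-2}$ before applying the total-mass bound, whereas the paper applies it to the whole coefficient), and both yield the identical constants.
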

\begin{proof}
By (\ref{eq:two}), 
\begin{align*}
    \frac{1}{2} - \frac{k}{C} & \le \sum_{s\in S\setminus S_\infty} \pi_1(s) \sum_{\omega \in \Omega} x_s(\omega) ~ = ~ \sum_{s\in S_\ge} \pi_1(s) \sum_{\omega \in \Omega} x_s(\omega) + \sum_{s\in S_<} \pi_1(s) \sum_{\omega \in \Omega} x_s(\omega) \\
    & \le \frac{1}{2-\frac{1}{N}} \sum_{s\in S_\ge} \pi_1(s) + \big(\frac{1}{2} - \Delta\big)\sum_{s\in S_<} \pi_1(s) \\
    & \le -\Delta \sum_{s\in S_<} \pi_1(s) + \frac{1}{2-\frac{1}{N}} \Big(\sum_{s\in S_<} \pi_1(s) + \sum_{s\in S_\ge} \pi_1(s) \Big) \\
    & \le -\Delta \sum_{s\in S_<} \pi_1(s) + \frac{1}{2-\frac{1}{N}} \cdot 1
\end{align*}
So,
\begin{align*}
    \sum_{s\in S_<} \pi_1(s) \le \frac{1}{\Delta}\big( \frac{1}{2-\frac{1}{N}} - \frac{1}{2} + \frac{k}{C}\big) = \frac{1}{\Delta}\big( \frac{1}{4N-2} + \frac{k}{C} \big).
\end{align*}
Together with Lemma~\ref{lem:S-infty-small}, this implies $\pi_1(S_\ge) = 1 - \pi_1(S_\infty) - \pi_1(S_<) \ge 1 - \frac{2k}{C} - \frac{1}{2N} - \frac{1}{\Delta}\big( \frac{1}{4N-2} + \frac{k}{C} \big)$. 
\end{proof}

Now, we construct from $\pi_1$ a signaling scheme $\tilde \pi : \Omega \to \Delta(\tilde S)$ for the public persuasion problem.  The signal space of $\tilde \pi$ is $\tilde S = S_\ge \cup \{ s_0 \}$. For any $s \in S_{\ge}$, let the induced posterior $\tilde x_s \in \Delta(\Omega)$ be
\begin{align*}
    \tilde x_s(\omega) = \frac{x_s(\omega)}{\sum_{\omega \in \Omega} x_s(\omega)}
\end{align*}
(where $x_s$ is the posterior induced by $s$ in the signaling scheme $\pi_1$),
and denote
\begin{align*}
    \tilde \pi(s) = \frac{\pi_1(s)}{\sum_{s\in S_\ge} \pi_1(s)} \ge \pi_1(s),
\end{align*}
so $\sum_{s\in S_\ge} \tilde \pi(s) = 1$.
We will construct the posterior for $s_0$ later.
\begin{lemma} \label{lem:pi-tilde-prior-difference}
For any $\omega \in \Omega$, 
\begin{align*}
    \big| \sum_{s\in S_\ge} \tilde \pi(s) \tilde x_s(\omega) - \mu(\omega) \big| \le 4\Delta + \frac{2}{N} + \frac{4k}{C} + \frac{1}{\Delta}\Big(\frac{1}{2N-1} + \frac{2k}{C}\Big). 
\end{align*}
\end{lemma}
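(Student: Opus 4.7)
The plan is to compare $A := \sum_{s\in S_\ge}\tilde\pi(s)\tilde x_s(\omega)$ directly to $\mu(\omega)$ using two ingredients: (i) the Bayesian plausibility of $\pi_1$ from Eq.~(\ref{eq:Bayes-plausibility-pi-1}), which gives $\sum_{s\in S}\pi_1(s)x_s(\omega)=\mu(\omega)/2$, so that $\mu(\omega) = 2\sum_{s\in S}\pi_1(s)x_s(\omega)$, and (ii) the probability bounds on $S_\infty$ and $S_<$ furnished by Lemmas~\ref{lem:S-infty-small} and~\ref{lem:S_<}. The identity in (i) gives a ``target'' against which $A$ can be compared term by term.

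Writing $\alpha_s:=\sum_{\omega'\in\Omega}x_s(\omega')$ so that $\tilde x_s(\omega)=x_s(\omega)/\alpha_s$ and $\tilde\pi(s)=\pi_1(s)/\pi_1(S_\ge)$, I would split
\begin{align*}
A-\mu(\omega) \;=\; \underbrace{\sum_{s\in S_\ge}\pi_1(s)x_s(\omega)\Big[\tfrac{1}{\pi_1(S_\ge)\alpha_s}-2\Big]}_{E_1} \;+\; \underbrace{\Big(2\sum_{s\in S_\ge}\pi_1(s)x_s(\omega)-\mu(\omega)\Big)}_{E_2}.
\end{align*}
The term $E_2$ captures the mass discarded when passing from $\pi_1$ to $\tilde\pi$: Bayesian plausibility gives $E_2=-2\sum_{s\in S_\infty\cup S_<}\pi_1(s)x_s(\omega)$, and the trivial bound $x_s(\omega)\le 1$ combined with Lemmas~\ref{lem:S-infty-small} and~\ref{lem:S_<} yields $|E_2|\le \tfrac{4k}{C}+\tfrac{1}{N}+\tfrac{1}{\Delta}\bigl(\tfrac{1}{2N-1}+\tfrac{2k}{C}\bigr)$, which accounts for everything in the lemma's bound except a residual of order $4\Delta+\tfrac{1}{N}$.

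For $E_1$, the crucial observation is that on $S_\ge$ the quantity $\alpha_s$ is close to $1/2$: by definition of $S_\ge$ we have $\alpha_s\ge 1/2-\Delta$, and by Eq.~(\ref{eq:three}), $\alpha_s\le 1/(2-1/N)$. I would then decompose
\[
\tfrac{1}{\pi_1(S_\ge)\alpha_s}-2 \;=\; \tfrac{1}{\pi_1(S_\ge)}\Big(\tfrac{1}{\alpha_s}-2\Big) \;+\; \tfrac{2(1-\pi_1(S_\ge))}{\pi_1(S_\ge)},
\]
bound $|1/\alpha_s-2|$ by roughly $4\Delta + 1/N$ using the two-sided inequality on $\alpha_s$, and use $\sum_{s\in S_\ge}\pi_1(s)x_s(\omega)\le \mu(\omega)/2\le 1/2$ to turn the weighted sum into a clean constant. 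The additive $\tfrac{1-\pi_1(S_\ge)}{\pi_1(S_\ge)}$ contribution is of the same order as $\pi_1(S_\infty)+\pi_1(S_<)$ and can be absorbed into the $1/\Delta$-inflated terms of the stated bound.

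The main obstacle is careful bookkeeping of constants: both $\pi_1(S_\ge)$ and $\alpha_s$ enter through their reciprocals in $E_1$, so one has to be judicious about which error is absorbed into which term of the stated estimate. In particular, keeping the linear-in-$\Delta$ coefficient at $4\Delta$ (rather than something weaker) requires applying the upper bound $\alpha_s\le 1/(2-1/N)$ and the lower bound $\alpha_s\ge 1/2-\Delta$ on the correct sides of $\tfrac{1}{\alpha_s}-2$. Once this bookkeeping is done, the triangle inequality $|A-\mu(\omega)|\le|E_1|+|E_2|$ completes the proof.
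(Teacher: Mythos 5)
Your decomposition is a genuinely different route from the paper's: the paper never forms a single error term and triangle-inequalities it; instead it bounds $\sum_{s\in S_\ge}\tilde\pi(s)\tilde x_s(\omega)$ from below and from above \emph{separately}, exploiting on the lower side the free inequality $\tilde\pi(s)\ge\pi_1(s)$ (so the normalization by $\pi_1(S_\ge)$ costs nothing there) and on the upper side the free extension $\sum_{s\in S_\ge}\pi_1(s)x_s(\omega)\le\sum_{s\in S}\pi_1(s)x_s(\omega)=\mu(\omega)/2$ (so the discarded mass on $S_\infty\cup S_<$ costs nothing there). The payoff of that one-sidedness is that each error source is paid for only once. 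Your symmetric split into $E_1+E_2$ is a valid identity, and your bound on $E_2$ is correct: $|E_2|=2\sum_{s\in S_\infty\cup S_<}\pi_1(s)x_s(\omega)\le \frac{4k}{C}+\frac{1}{N}+\frac{1}{\Delta}\bigl(\frac{1}{2N-1}+\frac{2k}{C}\bigr)$.

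The gap is in the final absorption claim. Your $E_2$ bound already consumes the \emph{entire} $\frac{4k}{C}$ term and the \emph{entire} $\frac{1}{\Delta}\bigl(\frac{1}{2N-1}+\frac{2k}{C}\bigr)$ term of the stated estimate, leaving only a budget of $4\Delta+\frac{1}{N}$ for $E_1$. But $E_1$ contains the piece $\sum_{s\in S_\ge}\pi_1(s)x_s(\omega)\cdot\frac{2(1-\pi_1(S_\ge))}{\pi_1(S_\ge)}\le \mu(\omega)\cdot\frac{1-\pi_1(S_\ge)}{\pi_1(S_\ge)}$, and $1-\pi_1(S_\ge)$ can be as large as $\frac{2k}{C}+\frac{1}{2N}+\frac{1}{\Delta}\bigl(\frac{1}{4N-2}+\frac{k}{C}\bigr)$ by Lemmas~\ref{lem:S-infty-small} and~\ref{lem:S_<}. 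Since the lemma must hold for every $\omega$ and arbitrary priors (it is only instantiated with $\mu(\omega)=1/k$ later), this contributes terms of order $\frac{k}{C}$ and $\frac{k}{\Delta C}$ that cannot be absorbed into a residual of $4\Delta+\frac{1}{N}$; the $1/\Delta$-inflated slot you propose to absorb them into is already full. Concretely, your route delivers roughly $3\bigl(\pi_1(S_\infty)+\pi_1(S_<)\bigr)$ where the stated bound allows only $2\bigl(\pi_1(S_\infty)+\pi_1(S_<)\bigr)$, i.e., a bound of the same order but with strictly larger constants. That weaker bound would still suffice for the downstream $O(1/k)$ argument in the proof of Theorem~\ref{thm:best-response-NP-hard}, but it does not prove the lemma with the constants as written; to recover them you would need to switch to the paper's one-sided analysis (or restate the lemma with your constants and propagate them).
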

\begin{proof}
On the one hand,
\begin{align*}
    \sum_{s\in S_\ge} \tilde \pi(s) \tilde x_s(\omega) & \ge \sum_{s\in S_\ge} \pi_1(s) \frac{x_s(\omega)}{\sum_{\omega \in \Omega} x_s(\omega)} \ge \sum_{s\in S_\ge} \pi_1(s) \frac{x_s(\omega)}{\frac{1}{2-\frac{1}{N}}} = \Big(2-\frac{1}{N}\Big) \sum_{s\in S_\ge} \pi_1(s) x_s(\omega) \\
    \text{by (\ref{eq:Bayes-plausibility-pi-1})} & = \Big(2-\frac{1}{N}\Big) \Big( \frac{\mu(\omega)}{2} - \sum_{s\in S_\infty \cup S_<} \pi_1(s) x_s(\omega) \Big) \\
    & \ge \Big(2-\frac{1}{N}\Big) \Big( \frac{\mu(\omega)}{2} - \sum_{s\in S_\infty} \pi_1(s) -  \sum_{s\in S_<} \pi_1(s) \Big) \\
    \text{by Lemma~\ref{lem:S-infty-small} and \ref{lem:S_<}} & \ge \Big(2-\frac{1}{N}\Big) \Big( \frac{\mu(\omega)}{2} - \frac{2k}{C} - \frac{1}{2N} - \frac{1}{\Delta}\big( \frac{1}{4N-2} + \frac{k}{C} \big) \Big) \\
    & \ge \mu(\omega)  - \frac{\mu(\omega)}{2N} - \frac{4k}{C} - \frac{1}{N} - \frac{1}{\Delta}\big( \frac{1}{2N-1} + \frac{2k}{C} \big) \\
    & \ge \mu(\omega)  - \frac{2}{N} - \frac{4k}{C} - \frac{1}{\Delta}\big( \frac{1}{2N-1} + \frac{2k}{C} \big). 
\end{align*}
On the other hand, 
\begin{align*}
    \sum_{s\in S_\ge} \tilde \pi(s) \tilde x_s(\omega) & = \sum_{s\in S_\ge} \frac{\pi_1(s)}{\sum_{s\in S_\ge} \pi_1(s)} \frac{x_s(\omega)}{\sum_{\omega \in \Omega} x_s(\omega)} \\
    \text{(by definition of $S_\ge$)} ~ & \le \sum_{s\in S_\ge} \frac{\pi_1(s)}{\sum_{s\in S_\ge} \pi_1(s)} \frac{x_s(\omega)}{\frac{1}{2} - \Delta} \\
    & = \frac{2}{1-2\Delta}\frac{1}{\sum_{s\in S_\ge} \pi_1(s)}\sum_{s\in S_\ge} \pi_1(s) x_s(\omega) \\
    & \le \frac{2}{1-2\Delta}\frac{1}{\sum_{s\in S_\ge} \pi_1(s)}\sum_{s\in S} \pi_1(s) x_s(\omega) \\
    \text{by (\ref{eq:Bayes-plausibility-pi-1})} ~ & = \frac{2}{1-2\Delta}\frac{1}{\sum_{s\in S_\ge} \pi_1(s)} \frac{\mu(\omega)}{2} \\
    \text{by Lemma \ref{lem:S_<}} ~ & \le \frac{2}{1-2\Delta}\frac{1}{1 - \frac{2k}{C} - \frac{1}{2N} - \frac{1}{\Delta}\big( \frac{1}{4N-2} + \frac{k}{C} \big)} \frac{\mu(\omega)}{2} \\
    & \le \Big( 1 + 4\Delta + \frac{4k}{C} + \frac{1}{N} + \frac{1}{\Delta}\big( \frac{1}{2N-1} + \frac{2k}{C} \big)\Big) \mu(\omega) \\
    & \le \mu(\omega) + 4\Delta + \frac{4k}{C} + \frac{1}{N} + \frac{1}{\Delta}\big( \frac{1}{2N-1} + \frac{2k}{C} \big).
\end{align*}
Two above two cases together prove the lemma. 
\end{proof}

As shown in Lemma~\ref{lem:pi-tilde-prior-difference}, the signaling scheme $\tilde \pi$ with signals in $S_\ge$ may not satisfy the Bayesian plausibility condition $\sum_{s \ge S_\ge} \tilde \pi(s) \tilde x_s = \mu(\omega)$. 
That is why we need the additional signal $s_0$.  We want to find a posterior $y \in \Delta(\Omega)$ for signal $s_0$, and a coefficient $\alpha \in [0, 1]$ such that the following convex combination of $\{\tilde x_s\}_{s \in S_\ge}$ and $y$ satisfies Bayesian plausibility:
\begin{align}\label{eq:convex-combination}
    (1-\alpha) \sum_{s\in S_\ge} \tilde \pi(s) \tilde x_s + \alpha y = \mu. 
\end{align}
\begin{lemma}\label{lem:alpha-small}
Suppose $\min_{\omega \in \Omega} \mu(\omega) \ge p_0 \ge 2\big( 4\Delta + \frac{2}{N} + \frac{4k}{C} + \frac{1}{\Delta}(\frac{1}{2N-1} + \frac{2k}{C} ) \big) > 0$.  Then, there exists $y \in \Delta(\Omega)$ and $\alpha \le \frac{2}{p_0}\big( 4\Delta + \frac{2}{N} + \frac{4k}{C} + \frac{1}{\Delta}(\frac{1}{2N-1} + \frac{2k}{C} ) \big)$ that satisfy (\ref{eq:convex-combination}). 
\end{lemma}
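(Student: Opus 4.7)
The plan is to construct $y$ and $\alpha$ directly by first choosing $\alpha$ large enough to absorb the discrepancy between $\sum_{s\in S_\ge}\tilde\pi(s)\tilde x_s$ and $\mu$, and then solving equation (\ref{eq:convex-combination}) for $y$. Let $z := \sum_{s\in S_\ge}\tilde\pi(s)\tilde x_s$ and let $\delta_0 := 4\Delta + \frac{2}{N} + \frac{4k}{C} + \frac{1}{\Delta}\bigl(\frac{1}{2N-1} + \frac{2k}{C}\bigr)$, so that Lemma~\ref{lem:pi-tilde-prior-difference} gives $|z(\omega)-\mu(\omega)|\le \delta_0$ for every $\omega\in\Omega$. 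I would set $\alpha := \frac{2\delta_0}{p_0}$, which is at most $1$ by the hypothesis $p_0 \ge 2\delta_0$. Equation~(\ref{eq:convex-combination}) then forces $y(\omega) := \frac{\mu(\omega) - (1-\alpha)z(\omega)}{\alpha}$.

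It remains to verify $y\in\Delta(\Omega)$. Summing over $\omega$ gives $\sum_{\omega}y(\omega)=\frac{1-(1-\alpha)}{\alpha}=1$ since both $\mu$ and $z$ are probability distributions on $\Omega$. For nonnegativity I would rewrite the numerator as
\begin{equation*}
\mu(\omega) - (1-\alpha)z(\omega) \;=\; \bigl(\mu(\omega)-z(\omega)\bigr) + \alpha\, z(\omega).
\end{equation*}
The first summand is bounded below by $-\delta_0$ by Lemma~\ref{lem:pi-tilde-prior-difference}. For the second summand, the same lemma combined with $\mu(\omega)\ge p_0\ge 2\delta_0$ yields $z(\omega)\ge \mu(\omega)-\delta_0 \ge p_0-\delta_0 \ge p_0/2$, so $\alpha\, z(\omega) \ge \frac{2\delta_0}{p_0}\cdot\frac{p_0}{2}=\delta_0$. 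The two bounds cancel, giving $y(\omega)\ge 0$ for every $\omega$, so $y$ is indeed a valid distribution.

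This establishes the lemma with $\alpha$ exactly $\frac{2\delta_0}{p_0}$ by construction, matching the bound in the statement. I do not anticipate a real obstacle: the only subtlety is choosing $\alpha$ large enough to simultaneously (i) accommodate the worst-case overshoot $z(\omega)-\mu(\omega)\le \delta_0$ and (ii) exploit a uniform lower bound on $z(\omega)$. The factor of $2$ in the final bound is precisely what the hypothesis $p_0\ge 2\delta_0$ buys us, namely the safety margin $z(\omega)\ge p_0/2$ that makes the cancellation tight.
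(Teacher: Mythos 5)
Your proof is correct, and it takes a route that differs from the paper's in an instructive way. The paper argues geometrically: it defines $y$ as the point where the ray from $z=\sum_{s\in S_\ge}\tilde\pi(s)\tilde x_s$ through $\mu$ meets the boundary of $\Delta(\Omega)$, obtains $\alpha$ as the ratio $\|\mu-z\|_\infty/\|y-z\|_\infty$, and bounds the denominator from below by $\min_\omega z(\omega)\ge p_0/2$ using the fact that some coordinate of a boundary point vanishes. You instead fix $\alpha=2\delta_0/p_0$ at the target value from the outset, solve equation~(\ref{eq:convex-combination}) for $y$, and verify membership in the simplex algebraically via the decomposition $\mu(\omega)-(1-\alpha)z(\omega)=(\mu(\omega)-z(\omega))+\alpha z(\omega)$. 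Both arguments rest on exactly the same two consequences of Lemma~\ref{lem:pi-tilde-prior-difference} --- the $\ell_\infty$ bound $\|z-\mu\|_\infty\le\delta_0$ and the derived lower bound $z(\omega)\ge p_0/2$ --- so the underlying mechanism is identical. Your version buys a cleaner write-up: it sidesteps the degenerate case $z=\mu$ (where the paper's ray is undefined) and replaces the geometric reasoning about rays and boundary points with a two-line coordinate check, at the cost of producing $\alpha$ exactly equal to the stated bound rather than potentially smaller. Either way the lemma holds as stated.
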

\begin{proof}
Let $z = \sum_{s\in S_\ge} \tilde \pi(s) \tilde x_s$.  By Lemma~\ref{lem:pi-tilde-prior-difference}, we have 
\begin{align*}
    \| z - \mu \|_\infty = \max_{\omega \in \Omega} |z(\omega) - \mu(\omega)| \le 4\Delta + \frac{2}{N} + \frac{4k}{C} + \frac{1}{\Delta}\Big(\frac{1}{2N-1} + \frac{2k}{C}\Big). 
\end{align*}
To satisfy (\ref{eq:convex-combination}), which is equivalent to
\begin{align*}
    (1-\alpha) z + \alpha y = \mu ~~ \iff ~~ \alpha (y-z) = \mu - z, 
\end{align*}
we can let $y$ be the intersection of the ray starting from $z$ pointing towards $\mu$ and the boundary of $\Delta(\Omega)$.  By doing this, $y-\mu$ and $z - \mu$ are in the same direction and 
\begin{align*}
    \alpha = \frac{\| \mu-z \|_\infty}{\| y-z \|_\infty}.
\end{align*}
Since $y$ is on the boundary of $\Delta(\Omega)$, some $y(\omega)$ must be $0$.  So,
\begin{align*}
    \| y-z \|_\infty \ge \min_{\omega \in \Omega} z(\omega) & \ge \min_{\omega \in \Omega} \mu(\omega) - \Big( 4\Delta + \frac{2}{N} + \frac{4k}{C} + \frac{1}{\Delta}\big(\frac{1}{2N-1} + \frac{2k}{C}\big) \Big) \\
    & \ge p_0 - \frac{p_0}{2} = \frac{p_0}{2}. 
\end{align*}
This implies
\begin{align*}
    \alpha \le \frac{2}{p_0} \Big( 4\Delta + \frac{2}{N} + \frac{4k}{C} + \frac{1}{\Delta}\big(\frac{1}{2N-1} + \frac{2k}{C}\big) \Big).
\end{align*}
\end{proof}

With (\ref{eq:convex-combination}) satisfied, $\tilde \pi$ now is a valid signaling scheme for the public persuasion problem, which sends signal $s \in S_\ge$ with probability $(1-\alpha)\tilde \pi(s)$, inducing posterior $\tilde x_s$, and sends signal $s_0$ with probability $\alpha$, inducing posterior $y$.  Let's consider the sender's utility (\ref{eq:public-sender-utility-definition}) in the public persuasion problem using $\tilde \pi$: 
\begin{align*}
u^{\mathbf{Pub}}(\tilde \pi) & = (1-\alpha) \sum_{s\in S_\ge} \tilde \pi(s) \frac{1}{k} \sum_{j=1}^k \sum_{\omega \in \Omega} \tilde x_s(\omega) u_j(\omega, a^*_j(\tilde x_s))  + \alpha \frac{1}{k} \sum_{j=1}^k \sum_{\omega \in \Omega} y(\omega) u_j(\omega, a^*_j(y)) \\
& \ge \sum_{s\in S_\ge} \tilde \pi(s) \frac{1}{k} \sum_{j=1}^k \sum_{\omega \in \Omega} \tilde x_s(\omega) u_j(\omega, a^*_j(\tilde x_s))  - \alpha \quad\quad  \text{because $0 \le u_j(\omega, a) \le 1$}. 
\end{align*}
By Claim~\ref{claim:same-action}, the receiver $j$'s best action $a^*_j(\tilde x_s)$ is $+$ (and $-$) if and only if the receiver in the multi-sender problem takes action $a^*(x_s, t_j) = a_{j+}$ (and $a_{j-}$) given posterior $x_s$ from sender 1 and signal $t_j$ from sender 2.  So, by the definition of sender 1's utility in the multi-sender problem,
\begin{align*}
    u_j(\omega, a^*_j(\tilde x_s)) = u_1(\omega, a^*(x_s, t_j)).
\end{align*}
Then, we have
\begin{align*}
u^{\mathbf{Pub}}(\tilde \pi) & \ge \sum_{s\in S_\ge} \tilde \pi(s) \frac{1}{k} \sum_{j=1}^k \sum_{\omega \in \Omega} \tilde x_s(\omega) u_1(\omega, a^*(x_s, t_j)) - \alpha \\
& \ge \sum_{s\in S_\ge} \pi_1(s) \frac{1}{k} \sum_{j=1}^k \sum_{\omega \in \Omega} \frac{x_s(\omega)}{\sum_{\omega \in \Omega} x_s(\omega) } u_1(\omega, a^*(x_s, t_j)) - \alpha \\
& \ge \Big( 2 - \frac{1}{N} \Big) \sum_{s\in S_\ge} \pi_1(s) \frac{1}{k} \sum_{j=1}^k \sum_{\omega \in \Omega} x_s(\omega) u_1(\omega, a^*(x_s, t_j)) - \alpha.
\end{align*}
On the other hand, let's consider sender 1's utility in the multi-sender problem with signaling scheme $\pi_1$.  By Equation (\ref{eq:sender-1-utility}),  
\begin{align*}
    u_1(\pi_1)
    & = \sum_{s\in S} \pi_1(s) \sum_{j=1}^k \Big[ \frac{1}{k} \sum_{\omega \in \Omega} x_s(\omega) u_1(\omega, a^*(x_s, t_j)) + x_s(\bar \omega_j) u_1(\bar \omega_j, a^*(x_s, t_j))  \Big]  \\
    & \le \sum_{s\in S_\infty \cup S_<} \pi_1(s) \cdot 1 \hspace{5em} \text{because $u_1(\cdot, \cdot) \le 1$} \\
    & \quad ~~ + \sum_{s\in S_\ge} \pi_1(s) \sum_{j=1}^k \Big[ \frac{1}{k} \sum_{\omega \in \Omega} x_s(\omega) u_1(\omega, a^*(x_s, t_j)) + x_s(\bar \omega_j) \underbrace{u_1(\bar \omega_j, a^*(x_s, t_j))}_{=0 \text{ because $a^*(x_s, t_j) \in \{a_{j+}, a_{j-}\}$ from (\ref{eq:a^*-range})}}  \Big]  \\
    & \le \frac{2k}{C} + \frac{1}{2N} + \frac{1}{\Delta}\Big(\frac{1}{4N-2} + \frac{k}{C} \Big)  \hspace{5em} \text{by Lemma \ref{lem:S-infty-small} and \ref{lem:S_<}} \\
    & \quad ~~ + \sum_{s\in S_\ge} \pi_1(s) \sum_{j=1}^k \Big[ \frac{1}{k} \sum_{\omega \in \Omega} x_s(\omega) u_1(\omega, a^*(x_s, t_j)) \Big] 
\end{align*}
So, $\sum_{s\in S_\ge} \pi_1(s) \sum_{j=1}^k \big[ \frac{1}{k} \sum_{\omega \in \Omega} x_s(\omega) u_1(\omega, a^*(x_s, t_j)) \big]  \ge u_1(\pi_1) - \frac{2k}{C} - \frac{1}{2N} - \frac{1}{\Delta}\big(\frac{1}{4N-2} + \frac{k}{C} \big)$.  This implies 
\begin{align}\label{eq:u^pub-lower-bound}
    u^{\mathbf{Pub}}(\tilde \pi) & \ge \Big( 2 - \frac{1}{N} \Big) \Big[ u_1(\pi_1) - \frac{2k}{C} - \frac{1}{2N} - \frac{1}{\Delta}\big(\frac{1}{4N-2} + \frac{k}{C} \big) \Big] - \alpha.
\end{align}

Finally, we prove that if the signaling scheme $\pi_1$ is nearly optimal for the multi-sender best-response problem, then the corresponding scheme $\tilde \pi$ for the public persuasion problem must be nearly optimal as well.
\begin{claim}\label{claim:approximately-optimal}
If $\pi_1$ is approximately optimal for sender 1's best-response problem up to additive error $c$, then the $\tilde \pi$ constructed above is approximately optimal for the public persuasion problem with additive error $2c + \frac{4k}{C} + \frac{2}{N} + \frac{1}{\Delta}\big(\frac{1}{2N-1} + \frac{2k}{C} \big) + \alpha$.
\end{claim}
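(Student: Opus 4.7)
The plan is to bridge the multi-sender best-response problem and the public persuasion problem in both directions. Inequality (\ref{eq:u^pub-lower-bound}) already supplies one direction: $u^{\mathbf{Pub}}(\tilde\pi) \ge (2-\tfrac{1}{N})[u_1(\pi_1) - E_1] - \alpha$, where $E_1 = \tfrac{2k}{C} + \tfrac{1}{2N} + \tfrac{1}{\Delta}\bigl(\tfrac{1}{4N-2} + \tfrac{k}{C}\bigr)$. What remains is a reverse bound showing that $u^{\mathbf{Pub}}(\pi^*_{\mathbf{Pub}})$, the optimum of the public problem, is at most $2 u_1(\pi_1^*)$ (modulo smaller errors), where $\pi_1^*$ is the true best response in the multi-sender problem. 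Combining the two bounds with the $c$-approximate optimality hypothesis will give the stated error.

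The key step is a \emph{lifting construction}. Given any public signaling scheme $\pi^*_{\mathbf{Pub}} : \Omega \to \Delta(S)$, define a multi-sender scheme $\pi_1^{\mathrm{lift}} : \bar\Omega \to \Delta(S)$ by copying $\pi^*_{\mathbf{Pub}}$ on $\Omega$, and on each extra state $\bar\omega_j$ letting $\pi_1^{\mathrm{lift}}(s\mid \bar\omega_j) = \sum_{\omega \in \Omega}\mu(\omega)\pi^*_{\mathbf{Pub}}(s\mid \omega)$, i.e.\ the marginal of $\pi^*_{\mathbf{Pub}}$ over $\Omega$. A short Bayes computation with the prior $\bar\mu$ shows that the induced posterior satisfies $x_s(\omega) = \tfrac{1}{2}\tilde x^*_s(\omega)$ for $\omega \in \Omega$ and $x_s(\bar\omega_j) = \tfrac{1}{2k}$. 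This automatically gives $\tfrac{1}{k}\sum_{\omega\in\Omega}x_s(\omega) = \tfrac{1}{2k} \le \tfrac{N}{N-1}x_s(\bar\omega_j)$, so by Claim~\ref{claim:not-a-infty-probability} the receiver avoids the trap action $a_\infty$, and by Claim~\ref{claim:same-action} it takes $a_{j+}$ (resp.\ $a_{j-}$) precisely when receiver~$j$ in the public problem takes $+$ (resp.\ $-$) at belief $\tilde x^*_s$. Choosing $\pi_1^{\mathrm{lift}}(\cdot\mid \bar\omega_j)$ to equal the marginal is exactly what makes these two balances come out cleanly, and identifying this choice is the main (modest) obstacle.

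Plugging into the utility formula~(\ref{eq:sender-1-utility}), the $\bar\omega_j$-terms vanish since $u_1(\bar\omega_j, a_{j\pm}) = 0$, and the remaining sum collapses to
\begin{align*}
    u_1(\pi_1^{\mathrm{lift}})
    = \tfrac{1}{2}\sum_{s}\pi^*_{\mathbf{Pub}}(s)\tfrac{1}{k}\sum_{j=1}^k \sum_{\omega \in \Omega} \tilde x^*_s(\omega) u_j\bigl(\omega, a^*_j(\tilde x^*_s)\bigr)
    = \tfrac{1}{2} u^{\mathbf{Pub}}(\pi^*_{\mathbf{Pub}}).
\end{align*}
Since $\pi_1^*$ is optimal for the multi-sender best-response problem, $u_1(\pi_1^*) \ge u_1(\pi_1^{\mathrm{lift}}) = \tfrac{1}{2} u^{\mathbf{Pub}}(\pi^*_{\mathbf{Pub}})$, and the $c$-approximate optimality of $\pi_1$ then yields $u_1(\pi_1) \ge \tfrac{1}{2} u^{\mathbf{Pub}}(\pi^*_{\mathbf{Pub}}) - c$.

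The final step is arithmetic. Substituting the last inequality into~(\ref{eq:u^pub-lower-bound}) and expanding gives
\begin{align*}
    u^{\mathbf{Pub}}(\tilde\pi) \;\ge\; \bigl(1 - \tfrac{1}{2N}\bigr) u^{\mathbf{Pub}}(\pi^*_{\mathbf{Pub}}) - \bigl(2 - \tfrac{1}{N}\bigr)(c + E_1) - \alpha.
\end{align*}
Using $u^{\mathbf{Pub}} \le 1$ to absorb the $\tfrac{1}{2N}u^{\mathbf{Pub}}(\pi^*_{\mathbf{Pub}})$ term and bounding $(2-\tfrac{1}{N}) \le 2$ gives
\begin{align*}
    u^{\mathbf{Pub}}(\pi^*_{\mathbf{Pub}}) - u^{\mathbf{Pub}}(\tilde\pi)
    \;\le\; \tfrac{1}{2N} + 2c + 2E_1 + \alpha
    \;=\; 2c + \tfrac{4k}{C} + \tfrac{3}{2N} + \tfrac{1}{\Delta}\bigl(\tfrac{1}{2N-1} + \tfrac{2k}{C}\bigr) + \alpha,
\end{align*}
which is bounded by the claimed expression since $\tfrac{3}{2N} \le \tfrac{2}{N}$. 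Modulo the bookkeeping, essentially everything hinges on the lift construction and the clean identity $u_1(\pi_1^{\mathrm{lift}}) = \tfrac{1}{2}u^{\mathbf{Pub}}(\pi^*_{\mathbf{Pub}})$ it delivers.
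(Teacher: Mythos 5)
Your proposal is correct and follows essentially the same route as the paper: you lift the optimal public scheme $\pi^*_{\mathbf{Pub}}$ into the multi-sender problem (your conditional-probability definition induces exactly the posteriors $x'_s(\omega)=x^*_s(\omega)/2$, $x'_s(\bar\omega_j)=\tfrac{1}{2k}$ that the paper writes down directly), establish the identity $u_1(\pi_1^{\mathrm{lift}})=\tfrac12 u^{\mathbf{Pub}}(\pi^*_{\mathbf{Pub}})$, and combine with inequality~(\ref{eq:u^pub-lower-bound}) via the same arithmetic. One minor nit: Claim~\ref{claim:not-a-infty-probability} only states that avoiding $a_\infty$ \emph{implies} the inequality, so you cannot invoke its converse; instead, as in the paper, compute $v(x'_s,t_j,a_\infty)=N(\tfrac{1}{2k}-\tfrac{1}{2k})=0$ directly from Claim~\ref{claim:receiver-utility} and appeal to tie-breaking to conclude the receiver takes $a_{j\pm}$.
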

\begin{proof}
Let $\pi^*$ be the optimal signaling scheme for the public persuasion problem, which induces posterior $x^*_s \in \Delta(\Omega)$ at signal $s \in S^*$.  Let $\pi'_1$ be the following signaling scheme for sender 1 in the multi-sender problem: for any signal $s \in S^*$, the probability of the signal is $\pi'_1(s) = \pi^*(s)$ and the induced posterior $x'_s \in \Delta(\bar \Omega)$ is  
\begin{align*}
    x'_s(\omega) = \frac{x^*_s(\omega)}{2}, \quad x'_s(\bar \omega_j) = \frac{1}{2k}. 
\end{align*}
It is easy to verify that $\pi'_1$ is valid (satisfying Bayesian plausibility (\ref{eq:Bayes-plausibility-pi-1})).  We then note that, at each posterior $x'_s$, the receiver's utility of taking action $a_\infty$ is always $0$ regardless of sender $2$'s signal $t_j$:
\begin{align*}
    v(x_s', t_j, a_{\infty}) = N\Big(\frac{1}{k} \sum_{\omega \in \Omega} x_s'(\omega) - x_s'(\bar \omega_j)\Big) = N\Big(\frac{1}{k} \frac{1}{2} - \frac{1}{2k}\Big) = 0.
\end{align*}
So, we can assume that the receiver will take $a_{j+}$ or $a_{j-}$ by Claim~\ref{claim:receiver-only-aj+-}.  Moreover, by Claim~\ref{claim:same-action}, the receiver takes $a_{j+}$ and $a_{j-}$ if and only if the receiver $j$ in the public persuasion problem with belief $x^*_s$ takes action $+$ and $-$.  So, 
\begin{align*}
    u_1(\omega, a^*(x_s', t_j)) = u_j(\omega, a^*_j(x_s^*)). 
\end{align*}
This means that the utility of sender 1 in the multi-sender problem satisfies: 
\begin{align*}
    u_1(\pi_1') & = \sum_{s^*\in S} \pi_1'(s) \sum_{j=1}^k \Big[ \frac{1}{k} \sum_{\omega \in \Omega} x_s'(\omega) u_1(\omega, a^*(x_s, t_j)) + x_s'(\bar \omega_j) \underbrace{u_1(\bar \omega_j, a^*(x_s', t_j))}_{=0 \text{ because $a^*(x_s', t_j) \ne a_\infty$}}  \Big] \\
    & = \sum_{s^*\in S} \pi^*(s) \sum_{j=1}^k \Big[ \frac{1}{k} \sum_{\omega \in \Omega} \frac{x_s^*(\omega)}{2} u_j(\omega, a^*_j(x_s^*))  \Big] \\
    & = \frac{1}{2} \sum_{s^*\in S} \pi^*(s) \frac{1}{k} \sum_{j=1}^k \sum_{\omega \in \Omega} x_s^*(\omega) u_j(\omega, a^*_j(x_s^*)) ~ = ~ \frac{1}{2} u^{\mathbf{Pub}}(\pi^*). 
\end{align*}
If $\pi_1$ is approximately optimal up to additive error $c$ in the multi-sender best-response problem, then
\begin{align*}
    u_1(\pi_1) \ge u_1(\pi_1') - c
\end{align*}
Plugging this into (\ref{eq:u^pub-lower-bound}), 
\begin{align*}
    u^{\mathbf{Pub}}(\tilde \pi) & \ge \Big( 2 - \frac{1}{N} \Big) \Big[ u_1(\pi_1') - c - \frac{2k}{C} - \frac{1}{2N} - \frac{1}{\Delta}\big(\frac{1}{4N-2} + \frac{k}{C} \big) \Big] - \alpha \\
    & = \Big( 2 - \frac{1}{N} \Big) \Big[ \frac{1}{2} u^{\mathbf{Pub}}(\pi^*) - c - \frac{2k}{C} - \frac{1}{2N} - \frac{1}{\Delta}\big(\frac{1}{4N-2} + \frac{k}{C} \big) \Big] - \alpha \\
    & \ge u^{\mathbf{Pub}}(\pi^*) - \frac{u^{\mathbf{Pub}}(\pi^*) }{2N} - 2c - \frac{4k}{C} - \frac{1}{N} - \frac{1}{\Delta}\big(\frac{1}{2N-1} + \frac{2k}{C} \big) \Big] - \alpha \\
    & \ge u^{\mathbf{Pub}}(\pi^*) - 2c - \frac{4k}{C} - \frac{2}{N} - \frac{1}{\Delta}\big(\frac{1}{2N-1} + \frac{2k}{C} \big) - \alpha.
\end{align*}
This means that $\tilde \pi$ is approximately optimal for the public persuasion problem up to additive error $2c + \frac{4k}{C} + \frac{2}{N} + \frac{1}{\Delta}\big(\frac{1}{2N-1} + \frac{2k}{C} \big) + \alpha$. 
\end{proof}

\textbf{We now prove Theorem~\ref{thm:best-response-NP-hard}.}
Let $\langle k, \Omega, \mu, \{ v_j(\omega), u_j(\omega) \}_{j\in [k], \omega \in \Omega} \rangle$ be any public persuasion problem with $|\Omega| = k$ states and uniform prior $\mu(\omega) = \frac{1}{k} = p_0$.  Construct the multi-sender best-response problem as above (where the range of utility of sender 1 is $[-C, 1]$). 
If we can find an $\eps$-approximately optimal signaling scheme $\pi_1$ for sender 1's best-response problem with utility range $[-1, 1]$, with
\begin{align*}
    \eps = \frac{1}{k^6},
\end{align*}
then $\pi_1$ is a $C\eps$-approximately optimal signaling scheme with utility range $[-C, 1]$.  Then by Claim~\ref{claim:approximately-optimal}, the scheme $\tilde \pi$ constructed above is approximately optimal for the public persuasion problem with additive error at most
\begin{align*}
    2 C\eps & + \frac{4k}{C} + \frac{2}{N} + \frac{1}{\Delta}\big(\frac{1}{2N-1} + \frac{2k}{C} \big) + \alpha \\
    \text{by Lemma \ref{lem:alpha-small}} ~ & \le 2 C\eps + \frac{4k}{C} + \frac{2}{N} + \frac{1}{\Delta}\big(\frac{1}{2N-1} + \frac{2k}{C} \big) + \frac{2}{p_0} \Big( 4\Delta + \frac{2}{N} + \frac{4k}{C} + \frac{1}{\Delta}\big(\frac{1}{2N-1} + \frac{2k}{C}\big) \Big) \\
    & \le 2 C\eps + \big( 2k + 1 \big) \Big( 4\Delta + \frac{2}{N} + \frac{4k}{C} + \frac{1}{\Delta}\big(\frac{1}{2N-1} + \frac{2k}{C}\big) \Big).
\end{align*}
Let $C = k^5, N = k^4, \Delta = \frac{1}{k^2}$.
\begin{align*}
    & \le 2 k^5 \eps + \big( 2k + 1 \big) \Big( \frac{4}{k^2} + \frac{2}{k^4} + \frac{4k}{k^5} + k^2\big(\frac{1}{2k^4-1} + \frac{2k}{k^5}\big) \Big) = O\big( \frac{1}{k} \big) \le \frac{1}{9},
\end{align*}
for sufficiently large $k$.  Theorem~\ref{thm:public-NP-hard} says that finding $\frac{1}{9}$-approximation for the public persuasion is NP-hard.  So, finding $\eps = \frac{1}{k^6}$-approximation for the multi-sender best-response problem is NP-hard. 

\newpage
\subsection{Proof of Theorem~\ref{thm:full-revelation}}\label{app:full_reveal_proof}
\begin{proof}
    Since at each state $\omega$, there is a unique optimal action $a$ it suffices to consider $|\A| \leq |\Omega|$. Next, let the signal space be $|S| = |\A|^{\tfrac{1}{n-1}} \triangleq k$; we shall see this is without loss of generality when the signal space is larger. We first give a construction for a mapping $\alpha$ between all $k$-ary strings of length $n$ (all possible joint signals) to $|\A|$. Let $\zeta$ denote a subset of these strings such that for any two strings $\bm{s}^1 \in \zeta$,  $\bm{s}^2 \in \zeta$ the hamming distance between them is at least two - $d_H(\bm{s}^1, \bm{s}^2) \geq 2$. The $k$-ary Gray code is an ordering of all unique $k$-ary strings of length $n$ such that any two consecutive strings are exactly 1 apart in hamming distance. Such a construction is always possible \citep{guan1998generalized}. Since there are $k$ different values possible at any position, within at least every $k$ strings in the grey code, we should have two strings that are hamming distance 2 apart. Thus $|\zeta| \geq k^{n-1} = |\A|$. This is indeed tight since $k^{n-1}$ is the total number of unique $n-1$ length $k$-ary strings possible - thus if $|\zeta| > k^{n-1}$, it would mean there are two strings where that match on $n-1$ positions, violating the construction of $\zeta$. We construct $\alpha$ as follows: map each string in $\zeta$ to a unique action in $\A$ and assign the remaining joint signal strings arbitrarily to an action. 

    Under this mapping, we now give a constructive joint signaling scheme that is (1) a Pure Nash Equilibrium and (2) fully reveals the optimal action to the agent. Let $\alpha^{-1}(a)$ map to the joint signal $\bm{s} \in \zeta$ such that $\alpha(\bm{s}) = a$. Further, let $f: \Omega \rightarrow \A$ denote the unique agent-optimal action under state $\omega$, with its inverse $f^{-1}(a)$ denoting the set of states for which this action is agent-optimal. Next, consider the following joint signaling scheme: for all $\bm{s} \in \zeta$, $\pi(\bm{s}|\omega) = 1$ if $\omega \in f^{-1}(\alpha(\bm{s}))$. That is for any $\omega$, the joint signal $\bm{s} \in \zeta$ that corresponds to the optimal agent action under $\omega$, i.e. $\alpha(\bm{s}) = f(\omega)$, is sent with probability 1. The agent can thus uniquely map each joint signal realization to a set of states wherein a fixed action is optimal. In other words, this fully reveals the optimal action for the agent at any state realization $\omega$. To show this is a Nash equilibrium, observe that since all strings in $\zeta$ are hamming distance at least 2 apart, there is in fact a bijection between any $n-1$ sub-signal/sub-string within $\zeta$ and the action. Thus, each optimal action is fully specified by signals of just $n-1$ agents. So if a sender unilaterally shifts her signaling, the agent can observe that $n-1$ signals still uniquely map to states that share a common optimal action, and essentially ignore the deviating agent's signal. Thus, no change in agent belief or action occurs, leading the deviation to be non-beneficial. Since the choice of deviating agent here is arbitrary, this presented scheme is a pure Nash equilibrium. 

    However, full revelation equilibrium is not unique, which we show through an example. Consider $n=2$ senders, with $|\mathcal{A}| = 4$ actions, and $|\Omega| = 4$ states, with the following prior: $[0.15, 0.35, 0.15, 0.35]$. Sender $1$ has utility 1 whenever action 1 is taken and 0 otherwise. Similarly, sender $2$ has utility 1 whenever action 3 is taken and 0 otherwise. Note both utilities are agnostic to the state $\omega$. The receiver utility is given by the following matrix:

    \begin{equation}
        V = \begin{bmatrix}
                1 & -1 & 0 & 0\\
                -1 & 1 & 0 & 0\\
                0 & 0 & 1 & -1\\
                0 & 0 & -1 & 1\\
            \end{bmatrix}
    \end{equation}

    Under a full-revelation or optimal action revelation equilibrium, note that each sender would get utility $0.15$. Now consider the following signaling scheme using only 3 signals, which we express as a $|\Omega| \times |\Sig|$ matrix\footnote{a scheme with 3 signals can without loss of generality be extended to a scheme with 4 signals, which is what the optimal receiver action revelation scheme uses.}. 
    
    \begin{equation}
            \pi_1 = \begin{bmatrix}
                0  &  1  &  0      \\
                \tfrac{4}{7} & \tfrac{3}{7} & 0 \\
                1 & 0 & 0 \\
                1 & 0 & 0 \\
            \end{bmatrix}
            \quad \pi_2 = 
            \begin{bmatrix}
                0  &  0 & 1      \\
                0 & 0 & 1 \\
                0 & 1 & 0 \\
                \tfrac{4}{7} & \tfrac{3}{7} & 0
            \end{bmatrix} 
    \end{equation}

    Joint signal realizations $01$ and $12$ from such a scheme induces the following posterior beliefs with probability $0.3$:
    \begin{gather}
        \mu_{01} = [0, 0, 0.5, 0.5] \quad \mu_{12} = [0.5, 0.5, 0, 0]
    \end{gather}
    Note that for any tie-breaking rule that favors senders, the posteriors above give utility 0.3 to both senders. All other posteriors have dominant actions that give 0 utilities to both senders. We can use the optimization program presented in proposition \ref{prop:best-response-bi-linear} to verify this is an equilibrium. 
\end{proof}

\newpage

\subsection{Proof of Theorem~\ref{thm:PPAD-non-fixed}}
\label{app:PPAD-non-fixed}
\begin{proof}
It is known that finding Nash equilibria in 2-player games with 0/1 utilities is PPAD-hard \cite{abbott_complexity_2005, chen_settling_2009}.  We reduce this PPAD-hard problem to multi-sender persuasion, which proves that the latter problem is also PPAD-hard.  Let $\hat u_1, \hat u_2 \in \{0, 1\}^{m\times m}$ be the utility matrices of the $2$ players, where $m$ is the number of actions of each player.  We construct a multi-sender persuasion game as follows: 
\begin{itemize}
    \item There are 2 states $\Omega=\{0, 1\}$ with prior $\mu_0(0) = \mu_0(1) = 1/2$, $|\A| = 4$ actions for the receiver labeled as $\A = \{a_{00}, a_{01}, a_{10}, a_{11}\}$, and $n=2$ senders each having a signal space $\Sig = \{1, \ldots, m\}$. 
    \item The receiver's utility is $0$ regardless of actions and states, so he is indifferent among taking all actions. Suppose the receiver breaks ties in the following way: given joint signal $(s_1, s_2)$ from the 2 senders, take action
    \begin{equation*}
        \alpha(s_1, s_2) = \begin{cases}
            a_{00} & \text{if } ~ \hat u_1(s_1, s_2) = 0, \, \hat u_2(s_1, s_2)=0; \\
            a_{01} & \text{if } ~ \hat u_1(s_1, s_2) = 0, \, \hat u_2(s_1, s_2)=1; \\
            a_{10} & \text{if } ~ \hat u_1(s_1, s_2) = 1, \, \hat u_2(s_1, s_2)=0; \\
            a_{11} & \text{if } ~ \hat u_1(s_1, s_2) = 1, \, \hat u_2(s_1, s_2)=1. 
        \end{cases}
    \end{equation*}
    \item The utility of each sender $i$ is: 
    \begin{align*}
        & u_i(a, \omega=1) =
        \begin{cases}
            \hat u_i(s_1, s_2) & \text{if there exist $s_1, s_2\in\Sig$ such that $\alpha(s_1, s_2) = a$;} \\
            0 & \text{otherwise.}
        \end{cases} \\
        & u_i(a, \omega=0) = 0, ~~~~ \forall a\in\A.
    \end{align*}
    We note that the first equation is well-defined, because for any different joint signals $(s_1, s_2)$ and $(s_1', s_2')$, if they both satisfy $\alpha(s_1, s_2) = \alpha(s_1', s_2') = a$, then they define the same utility $u_i(a, \omega=1) = \hat u_i(s_1, s_2) = \hat u_i(s_1', s_2')$. 
\end{itemize}
We note that the expected utility of each sender $i$ under signaling schemes $\bpi = (\pi_1, \pi_2)$ is equal to
\begin{align*}
 \ubar_i(\bpi) & = \sum_{\omega\in\Omega} \sum_{\bs\in \Sig^n} \mu_0(\omega) \bpi(\bs | \omega) u_i(\alpha(\bs), \omega) \\
 & = \frac{1}{2}\cdot 0 + \frac{1}{2} \sum_{s_1, s_2} \pi_1(s_1 | \omega=1) \pi_2(s_2 | \omega=1) u_i(\alpha(s_1, s_2), \omega = 1) \\
 & = \frac{1}{2} \sum_{s_1, s_2} \pi_1(s_1 | \omega=1) \pi_2(s_2 | \omega=1) \hat u_i(s_1, s_2) \\
 & = \frac{1}{2} \hat u_i(x_1, x_2), 
\end{align*}
where $\hat u_i(x_1, x_2)$ is the expected utility of player $i$ in the 2-player 0/1 utility game when the two players use mixed strategies $x_1, x_2$ where player $i$ samples action $s_i \in \{1, \ldots, m\}$ with probability $x_i(s_i) = \pi_i(s_i | \omega=1)$.  If we can find an NE $(\pi_1, \pi_2)$ for the multi-sender persuasion game, then the corresponding mixed strategy profile $(x_1, x_2)$ where $x_i(s_i) = \pi_i(s_i | \omega=1)$ is an NE for the 2-player 0/1 utility game, which is PPAD-hard to find. 
\end{proof}

\section{Find Local NE via Deep Learning}\label{appx:deep}
In this section, we describe the settings of our deep learning experiments and show more results.

For each problem instance, we collect a dataset comprising 50,000 randomly selected samples and train the networks for 30 epochs using the Adam optimizer~\cite{kingma2014adam} with a learning rate of 0.01. For extra-gradient, we initiate the optimization process from a set of 300 random starting points. For each starting point, we run 20 iterations of extra-gradient updates with the Adam optimizer and a learning rate of 0.1. We then use the result with the highest social welfare to compare the performance of different algorithms.

To evaluate if a joint signaling policy profile $\bpi$ derived from the extra-gradient algorithm constitutes a local NE, we randomly select $K$ policies $\bpi'_j$ for each sender $j$ within the vicinity $\{\pi'_j\ |\ \|\pi'_j-\pi_{\phi_j}\|_\infty\le \epsilon\}$. We then verify if any of these deviations result in increased utility. The number of test samples $K$ grows linearly with the problem size:
\begin{align}
    K = \min\left\{10000, 1000 * (n-1)(|\Omega|-1)(|\mathcal S|-1)(|\mathcal A|-1)\right\}.
\end{align}

In the main text, we set the neighborhood size $\epsilon$ to 0.005. Now we apply a more stringent criterion for $\epsilon$-local NE, with $\epsilon$ set to 0.01 and the extra-gradient optimization step increased to 30 accordingly. We reassess our method under this setting against baselines in Fig.~\ref{fig:2_sender_ln}. As we can observe, the performance of our method is still significantly better than other algorithms.

\begin{figure*}
    \centering
    \includegraphics[width=\linewidth]{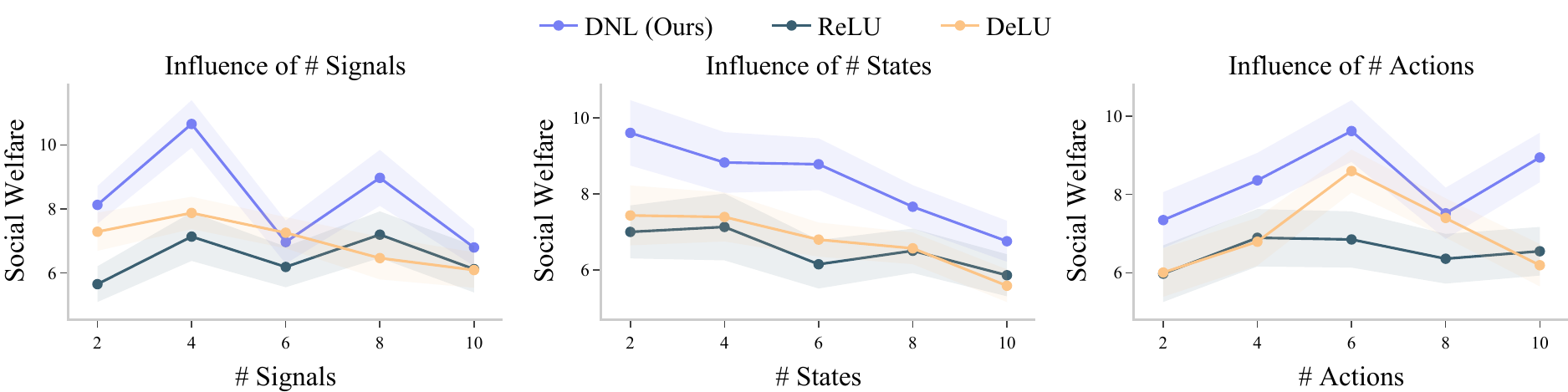}
    \includegraphics[width=\linewidth]{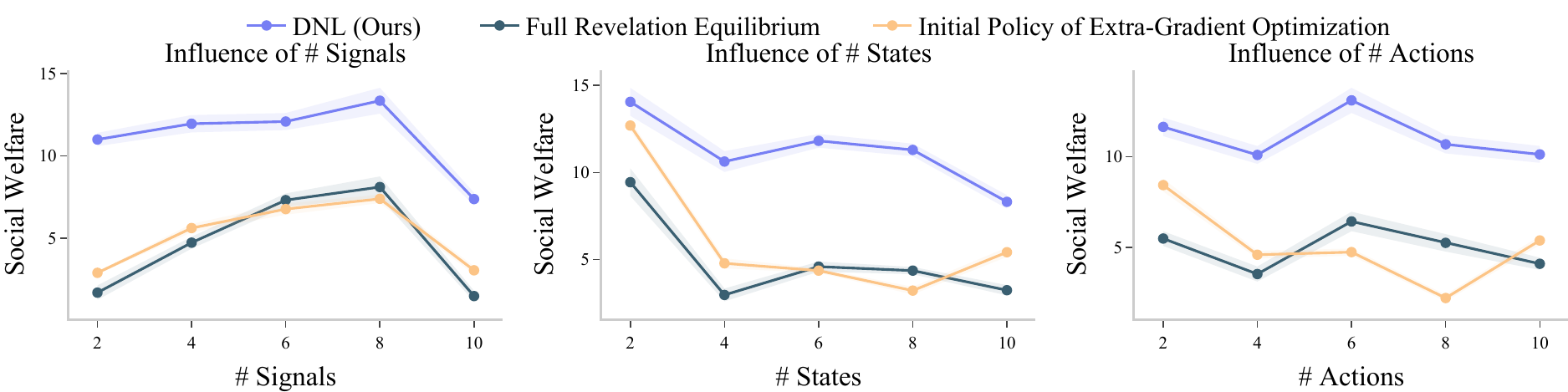}
    \vspace{-2em}
    \caption{Our method retains its advantage and achieves higher social welfare compared against baselines and full-revelation solutions when we adopt a stricter standard for the local NE check procedure (increase $\epsilon$ from 0.005 to 0.01).}
    \label{fig:2_sender_ln}
    \vspace{-1em}
\end{figure*}
\section{More Related Works}

Our work focuses on a type of Stackelberg game. Since the signaling strategy of principals could be continuous, this game has a continuous action space. Stackelberg games are employed in various real-world hierarchical scenarios, including taxation~\cite{zheng2020ai}, security~\cite{jiang2013defender, gan2020mechanism}, and business strategies~\cite{naghizadeh2014voluntary, zhang2016multi, aussel2020trilevel}. These games typically involve a leader and a follower. In such games with discrete choices, \citet{conitzer2006computing} demonstrate that linear programming can efficiently find Stackelberg equilibria using the strategy spaces of both players. For continuous decision spaces, \citet{jin2020local, fiez2020implicit} introduce and define local Stackelberg equilibria through first- and second-order conditions, with \citet{jin2020local} also showing that gradient descent-ascent methods can achieve these equilibria under certain conditions, and \citet{fiez2020implicit} providing specific updating rules that guarantee convergence.

With multiple followers~\cite{zhang2024social}, unless they operate independently \citep{calvete2007linear}, identifying Stackelberg equilibria is significantly harder and becomes NP-hard, even if followers have structured equilibria \citep{basilico2017methods}. \citet{wang2021coordinating} suggest managing an arbitrary equilibrium that the follower may reach through differentiation. Meanwhile, \citet{gerstgrasser2023oracles} develop a meta-learning framework across various follower policies, facilitating quicker adaptations for the principal. This builds on \citet{brero2022learning}, who pioneered the Stackelberg-POMDP model.

The field of multi-agent reinforcement learning~\cite{yu2022surprising,wen2022multi,qin2022multi,kuba2021trust,wang2019influence,christianos2020shared,peng2021facmac,jiang2019graph,wen2022multi,rashid2018qmix,wang2020roma, wang2021rode,wang2019learning,kang2020incorporating,li2021celebrating,wang2021off,guestrin2002coordinated, guestrin2002multiagent, bohmer2020deep,kang2022non,wang2021context,yang2022self,dong2022low,dong2023symmetry,li2023never,wu2021containerized} is expanding the application of Stackelberg concepts to more complex, realistic settings. \citet{tharakunnel2007leader} introduced a Leader-Follower Semi-Markov Decision Process for sequential learning in Stackelberg settings. \citet{cheng2017multi} developed a method known as Stackelberg Q-learning, albeit without proving convergence. Furthermore, \citet{shu2018m, shi2019learning} have empirically examined these leader-follower dynamics, focusing on the leader's use of deep learning models to predict follower actions.

\end{document}